\providecommand{\dif}{\mathop{}\!\mathrm d}
\providecommand{\Ex}{\mathbb E}
\providecommand{\Var}{\mathbb V}
\providecommand{\hide}[1]{}
\newtheorem{thm}{Theorem}%
\newtheorem{lem}[thm]{Lemma}
\newtheorem{cor}[thm]{Corollary}
\newcommand{\vct}[1]{\boldsymbol{#1}} %
\newcommand{\mat}[1]{\boldsymbol{#1}} %
\newcommand{\cst}[1]{\mathsf{#1}}  %
\newcommand{\field}[1]{\mathbb{#1}}
\newcommand{\R}{\field{R}} %
\newcommand{\Z}{\field{Z}} %
\newcommand{\T}{^{\textrm T}} %
\newcommand{\ProbOpr}[1]{\mathbb{#1}}
\newcommand{\expect}[2]{%
\ifthenelse{\equal{#2}{}}{\ProbOpr{E}_{#1}}
{\ifthenelse{\equal{#1}{}}{\ProbOpr{E}\left[#2\right]}{\ProbOpr{E}_{#1}\left[#2\right]}}} %
\newcommand{\var}[2]{%
\ifthenelse{\equal{#2}{}}{\ProbOpr{VAR}_{#1}}
{\ifthenelse{\equal{#1}{}}{\ProbOpr{VAR}\left[#2\right]}{\ProbOpr{VAR}_{#1}\left[#2\right]}}} %
\DeclareMathOperator{\argmax}{arg\,max}
\DeclareMathOperator{\argmin}{arg\,min}
\newcommand{\hS}{\hat k(\fx)}
\newcommand{\hmu}{\hat{\mu} (\fx)}
\newcommand{\id}{\mathds{1}}
\newcommand{\veps}{\vct{\epsilon}}
\newcommand{\vx}{{\vct{x}}}
\newcommand{\vy}{\vct{y}}
\newcommand{\vu}{\vct{u}}
\newcommand{\mI}{\mat{I}}
\newcommand{\cW}{\cst{W}}
\newcommand{\fx}{\mathfrak{X}}
\algnewcommand{\algorithmicgoto}{\textbf{go to}}%
\algnewcommand{\Goto}[1]{\algorithmicgoto~\ref{#1}}%
\definecolor{darkgreen}{rgb}{0,0.5,0}
\definecolor{purple}{rgb}{1,0,1}
\newcommand{\kibitz}[2]{\ifnum\Comments=1\textcolor{#1}{#2}\fi}
\newcommand{\pembo}{\textsf{PEM-BO}}
\newcommand{\plain}{\textsf{Plain}}
\newcommand{\tlsmbo}{\textsf{TLSM-BO}}
\newcommand{\rand}{\textsf{Random}}
\renewcommand{\paragraph}[1]{\textbf{{#1}}\;\;}
\title{Regret bounds for meta Bayesian optimization \\with an unknown Gaussian process prior}
\author{
  Zi Wang\thanks{Equal contribution. } \\
  MIT CSAIL\\
  \texttt{ziw@csail.mit.edu}\\
  \And Beomjoon Kim$^*$ \\
  MIT CSAIL\\
  \texttt{beomjoon@mit.edu}\\
  \And Leslie Pack Kaelbling \\
  MIT CSAIL \\
  \texttt{lpk@csail.mit.edu}
}
\begin{document}

\maketitle

\begin{abstract}
Bayesian optimization usually assumes that a Bayesian prior is given. However, the strong theoretical guarantees in Bayesian optimization are often regrettably compromised in practice because of unknown parameters in the prior. In this paper, we adopt a variant of empirical Bayes and show that,  by estimating the Gaussian process prior from offline data sampled from the same prior and constructing unbiased estimators of the posterior, variants of both GP-UCB and \emph{probability of improvement} achieve a near-zero regret bound, which decreases to a constant proportional to the observational noise as the number of offline data and the number of online evaluations increase. Empirically, we have verified our approach on challenging simulated robotic problems featuring task and motion planning.

\end{abstract}
\section{Introduction}

Bayesian optimization (BO) is a popular approach to optimizing black-box functions that are expensive to evaluate. Because of expensive evaluations, BO aims to approximately locate the function maximizer without evaluating the function too many times. This requires a good strategy to adaptively choose where to evaluate based on the current observations. 

BO adopts a Bayesian perspective and assumes that there is a prior on the function; typically, we use a Gaussian process (GP) prior. Then, the information collection strategy can rely on the prior to focus on good inputs, where the goodness is determined by an acquisition function derived from the GP prior and current observations. In past literature, it has been shown both theoretically and empirically that if the function is indeed drawn from the given prior, there are many acquisition functions that BO can use to locate the function maximizer quickly~\cite{srinivas2009gaussian,bogunovic2016truncated,wang2017maxvalue}. 

However, in reality, the prior we choose to use in BO often does not reflect the distribution from which the function is drawn. Hence, we sometimes have to estimate the hyper-parameters of a chosen form of the prior \emph{on the fly} as we collect more data~\cite{snoek2012practical}. One popular choice is to estimate the prior parameters using empirical Bayes with, e.g., the maximum likelihood estimator~\cite{rasmussen2006gaussian} . %

Despite the vast literature that shows many empirical Bayes approaches have well-founded theoretical guarantees such as consistency~\cite{petrone2014bayes} and admissibility~\cite{keener2011theoretical}, it is difficult to analyze a version of BO that uses empirical Bayes because of the circular dependencies between the estimated parameters and the data acquisition strategies. The requirement to select the prior model and estimate its parameters leads to a BO version of the chicken-and-egg dilemma: the prior model selection depends on the data collected and the data collection strategy depends on having a ``correct'' prior. Theoretically, there is little evidence that BO with unknown parameters in the prior can work well. Empirically, there is evidence showing it works well in some situations, but not others~\cite{li2016hyperband, kandasamy2018neural}, which is not surprising in light of no free lunch results~\cite{wolpert1997no,igel2005no}.

In this paper, we propose a simple yet effective strategy for learning a prior in a meta-learning setting where training data on functions from the same Gaussian process prior are available. We use a variant of empirical Bayes that gives unbiased  estimates for both the parameters in the prior and the posterior given observations of the function we wish to optimize. We analyze the regret bounds in two settings: (1) finite input space, and (2) compact input space in $\R^d$. We clarify additional assumptions on the training data and form of Gaussian processes of both settings in Sec.~\ref{sec:discrete} and Sec.~\ref{sec:continuous}. We prove theorems that show a near-zero regret bound for variants of GP-UCB~\cite{auer2002b, srinivas2009gaussian} and \emph{probability of improvement} (PI)~\cite{kushner1964, wang2017maxvalue}. The regret bound decreases to a constant proportional to the observational noise as online evaluations and offline data size increase.

From a more pragmatic perspective on Bayesian optimization for important areas such as robotics, we further explore how our approach works for problems in task and motion planning domains~\cite{kim2017learning}, and we explain why the assumptions in our theorems make sense for these problems in Sec.~\ref{sec:exp}. Indeed, assuming a common kernel, such as squared exponential or Mat\'ern, is very limiting for robotic problems that involve discontinuity and non-stationarity. However, with our approach of setting the prior and posterior parameters, BO outperforms all other methods in the task and motion planning benchmark problems.

The contributions of this paper are (1) a stand-alone BO module that takes in only a multi-task training data set as input and then actively  selects  inputs to efficiently optimize a new function and (2) analysis of the regret of this module.  The analysis  is constructive, and determines appropriate hyperparameter settings for the GP-UCB acquisition function.   Thus, we make a step forward to resolving the problem that, despite being used for hyperparameter tuning, BO algorithms themselves have hyperparameters.%

\section{Background and related work}
\textbf{BO} optimizes a black-box objective function through sequential queries. We usually assume knowledge of a Gaussian process~\citep{rasmussen2006gaussian} prior on the function, though other priors such as Bayesian neural networks and their variants~\cite{gal2016dropout,lakshminarayanan2016simple} are applicable too. Then, given possibly noisy observations and the prior distribution, we can do Bayesian posterior inference and construct acquisition functions~\cite{kushner1964, mockus1974, auer2002b} to search for the function optimizer. 

However, in practice,  we do not know the prior and it must be estimated. One of the most popular methods of prior estimation in BO is to optimize mean/kernel hyper-parameters by maximizing data-likelihood of the current observations~\cite{rasmussen2006gaussian,hennig2012}. Another popular approach is to put a prior on the mean/kernel hyper-parameters and obtain a distribution of such hyper-parameters to adapt the model given observations~\cite{hernandez2014predictive,snoek2012practical}. These methods require a predetermined form of the mean function and the kernel function. In the existing literature, mean functions are usually set to be 0 or linear and the popular kernel functions include Mat\'ern kernels, Gaussian kernels, linear kernels~\cite{rasmussen2006gaussian} or additive/product combinations of the above~\cite{duvenaud2011additive, kandasamy2015high}. 

\textbf{Meta BO} aims to improve the optimization of a given objective 
function by learning from past experiences with other similar functions. 
Meta BO can be viewed as a special case of transfer learning or multi-task learning. 
One well-studied instance of meta BO is the machine learning (ML) hyper-parameter 
tuning problem on a dataset, where, typically, the validation 
errors are the functions to optimize~\cite{feurer2015efficient}. 
The key question is how to transfer the knowledge from previous 
experiments on other datasets to the selection of ML hyper-parameters for the current dataset. 

To determine the similarity between validation error functions on different datasets, meta-features of datasets are often used~\cite{brazdil1994characterizing}. With those meta-features of datasets, one can use contextual Bayesian optimization approaches~\cite{krause2011contextual} that operate with a probabilistic functional model on both the dataset meta-features and ML hyper-parameters~\cite{bardenet2013collaborative}. Feurer et al.~\cite{Feurer}, on the other hand, used meta-features of datasets to construct a distance metric, and to sort hyper-parameters that are known to work for similar datasets according to their distances to the current dataset. The best k hyper-parameters are then used to initialize a vanilla BO algorithm. If the function meta-features are not given, one can estimate the meta-features, such as the mean and variance of all observations, using Monte Carlo methods~\cite{swersky2013multi}, maximum likelihood estimates~\cite{yogatama2014efficient} or maximum {\it a posteriori} estimates~\cite{poloczek2016warm, poloczek2017multi}. 

As an alternative to using meta-features of functions, one can construct a kernel between functions. For functions that are represented by GPs, Malkomes et al.~\cite{malkomes2016bayesian} studied a ``kernel kernel'', a kernel for kernels, such that one can use BO with a ``kernel kernel'' to select which kernel to use to model or optimize an objective function~\cite{malkomestowards} in a Bayesian way. %
However,~\cite{malkomes2016bayesian} requires an initial set of kernels to select from. Instead, Golovin et al.~\cite{Golovin2017} introduced a setting where the functions come in sequence and the posterior of the former function becomes the prior of the current function. Removing the assumption that functions come sequentially, Feurer et al.~\cite{feurer2018scalable} proposed a method to learn an additive ensemble of GPs that are known to fit all of those past ``training functions''.

Theoretically, it has been shown that meta BO methods that use information from 
similar functions may result in an improvement for the cumulative regret 
bound~\cite{krause2011contextual, shilton2017regret} or the simple 
regret bound~\cite{poloczek2017multi} with the assumptions 
that the GP priors are given. If the form of the GP kernel is given 
and the prior mean function is 0 but the kernel hyper-parameters 
are unknown, it is possible to obtain a regret bound given 
a range of these hyper-parameters~\cite{wang2014theoretical}. 
In this paper, we prove a regret bound for meta BO where the GP prior 
is unknown; this means, neither the range of GP hyper-parameters 
nor the form of the kernel or mean function is given. 

A more ambitious approach to solving meta BO is to train an end-to-end system, such as a recurrent neural network~\cite{hochreiter1997long}, that takes the history of observations as an input and outputs the next point to evaluate~\cite{chen2016learning}. Though it has been demonstrated that the method in~\cite{chen2016learning} can learn to trade-off exploration and exploitation for a short horizon, it is unclear how many ``training instances'', in the form of observations of BO performed on similar functions, are necessary to learn the optimization strategies for any given horizon of optimization. In this paper, we show both theoretically and empirically how the number of ``training instances'' in our method affects the performance of BO. 

Our methods are most similar to the BOX algorithm~\cite{kim2017learning}, which uses evaluations of previous functions to make point estimates of a mean and covariance matrix on the values over a discrete domain.   Our methods for the discrete setting (described in Sec.~\ref{sec:discrete}) directly improve on BOX by choosing the exploration parameters in GP-UCB more effectively.  This general strategy is extended to the continuous-domain setting in Sec.~\ref{sec:continuous}, in which we extend a method for learning the GP prior~\cite{platt2002learning} and the use the learned prior in GP-UCB and PI.

\textbf{Learning how to learn}, or ``meta learning'', has a long history 
in machine learning~\cite{UrgenSchmidhuber1995}. It was argued that 
learning how to learn is ``learning the prior''~\cite{Baxter1996} 
with ``point sets''~\cite{minka1997}, a set of iid sets of potentially 
non-iid points. We follow this simple intuition and present a meta BO 
approach that learns its GP prior from the data collected on 
functions that are assumed to have been drawn from the same prior distribution. 

\textbf{Empirical Bayes}~\cite{robbins1956empirical,keener2011theoretical} is a standard methodology for estimating unknown parameters of a Bayesian model. Our approach is a variant of empirical Bayes. We can view our computations as the construction of a sequence of estimators for a Bayesian model. The key difference from traditional empirical Bayes methods is that we are able to prove a regret bound for a BO method that uses estimated parameters to construct priors and posteriors. In particular, we use frequentist concentration bounds to analyze Bayesian procedures, which is one way to certify empirical Bayes in statistics~\cite{sniekers2015adaptive,efron2017bayes}.

\section{Problem formulation and notations}
Unlike the standard BO setting,  we do not assume knowledge of the mean or covariance in the GP prior, but we do assume the availability of a dataset of iid sets of potentially non-iid observations on functions sampled from the same GP prior. Then, given a new, unknown function sampled from that same distribution, we would like to find its maximizer.

More formally, we assume there exists a distribution $GP(\mu, k)$, and both the mean $\mu: \mathfrak X\rightarrow \R$ and the kernel $k: \mathfrak X\times \mathfrak X\rightarrow \R$ are unknown. Nevertheless, we are given a dataset $\bar D_N = \{[(\bar x_{ij}, \bar y_{ij})]_{j=1}^{M_i} \}_{i=1}^N$, where $\bar y_{ij}$ is drawn independently  from $\mathcal N(f_i(\bar x_{ij}), \sigma^2)$ and $f_i:\mathfrak X\rightarrow \R$ is drawn independently from $GP(\mu, k)$. The noise level $\sigma$ is unknown as well. We will specify inputs $\bar x_{ij}$ in Sec.~\ref{sec:discrete} and Sec.~\ref{sec:continuous}.

Given a new function $f$ sampled from $GP(\mu,k)$, our goal is to maximize it by sequentially querying the function and constructing $D_T = [(x_t, y_t)]_{t=1}^T$, $y_t \sim \mathcal N(f(x_t), \sigma^2)$. We study two evaluation criteria: (1) the \emph{best-sample simple regret}  $r_T = \max_{x\in \mathfrak X} f(x) - \max_{t\in[T]} f(x_t)$ which indicates the value of the best query in hindsight, and (2) the \emph{simple regret}, $R_T = \max_{x\in \mathfrak X} f(x) - f(\hat x_T^*)$ which measures how good the inferred maximizer $\hat x_T^*$ is.%

\paragraph{Notation} We use $\mathcal N(u, V)$ to denote a multivariate Gaussian distribution with mean $u$ and variance $V$ and use $\mathcal W(V, n)$ to denote a Wishart distribution with $n$ degrees of freedom and scale matrix $V$. We also use $[n]$ to denote $[1,\cdots,n], \forall n\in \Z^+$.  We overload function notation for evaluations on vectors $\vx = [x_i]_{i=1}^{n}, \vx'=[x_j]_{j=1}^{n'}$ by denoting the output column vector as $\mu(\vx) = [\mu(x_i)]_{i=1}^n$, and the output matrix as $k(\vx, \vx') = [k(x_i, x'_j)]_{i\in[n], j\in[n']}$, and we overload the kernel function $k(\vx) = k(\vx,\vx)$.

\section{Meta BO and its theoretical guarantees}
\label{sec:approach}

\begin{wrapfigure}{R}{0.55\textwidth}
    \begin{minipage}{0.55\textwidth}
 \begin{algorithm}[H]
       \caption{Meta Bayesian optimization}\label{alg:bo}
  \begin{algorithmic}[1]
  \Function{META-BO}{$\bar D_N, f$}
  \State $\hat \mu(\cdot), \hat k(\cdot,\cdot)\gets$ \textsc{Estimate}$(\bar D_N)$
  \State \Return \textsc{BO}$(f, \hat \mu, \hat k)$
  \EndFunction
  \Statex
      \Function{BO\,}{$f, \hat \mu, \hat k$}
      \State $D_0\gets \emptyset$
    \For{$t = 1,\cdots, T $} 
     \State $\hat \mu_{t-1}(\cdot), \hat k_{t-1}(\cdot)$ $\gets$ \textsc{Infer}$(D_{t-1}; \hat \mu,\hat k)$
       \State $\alpha_{t-1}(\cdot)\gets$\textsc{Acquisition\,}($\hat \mu_{t-1}, \hat k_{t-1}$)%
        \State $x_t\gets \argmax_{x\in\mathfrak X}{\alpha_{t-1}(x)}$ 
        \State $y_t\gets$ \textsc{Observe}$\left(f(x_t)\right)$
        \State $ D_t \gets D_{t-1}\cup [(x_t,y_t)]$
      \EndFor
     \State \Return $D_T$
    \EndFunction
  \end{algorithmic}
\end{algorithm}
    \end{minipage}
  \end{wrapfigure}

Instead of hand-crafting the mean $\mu$ and kernel $k$, we estimate them using the training dataset $\bar D_N$. Our approach is fairly  straightforward: in the offline phase, the training dataset $\bar D_N$ is collected and we obtain estimates of the mean function $\hat \mu$ and kernel $\hat k$; in the online phase, we treat $GP(\hat \mu, \hat k)$ as the Bayesian ``prior'' to do Bayesian optimization. We illustrate the two phases in Fig.~\ref{fig:flow}. In Alg.~\ref{alg:bo}, we depict our algorithm, assuming the dataset $\bar D_N$ has been collected. We use \textsc{Estimate}($\bar D_N$) to denote the ``prior'' estimation and \textsc{Infer}$(D_{t}; \hat\mu, \hat k)$ the  ``posterior'' inference, both of which we will introduce in Sec.~\ref{sec:discrete} and Sec.~\ref{sec:continuous}. For acquisition functions, we consider special cases of \emph{probability of improvement} (PI)~\cite{wang2017maxvalue, kushner1964} and \emph{upper confidence bound} (GP-UCB)~\cite{srinivas2009gaussian, auer2002b}:
\begin{align*}
\alpha_{t-1}^{\text{PI}}(x) = \frac{\hat\mu_{t-1}(x) - \hat f^*}{\hat k_{t-1}(x)^{\frac12}}, \;\; \;\; 
\alpha^{\text{GP-UCB}}_{t-1}(x) = \hat\mu_{t-1}(x) + \zeta_{t}\hat k_{t-1}(x)^{\frac12}. 
\end{align*}
Here, PI assumes additional information\footnote{Alternatively, an upper bound $\hat f^*$ can be estimated adaptively~\cite{wang2017maxvalue}. Note that here we are maximizing the PI acquisition function and hence $\alpha_{t-1}^{\text{PI}}(x)$ is a negative version of what was defined in~\cite{wang2017maxvalue}.} in the form of the upper bound on function value $\hat f^*\geq \max_{x\in\mathfrak X}f(x)$. For GP-UCB, we set its hyperparameter $\zeta_t$ to be 
\[\zeta_t =  \frac{\left(6(N-3 + t+2\sqrt{t\log{\frac{6}{\delta}}} + 2\log{\frac{6}{\delta}} ) /(\delta N(N-t-1))\right)^{\frac12} +(2\log(\frac{3}{\delta}))^\frac12}{(1-2 (\frac{1}{N-t}\log\frac{6}{\delta})^{\frac12})^{\frac12}},\]

  \begin{wrapfigure}{R}{0.7\textwidth}
    \begin{center}
    \includegraphics[width=.7\textwidth]{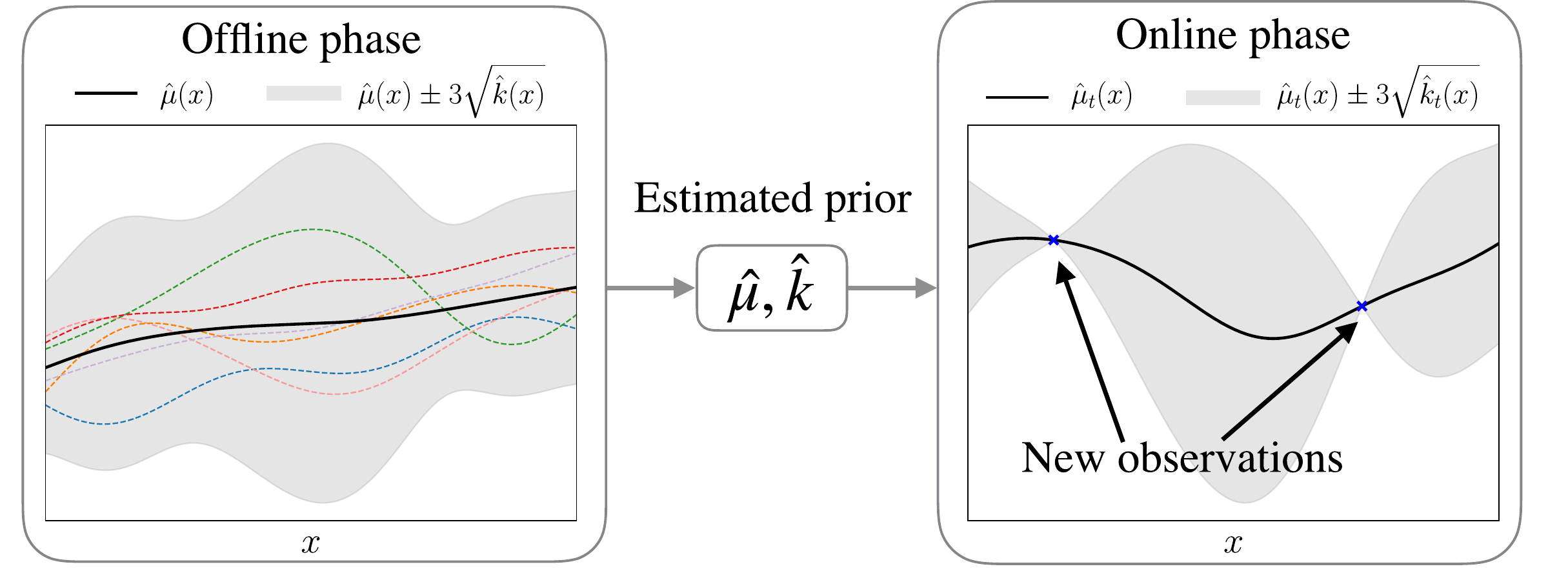}
    \end{center}
    \caption{Our approach estimates the mean function $\hat{\mu}$ and kernel $\hat k$ from functions sampled from $GP(\mu, k)$ in the offline phase. Those sampled functions are illustrated by colored lines. In the online phase, a new function $f$ sampled from the same $GP(\mu, k)$ is given and we can estimate its posterior mean function $\hat \mu_t$ and  covariance function $\hat k_t$ which will be used for Bayesian optimization.}
    \label{fig:flow}
    \end{wrapfigure}
    
 where $N$ is the size of the dataset $\bar D_N$ and $\delta\in(0,1)$. With probability $1-\delta$, the regret bound in Thm.~\ref{thm:regret1} or Thm.~\ref{thm:regret2} holds with these special cases of GP-UCB and PI. 
 Under two different settings of the search space $\mathfrak X$, finite $\fx$ and compact $\fx\in\R^d$, we show how our algorithm works in detail and why it works via regret analyses on the best-sample simple regret. Finally in Sec.~\ref{sec:inf} we show how the simple regret can be bounded. \textbf{The proofs of the analyses can be found in the appendix.}

\subsection{$\mathfrak X$ is a finite set}
\label{sec:discrete}
We first study the simplest case, where the function domain $\mathfrak X = [\bar x_j]_{j=1}^M$ is a finite set with cardinality $|\mathfrak X|=M\in \Z^+$. For convenience, we treat this set as an ordered vector of items indexed by $j\in[M]$. We collect the training dataset $\bar D_N = \{[(\bar x_{j}, \bar\delta_{ij} \bar y_{ij})]_{j=1}^M \}_{i=1}^N$, where $\bar y_{ij}$ are independently drawn from $\mathcal N(f_i(\bar x_{j}), \sigma^2)$, $f_i$ are drawn independently from $GP(\mu, k)$ and $\bar \delta_{ij}\in\{0,1\}$. %
Because the training data can be collected offline by querying the functions $\{f_i\}_{i=1}^N$ in parallel, it is not unreasonable to assume that such a dataset $\bar D_N$ is available. If $\bar \delta_{ij}=0,$ it means the $(i,j)$-th entry of the dataset $\bar D_N$ is missing, perhaps as a result of a failed experiment. 

\paragraph{Estimating GP parameters}
\label{ssec:nomissing}
If $\bar\delta_{ij} < 1$, we have missing entries in the observation matrix $\bar Y = [\bar \delta_{ij}\bar y_{ij}]_{i\in[N], j\in[M]} \in \R^{N\times M}$. Under additional assumptions specified in~\cite{candes2009exact}, including that $\text{rank}(Y) = r$ and the total number of valid observations $\sum_{i=1}^N\sum_{j=1}^M \bar \delta_{ij} \geq O(r N^{\frac65}\log N)$, we can use matrix completion~\cite{candes2009exact} to fully recover the matrix $\bar Y$ with high probability. In the following, we proceed by considering completed observations only. 

Let the completed observation matrix be $Y = [\bar y_{ij}]_{i\in[N], j\in[M]}$. We use an unbiased sample mean and covariance estimator for $\mu$ and $k$; that is, $\hat \mu(\mathfrak X) = \frac{1}{N} Y\T 1_{N}$ and $\hat k(\mathfrak X) = \frac{1}{N-1}(Y -1_N \hat \mu(\mathfrak X)\T)\T (Y- 1_N\hat \mu(\mathfrak X)\T)$, where $1_N$ is an $N$ by 1 vector of ones. It is well known that $\hat \mu$ and $\hat k$ are independent and $\hat \mu(\fx) \sim \mathcal N(\mu(\fx), \frac{1}{N}(k(\fx)+\sigma^2\mI)), \;\;\hat k(\fx) \sim \mathcal W(\frac{1}{N-1}(k(\fx)+\sigma^2\mI), N-1)$~\cite{anderson1958introduction}.

\paragraph{Constructing estimators of the posterior} 
\label{sec:post1}
Given noisy observations $D_t=\{(x_\tau, y_\tau)\}_{\tau=1}^t$, we can do Bayesian posterior inference to obtain $f\sim GP(\mu_t, k_t)$. By the GP assumption, we get 
\begin{align}
\mu_{t}(x) &= \mu(x) + k(x,\vx_t)(k(\vx_t)+\sigma^2\mI)^{-1}(\vy_t - \mu(\vx_t)), \;\;\forall x\in\mathfrak X\label{eq:post1}\\
k_{t}(x,x') &= k(x,x') - k(x, \vx_t)(k(\vx_t)+\sigma^2\mI)^{-1} k(\vx_t, x'), \;\;\forall x, x'\in\mathfrak X, \label{eq:post2}
\end{align}
  
 where $\vy_t=[y_\tau]_{\tau=1}^T$, $\vx_t=[x_{\tau}]_{\tau=1}^T$~\citep{rasmussen2006gaussian}. %
The problem is that neither the posterior mean $\mu_{t}$ nor the covariance $k_t$ are computable because the Bayesian prior mean $\mu$, the kernel $k$ and the noise parameter $\sigma$ are all unknown. How to estimate $\mu_t$ and $k_t$ without knowing those prior parameters?

We introduce the following unbiased estimators for the posterior mean and covariance,
\begin{align}
\hat \mu_t(x) &= \hat\mu(x) + \hat k(x, \vx_t){\hat k(\vx_t, \vx_t)}^{-1}(\vy_t - \hat \mu(\vx_t)), \;\;\forall x\in\mathfrak X , \label{eqn:mu}\\
\hat k_t(x, x') & = \frac{N-1}{N-t-1}\left(\hat k(x,x') - \hat k(x, \vx_t){\hat k(\vx_t, \vx_t)}^{-1} \hat k(\vx_t, x')\right), \;\;\forall x, x'\in\mathfrak X.\label{eqn:sigma}
\end{align}
Notice that unlike Eq.~\eqref{eq:post1} and Eq.~\eqref{eq:post2}, our estimators $\hat\mu_{t}$ and $\hat k_t$ do not depend on any unknown values or an additional estimate of the noise parameter $\sigma$.  
In Lemma~\ref{lem:mu}, we show that our estimators are indeed unbiased and we derive their concentration bounds. %

\begin{lem}\label{lem:mu}
Pick probability $\delta\in(0,1)$. For any nonnegative integer $t< T$, conditioned on the observations $D_t = \{(x_\tau, y_\tau)\}_{\tau=1}^t$, the estimators in Eq.~\eqref{eqn:mu} and Eq.~\eqref{eqn:sigma} satisfy $\Ex[\hat \mu_t(\fx)] = \mu_t(\fx),\Ex[\hat k_t(\fx)] = k_t(\fx) + \sigma^2\mI. $
Moreover, if the size of the training dataset satisfies $N\geq T + 2$, then for any input $x\in \fx$, with probability at least $1-\delta$, both
\begin{align*}
|\hat \mu_t(x) - \mu_{t}(x)|^2 <  a_t(k_t(x)+\sigma^2) \;\text{ and } \;1-2\sqrt{b_t}< \hat k_t (x)/(k_t(x) + \sigma^2) < 1+ 2\sqrt{b_t} + 2b_t
\end{align*}
hold, where $a_t = \frac{4\left(N-2 + t+2\sqrt{t\log{(4/\delta)}} + 2\log{(4/\delta)} \right) }{\delta N(N-t-2)}$ and $b_t = \frac{1}{N-t-1}\log\frac{4}{\delta}$. 
\end{lem}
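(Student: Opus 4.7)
The matrix $Y$ has iid rows from $\mathcal{N}(\mu(\fx), k(\fx)+\sigma^2\mI)$, so by the classical multivariate-Gaussian sampling theory $\hat\mu(\fx)$ and $\hat k(\fx)$ are independent, with $\hat\mu(\fx)\sim\mathcal{N}(\mu(\fx),\tfrac{1}{N}(k(\fx)+\sigma^2\mI))$ and $(N-1)\hat k(\fx)\sim\mathcal{W}(k(\fx)+\sigma^2\mI,N-1)$. Partitioning the Wishart along the blocks indexed by $\vx_t$ and $x$ supplies the two further ingredients I will use. (i) The sample partial-regression row $\hat a:=\hat k(x,\vx_t)\hat k(\vx_t)^{-1}$ is, conditional on $\hat k(\vx_t)$, Gaussian with mean $A:=k(x,\vx_t)(k(\vx_t)+\sigma^2\mI)^{-1}$ and covariance $\tfrac{k_t(x)+\sigma^2}{N-1}\hat k(\vx_t)^{-1}$. (ii) The Schur complement $(N-1)\bigl(\hat k(x)-\hat k(x,\vx_t)\hat k(\vx_t)^{-1}\hat k(\vx_t,x)\bigr)$ is $\mathcal{W}(k_t(x)+\sigma^2,N-t-1)$ and independent of $\hat k(\vx_t)$. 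From (i), iterated expectation yields $\Ex[\hat a]=A$, and then independence of $\hat a$ and $\hat\mu$ gives $\Ex[\hat\mu_t(x)\mid D_t]=\mu(x)+A(\vy_t-\mu(\vx_t))=\mu_t(x)$. From (ii), the Schur Wishart has mean $(N-t-1)(k_t(x)+\sigma^2)$, and the prefactor $\tfrac{N-1}{N-t-1}\cdot\tfrac{1}{N-1}$ in the definition of $\hat k_t$ cancels to give $\Ex[\hat k_t(x)]=k_t(x)+\sigma^2$; the matrix statement is blockwise.

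\textbf{Tail of $\hat k_t$.} Specializing (ii) to the scalar test point $x$, I get $(N-t-1)\hat k_t(x)/(k_t(x)+\sigma^2)\sim\chi^2_{N-t-1}$. The two-sided bound is then a direct application of the Laurent--Massart chi-squared tail inequalities at parameter $\log(4/\delta)$, each tail contributing probability $\delta/4$.

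\textbf{Tail of $\hat\mu_t$.} Setting $\eta:=\hat\mu-\mu$, $R:=\hat a-A$, and $\vz:=\vy_t-\mu(\vx_t)$, a short algebraic expansion gives
\[
\hat\mu_t(x)-\mu_t(x)\;=\;\bigl[\eta(x)-\hat a\,\eta(\vx_t)\bigr]\;+\;R\,\vz.
\]
The three ingredients are mutually independent ($\hat\mu\perp\hat k$ by Anderson; training data $\perp$ test data), and under the GP prior on the new function $\vz\sim\mathcal{N}(0,k(\vx_t)+\sigma^2\mI)$. The bracket inside $a_t$ exactly matches a chi-squared tail bound: $\vz^T(k(\vx_t)+\sigma^2\mI)^{-1}\vz\sim\chi^2_t$, and Laurent--Massart at parameter $\log(4/\delta)$ yields $\vz^T(k(\vx_t)+\sigma^2\mI)^{-1}\vz\le t+2\sqrt{t\log(4/\delta)}+2\log(4/\delta)$ with probability at least $1-\delta/4$. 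Conditioning on this event and computing the second moment of $\hat\mu_t(x)-\mu_t(x)$ by iterated expectation, using (i) for the Gaussian law of $R$ given $\hat k(\vx_t)$ and the inverse-Wishart mean $\Ex[\hat k(\vx_t)^{-1}]=\tfrac{N-1}{N-t-2}(k(\vx_t)+\sigma^2\mI)^{-1}$ (whose finiteness forces the hypothesis $N\ge T+2$), and then invoking Markov's inequality with the remaining $\delta/4$ budget, delivers the claimed $|\hat\mu_t(x)-\mu_t(x)|^2<a_t(k_t(x)+\sigma^2)$. A final union bound over the three events (two for $\hat k_t$, one for $\hat\mu_t$) yields total probability at least $1-\delta$.

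\textbf{Main obstacle.} The delicate step is the $\hat\mu_t$ tail: $R$ and $\vz$ couple multiplicatively in the term $R\vz$, so applying Markov directly to the unconditional second moment of $\hat\mu_t(x)-\mu_t(x)$ forces the degrees-of-freedom factor from $\vz^T(k(\vx_t)+\sigma^2\mI)^{-1}\vz$ to enter multiplicatively rather than additively, producing a bracket larger than $(N-2)+t+\cdots$. Achieving the tight form stated in $a_t$ requires the ordering above---first conditioning on the high-probability $\chi^2_t$ event to freeze the $\vz$-dependence, and only then exploiting the Gaussian-in-$R$ law from (i) to integrate out the training-data randomness---so that the $t+2\sqrt{t\log(4/\delta)}+2\log(4/\delta)$ summand appears as an additive correction to the $(N-2)$ term coming from the $\eta$-variance of $T_1$.
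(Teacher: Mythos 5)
Your proposal is correct and follows essentially the same route as the paper's proof (Lemma~\ref{lem:mu1} in the appendix): the same Wishart-partition facts (Schur-complement Wishart for $\hat k_t$, the conditional Gaussian law of $\hat k(x,\vx_t)\hat k(\vx_t)^{-1}$, the inverse-Wishart mean), the same Laurent--Massart bound on the $\chi^2_t$ quadratic form $K=(\vy_t-\mu(\vx_t))\T(k(\vx_t)+\sigma^2\mI)^{-1}(\vy_t-\mu(\vx_t))$, and Chebyshev/Markov on the second moment of $\hat\mu_t(x)-\mu_t(x)$ followed by a union bound, with your decomposition $[\eta(x)-\hat a\,\eta(\vx_t)]+R\vz$ being the paper's law-of-total-variance computation made explicit (note only that $T_1$ and $R\vz$ share the randomness of $\hat k$, so you must argue the cross term vanishes by conditioning on $\hat k$ rather than by full independence). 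One caveat on constants: carrying your computation through gives $\Ex[(R\vz)^2\mid D_t]=K(k_t(x)+\sigma^2)/(N-t-2)$ against $\Ex[T_1^2\mid D_t]=(N-2)(k_t(x)+\sigma^2)/(N(N-t-2))$, so the variance numerator comes out as $N-2+NK$ rather than the $N-2+K$ embedded in $a_t$; the paper's own evaluation of $\Ex[\bar S]$ appears to contain the same slip, so your claim that the stated $a_t$ is recovered exactly by the ordering you describe should be double-checked.
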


\paragraph{Regret bounds}
We show a near-zero upper bound on the best-sample simple regret of meta BO with GP-UCB and PI that uses specific parameter settings in Thm.~\ref{thm:regret1}. In particular, for both GP-UCB and PI, the regret bound converges to a residual whose scale depends on the noise level $\sigma$ in the observations.
\begin{thm} \label{thm:regret1}
Assume there exists constant $c \geq \max_{x\in\fx}k(x)$ and a training dataset is available whose size is $N\geq 4\log\frac{6}{\delta} + T + 2$. Then, with probability at least $1-\delta$, the best-sample simple regret in $T$ iterations of meta BO with special cases of either GP-UCB or PI satisfies
\begin{align*}
& r^{\text{UCB}}_T  < \eta^{\text{UCB}}_T(N)\lambda_T, \;\;r^{\text{PI}}_T  < \eta^{\text{PI}}_T(N) \lambda_T,\;\; \lambda_T^2 = O(\rho_T /T) + \sigma^2,
\end{align*}
where $\eta_T^{UCB} (N)= (m+ C_1)(\frac{\sqrt{1+m}}{\sqrt{1-m}} +1)$, $\eta_T^{\text{PI}}(N) =  (m+ C_2)(\frac{\sqrt{1+m}}{\sqrt{1-m}}+1) + C_3$, $m = O(\sqrt{\frac{1}{N-T}})$, $C_1, C_2, C_3> 0$ are constants, and $\rho_T = \underset{A\in\fx, |A|=T}{\max}\frac12\log|\mI+\sigma^{-2}k(A)|$.

\end{thm}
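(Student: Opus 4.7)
The plan is to combine Lemma~\ref{lem:mu} with the standard GP-UCB/PI regret template, carefully translated through the identity $\Ex[\hat k_t(\fx)] = k_t(\fx) + \sigma^2\mI$. Condition on $D_{t-1}$ at each round $t$ and apply a union bound over three high-probability events: (i) the Gaussian tail $|f(x)-\mu_{t-1}(x)|\leq \sqrt{2\log(3/\delta)}\,k_{t-1}(x)^{1/2}$, evaluated at both $x^*$ and the chosen $x_t$; (ii) the mean concentration $|\hat\mu_{t-1}(x)-\mu_{t-1}(x)|\leq \sqrt{a_{t-1}}\,(k_{t-1}(x)+\sigma^2)^{1/2}$ from Lemma~\ref{lem:mu}; and (iii) the two-sided covariance bound $1-2\sqrt{b_{t-1}}\leq \hat k_{t-1}(x)/(k_{t-1}(x)+\sigma^2)\leq 1+2\sqrt{b_{t-1}}+2b_{t-1}$. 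The assumption $N\geq 4\log(6/\delta)+T+2$ is exactly what guarantees $1-2\sqrt{b_{t-1}}>0$ up to $t=T$, so the denominator in $\zeta_t$ stays positive and (iii) is usable in both directions.

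For GP-UCB, I would first establish the validity inequality $f(x^*)\leq \hat\mu_{t-1}(x^*)+\zeta_t\,\hat k_{t-1}(x^*)^{1/2}$. Writing $f(x^*)-\hat\mu_{t-1}(x^*) = (f(x^*)-\mu_{t-1}(x^*))+(\mu_{t-1}(x^*)-\hat\mu_{t-1}(x^*))$, each summand is bounded by a multiple of $(k_{t-1}(x^*)+\sigma^2)^{1/2}$ via (i) and (ii); converting that to $\hat k_{t-1}(x^*)^{1/2}$ via the lower half of (iii) reproduces exactly the closed form for $\zeta_t$ in the theorem statement (after matching the $\delta/3$ splits of the union bound to the ``$6/\delta$'' terms appearing inside $a_{t-1}$ and $b_{t-1}$). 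Next, the algorithm's optimality gives $f(x^*)\leq \hat\mu_{t-1}(x_t)+\zeta_t\,\hat k_{t-1}(x_t)^{1/2}$; translating both terms back to the true posterior using (ii), the upper half of (iii), and another invocation of (i) at $x_t$ yields $f(x^*)-f(x_t)\leq \eta^{\text{UCB}}_T(N)\,(k_{t-1}(x_t)+\sigma^2)^{1/2}$, where the corrections $\sqrt{a_{t-1}},\sqrt{b_{t-1}}$ both collapse into the single scalar $m=O(\sqrt{1/(N-T)})$ featured in the stated $\eta^{\text{UCB}}_T(N)$. Using $r_T^{\text{UCB}}\leq \frac{1}{T}\sum_{t=1}^T(f(x^*)-f(x_t))$ (since $\min\leq$ mean), concavity of $\sqrt{\cdot}$, and the information-gain inequality $\sum_{t=1}^T k_{t-1}(x_t)=O(\rho_T)$---applicable because the uniform bound $\max_x k(x)\leq c$ caps each $k_{t-1}(x_t)$---then produces $r_T^{\text{UCB}}\leq \eta^{\text{UCB}}_T(N)\lambda_T$ with $\lambda_T^2=O(\rho_T/T)+\sigma^2$.

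The PI case runs in parallel, additionally leveraging $\hat f^*\geq f(x^*)$: the acquisition's optimality gives $\hat f^*-\hat\mu_{t-1}(x_t)\leq (\hat k_{t-1}(x_t)/\hat k_{t-1}(x^*))^{1/2}\,(\hat f^*-\hat\mu_{t-1}(x^*))$. Bounding $\hat f^*-\hat\mu_{t-1}(x^*)$ by a multiple of $\hat k_{t-1}(x^*)^{1/2}$ through the same concentration chain that produced the UCB validity inequality cancels the $\hat k_{t-1}(x^*)^{1/2}$ denominator, leaving the familiar form $f(x^*)-f(x_t)\leq \eta^{\text{PI}}_T(N)\,(k_{t-1}(x_t)+\sigma^2)^{1/2}$, with the additive constant $C_3$ absorbing the (deterministic, finite) slack $\hat f^*-f(x^*)$. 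The identical averaging-plus-Cauchy--Schwarz-plus-information-gain step then yields the stated $r_T^{\text{PI}}\leq \eta^{\text{PI}}_T(N)\lambda_T$.

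The main obstacle is the bookkeeping, not the conceptual structure: (a) matching constants between Lemma~\ref{lem:mu} with rescaled $\delta$ and the specific closed-form $\zeta_t$ written in the theorem; (b) collapsing the per-round corrections $\sqrt{a_t},\sqrt{b_t}$ into the single $m=O(\sqrt{1/(N-T)})$ hidden inside $\eta^{\text{UCB}}_T(N)$ and $\eta^{\text{PI}}_T(N)$ while keeping the factor $(\sqrt{1+m}/\sqrt{1-m}+1)$ sharp; and (c) the probability accounting, which---because the best-sample simple regret only needs the per-round inequality to hold at \emph{some} $t$ reaching the minimum---is milder than the simultaneous accounting required for cumulative regret, and permits $\zeta_t$ to avoid an explicit $\log T$ factor. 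The residual $\sigma^2$ term in $\lambda_T^2$ is unavoidable: even with a perfectly known prior, the posterior variance of a noisy observation cannot drop below $\sigma^2$.
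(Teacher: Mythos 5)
Your overall architecture matches the paper's: the same three concentration ingredients (Gaussian tail, the mean bound and two-sided covariance bound from Lemma~\ref{lem:mu} with rescaled $\delta$), the same UCB validity/optimality chain producing the stated $\zeta_t$, the same PI ratio argument, and the same use of Bernoulli's inequality plus the information-gain identity to convert $\frac1T\sum_t k_{t-1}(x_t)$ into $O(\rho_T/T)$. The PI slack $\hat f^*-\mu_{\tau-1}(x_*)$ ends up inside $C_2$ rather than $C_3$ in the paper's constants, but that is bookkeeping.

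There is, however, one concrete gap in your reduction from the best-sample simple regret to the information gain. You bound $r_T \leq \frac1T\sum_{t=1}^T\bigl(f(x^*)-f(x_t)\bigr)$ and then control every summand, which requires the concentration events (i)--(iii) to hold \emph{simultaneously at all $T$ rounds}. That forces a union bound over $t\in[T]$, which replaces $\delta$ by $\delta/T$ inside $a_t$, $b_t$ and the Gaussian tail; since $a_t$ scales like $1/\delta$ (it comes from Chebyshev, not an exponential tail), this inflates $m$ from $O(\sqrt{1/(N-T)})$ to roughly $O(\sqrt{T/(N-T)})$ and inserts $\log T$ into $\zeta_t$ --- neither of which appears in the theorem. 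You actually note in your point (c) that only ``some $t$'' is needed, but your derivation does not implement that observation. The paper's fix is to set $\tau=\argmin_{t\in[T]}k_{t-1}(x_t)$, use $r_T\leq f^*-f(x_\tau)$ so that the probabilistic events are only invoked at the single round $\tau$ (at the two points $x_*$ and $x_\tau$), and then apply the \emph{deterministic} inequality $k_{\tau-1}(x_\tau)\leq\frac1T\sum_{t=1}^T k_{t-1}(x_t)\leq \frac{2c\rho_T}{T\log(1+c\sigma^{-2})}$. In short: average the posterior variances, not the instantaneous regrets. With that substitution your argument goes through and coincides with the paper's proof.
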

This bound reflects how training instances $N$ and BO iterations $T$ affect the best-sample simple regret. The coefficients $\eta_T^{\text{UCB}}$ and $\eta_T^{\text{PI}}$ both converge to constants (more details in the appendix), with components converging at rate $O(1/(N-T)^{\frac12})$. The convergence of the shared term $\lambda_T$ depends on $\rho_T$, the maximum information gain between function $f$ and up to $T$ observations $\vy_T$. If, for example, each input has dimension $\R^d$ and $k(x,x')=x\T x'$, then $\rho_T = O(d\log(T))$~\cite{srinivas2009gaussian}, in which case $\lambda_T$ converges to the observational noise level $\sigma$ at rate $O(\sqrt{\frac{d\log(T)}{T}})$. Together, the bounds indicate that the best-sample simple regret of both our settings of GP-UCB and PI decreases to a constant proportional to noise level $\sigma$. %

\subsection{$\mathfrak X \subset \R^d$ is compact}
\label{sec:continuous}
For compact $\fx \subset \R^d$, we consider the primal form of GPs. We further assume that there exist basis functions $\Phi = [\phi_s]_{s=1}^K: \fx \rightarrow \R^K$, mean parameter $\vu\in \R^K$ and covariance parameter $\Sigma\in \R^{K\times K}$ such that $\mu(x) = \Phi(x)\T \vu$ and $k(x, x') = \Phi(x)\T\Sigma\Phi(x') $. Notice that $\Phi(x)\in\R^K$ is a column vector and $\Phi(\vx_t)\in \R^{K\times t}$ for any $\vx_t=[x_\tau]_{\tau=1}^t$. This means, for any input $x\in \fx$, the observation satisfies $y \sim  \mathcal N(f(x), \sigma^2)$, where $f= \Phi(x)\T W \sim GP(\mu, k)$ and the linear operator $W\sim\mathcal N(\vu, \Sigma)$~\cite{neal96}. In the following analyses, we assume the basis functions $\Phi$ are given. %

We assume that a training dataset $\bar D_N = \{[(\bar x_{j}, \bar y_{ij})]_{j=1}^{M} \}_{i=1}^N$ is given, where $\bar x_{j}\in \fx\subset \R^d$, $y_{ij}$ are independently drawn from $\mathcal N(f_i(\bar x_{j}), \sigma^2)$, $f_i$ are drawn independently from $GP(\mu, k)$ and $M\geq K$.

\paragraph{Estimating GP parameters}
Because the basis functions $\Phi$ are given, learning the mean function $\mu$ and the kernel $k$ in the GP is equivalent to learning the mean parameter $\vu$ and the covariance parameter $\Sigma$ that parameterize distribution of the linear operator $W$. Notice that $\forall i\in [N]$, 
$$\bar \vy_i = \Phi(\bar \vx)\T W_i +\bar\veps_i\sim \mathcal N(\Phi(\bar \vx)\T \vu, \Phi(\bar \vx)\T\Sigma\Phi(\bar \vx) +\sigma^2\mI),$$ where $\bar \vy_i = [\bar y_{ij}]^{M}_{j=1}\in\R^{M}$, $\bar \vx = [\bar x_{j}]^{M}_{j=1}\in \R^{M\times d}$ and $\bar\veps_i = [\bar \epsilon_{ij}]_{j=1}^{M}\in\R^{M}$.  If the matrix $\Phi(\bar \vx)\in \R^{K\times M}$ has linearly independent rows, one unbiased estimator of $W_i$ is
$$\hat W_i= (\Phi(\bar \vx)\T)^+ \bar \vy_i = (\Phi(\bar 
\vx)\Phi(\bar \vx)\T)^{-1}\Phi(\bar \vx) \bar \vy_i \sim \mathcal N(\vu, \Sigma + \sigma^2(\Phi(\bar\vx)\Phi(\bar\vx)\T)^{-1}).$$ 
Let $\cW = [\hat W_i]_{i=1}^N\in \R^{N\times K}$. We use the estimator $\hat \vu = \frac{1}{N}\cW\T1_{N}$ and $\hat \Sigma = \frac{1}{N-1} (\cW - 1_N\hat{\vu})\T (\cW - 1_N\hat \vu )$ to the estimate GP parameters. Again, $\hat \vu$ and $\hat \Sigma$ are independent and \\
$\hat \vu \sim \mathcal N\left(\vu, \frac{1}{N}(\Sigma +  \sigma^2(\Phi(\bar\vx)\Phi(\bar\vx)\T)^{-1})\right), 
\hat{\Sigma} \sim \mathcal W\left(\frac{1}{N-1}\left(\Sigma + \sigma^2(\Phi(\bar\vx)\Phi(\bar\vx)\T)^{-1}\right), N-1\right)$~\cite{anderson1958introduction}.

\paragraph{Constructing estimators of the posterior} We assume the total number of evaluations $T < K$. 
Given noisy observations $D_t = \{(x_\tau, y_\tau)\}_{\tau=1}^t$, we have $\mu_t(x) = \Phi(x)\T \vu_t$ and $k_t(x,x') = \Phi(x)\T\Sigma_t\Phi(x')$, where the posterior of $W\sim \mathcal N(\vu_t,\Sigma_t)$ satisfies
\begin{align}
\vu_t &= \vu + \Sigma\Phi(\vx_t)(\Phi(\vx_t)\T \Sigma\Phi(\vx_t) + \sigma^2 \mI)^{-1}(\vy_t - \Phi(\vx_t)\T \vu),\\ 
\Sigma_t &= \Sigma - \Sigma\Phi(\vx_t) (\Phi(\vx_t)\T \Sigma \Phi(\vx_t) + \sigma^2 \mI)^{-1} \Phi(\vx_t)\T \Sigma.
\end{align}
Similar to the strategy used in Sec.~\ref{sec:post1}, we construct an estimator for the posterior of $W$  to be
\begin{align}
\hat \vu_t &= \hat \vu + \hat \Sigma \Phi(\vx_t)(\Phi(\vx_t)\T \hat \Sigma \Phi(\vx_t))^{-1}(\vy_t- \Phi(\vx_t)\T \vu), \\
\hat\Sigma_t &= \frac{N-1}{N-t-1}\left(\hat \Sigma - \hat\Sigma\Phi(\vx_t) (\Phi(\vx_t)\T \hat\Sigma \Phi(\vx_t))^{-1} \Phi(\vx_t)\T \hat\Sigma\right).
\end{align}
We can compute the conditional mean and variance of the observation on $x\in\fx$ to be $\hat \mu_t(x) = \Phi(x)\T\hat \vu_t$ and $\hat k_t(x) = \Phi(x)\T\hat \Sigma_t \Phi(x)$. For convenience of notation, we define $\bar\sigma^2(x) = \sigma^2\Phi(x)\T(\Phi(\bar\vx)\Phi(\bar\vx)\T)^{-1}\Phi(x)$. %
\begin{lem}\label{lem:mu2}
Pick probability $\delta\in(0,1)$. Assume $\Phi(\bar \vx)$ has full row rank. For any nonnegative integer $t< T$, $T\leq K$, conditioned on the observations $D_t = \{(x_\tau, y_\tau)\}_{\tau=1}^t$, $\Ex[\hat \mu_t(x)] = \mu_t(x),\Ex[\hat k_t(x)] = k_t(x) + \bar\sigma^2(x)$. 
Moreover, if the size of the training dataset satisfies $N\geq T + 2$, then for any input $x\in \fx$, with probability at least $1-\delta$, both
\begin{align*}
|\hat \mu_t(x) - \mu_{t}(x)|^2 <  a_t(k_t(x)+\bar\sigma^2(x)) \;\text{and} \;1-2\sqrt{b_t}< \hat k_t (x)/(k_t(x) + \bar\sigma^2(x)) < 1+ 2\sqrt{b_t} + 2b_t
\end{align*}
hold, where $a_t = \frac{4\left(N-2 + t+2\sqrt{t\log{(4/\delta)}} + 2\log{(4/\delta)} \right) }{\delta N(N-t-2)}$ and $b_t = \frac{1}{N-t-1}\log\frac{4}{\delta}$.
\end{lem}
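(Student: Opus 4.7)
The plan is to reduce this lemma directly to Lemma~\ref{lem:mu}. The central observation is that the estimators $\hat\vu_t, \hat\Sigma_t$ are built from $\hat W_i = (\Phi(\bar\vx)\Phi(\bar\vx)^T)^{-1}\Phi(\bar\vx)\bar\vy_i$, which are iid draws from $\mathcal N(\vu, \Sigma + \sigma^2(\Phi(\bar\vx)\Phi(\bar\vx)^T)^{-1})$. Writing $\Sigma'' := \Sigma + \sigma^2(\Phi(\bar\vx)\Phi(\bar\vx)^T)^{-1}$, standard sample mean/covariance theory (as cited in the body for the discrete case) yields $\hat\vu \sim \mathcal N(\vu, \Sigma''/N)$ independently of $\hat\Sigma \sim \mathcal W(\Sigma''/(N-1), N-1)$. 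Thus the pair $(\hat\vu, \hat\Sigma)$ has exactly the same joint distribution as $(\hat\mu(\fx), \hat k(\fx))$ in the finite case, with the matrix $\sigma^2 \mI$ replaced by $\sigma^2(\Phi(\bar\vx)\Phi(\bar\vx)^T)^{-1}$.

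For the unbiasedness claims, I would first verify that $\hat\vu_t$ and $\hat\Sigma_t$, as functions of $(\hat\vu,\hat\Sigma)$, have the same algebraic form as Eq.~\eqref{eqn:mu} and Eq.~\eqref{eqn:sigma} after substituting $\hat\mu(\vx_t) \mapsto \Phi(\vx_t)^T\hat\vu$ and $\hat k(\vx_t,\vx_t)\mapsto \Phi(\vx_t)^T\hat\Sigma\Phi(\vx_t)$. Applying the tower property, conditioning on $\hat\Sigma$ first gives $\mathbb E[\hat\vu_t \mid \hat\Sigma] = \vu + \hat\Sigma\Phi(\vx_t)(\Phi(\vx_t)^T\hat\Sigma\Phi(\vx_t))^{-1}(\vy_t - \Phi(\vx_t)^T\vu)$; taking expectation over $\hat\Sigma$ and then left-multiplying by $\Phi(x)^T$ identifies this with $\mu_t(x)$ after using $\mathbb E[\hat\Sigma] = \Sigma''$ together with the standard posterior identity for linear Gaussian models. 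The calculation for $\mathbb E[\hat k_t(x)]$ is analogous and, using the Wishart mean and the Schur-complement identity for the posterior covariance, lands on $k_t(x) + \bar\sigma^2(x)$ (the extra $\bar\sigma^2(x)$ comes from $\Phi(x)^T$ projecting the noise portion $\sigma^2(\Phi(\bar\vx)\Phi(\bar\vx)^T)^{-1}$ of $\Sigma''$).

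For the concentration bounds, I would argue that for any fixed $x\in\fx$, the scalar quantities $\Phi(x)^T\hat\vu$ and $\Phi(x)^T\hat\Sigma\Phi(x)$ have the same distribution as $\hat\mu(x)$ and $\hat k(x)$ would in a discrete setting where $k(x) + \sigma^2$ is replaced by $\Phi(x)^T\Sigma''\Phi(x) = k(x) + \bar\sigma^2(x)$. More precisely, $\Phi(x)^T\hat\vu$ is Gaussian with variance $\frac{1}{N}(k(x)+\bar\sigma^2(x))$ and $\Phi(x)^T\hat\Sigma\Phi(x)$ is a scaled $\chi^2_{N-1}$ with scale $(k(x)+\bar\sigma^2(x))/(N-1)$, with the same conditional structure after observations as in the discrete proof. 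Consequently, the inequalities derived for $\hat\mu_t(x)-\mu_t(x)$ and $\hat k_t(x)/(k_t(x)+\sigma^2)$ in the proof of Lemma~\ref{lem:mu} translate verbatim, with $\sigma^2\mapsto\bar\sigma^2(x)$, yielding the stated bounds with the same $a_t, b_t$.

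The main obstacle will be verifying that every step in the Lemma~\ref{lem:mu} proof depends on the finite-case covariance $k(\fx)+\sigma^2\mI$ only through its quadratic forms $\Phi(x)^T\Sigma''\Phi(x)$ at the evaluation points, so that the substitution is truly distribution-preserving rather than merely formal; the assumption $T\le K$ together with $\Phi(\bar\vx)$ having full row rank is exactly what ensures the relevant matrices $\Phi(\vx_t)^T\hat\Sigma\Phi(\vx_t)$ are almost surely invertible, so the pseudoinverse manipulations underlying $\hat\vu_t, \hat\Sigma_t$ remain valid and the Wishart/Gaussian distributional identities used in Lemma~\ref{lem:mu} carry over unchanged.
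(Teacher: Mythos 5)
Your proposal is correct and takes essentially the same route as the paper: the appendix establishes (Corollary~\ref{lem:estimate2}) that $\hat\vu$ and $\hat\Sigma$ are independent with $\hat\vu\sim\mathcal N\bigl(\vu,\tfrac1N\Sigma''\bigr)$ and $\hat\Sigma\sim\mathcal W\bigl(\tfrac{1}{N-1}\Sigma'',N-1\bigr)$ for $\Sigma''=\Sigma+\sigma^2(\Phi(\bar\vx)\Phi(\bar\vx)\T)^{-1}$, hence that $\hat\mu(\vx)$ and $\hat k(\vx)$ on any finite input set are Gaussian/Wishart with $k(\vx)+\sigma^2\mI$ replaced by $k(\vx)+\bar\sigma^2(\vx)$, and then states that the proof of Lemma~\ref{lem:mu1} carries over verbatim, exactly as you argue. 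One minor caution: unbiasedness of $\hat\vu_t$ does not follow from $\Ex[\hat\Sigma]=\Sigma''$ alone, since $\hat\Sigma\mapsto\hat\Sigma\Phi(\vx_t)(\Phi(\vx_t)\T\hat\Sigma\Phi(\vx_t))^{-1}$ is nonlinear; it requires the Wishart conditional-expectation identity used in the proof of Lemma~\ref{lem:mu1}, which your verbatim-translation step does supply.
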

\paragraph{Regret bounds} Similar to the finite $\fx$ case, we can also show a near-zero regret bound for compact $\fx\in\R^d$. The following theorem clarifies our results.  The convergence rates are the same as  Thm.~\ref{thm:regret1}. Note that $\lambda_T^2$ converges to $\bar\sigma^2(\cdot)$ instead of $\sigma^2$ in Thm.~\ref{thm:regret1} and  $\bar\sigma^2(\cdot)$ is proportional to $\sigma^2$ . 
\begin{thm} \label{thm:regret2}
Assume all the assumptions in Thm.~\ref{thm:regret1} and that $\Phi(\bar \vx)$ has full row rank. With probability at least $1-\delta$, the best-sample simple regret in $T$ iterations of meta BO with either GP-UCB or PI satisfies
\begin{align*}
& r^{\text{UCB}}_T  < \eta^{\text{UCB}}_T(N)\lambda_T, \;\;r^{\text{PI}}_T  < \eta^{\text{PI}}_T(N) \lambda_T,\;\; \lambda_T^2 = O(\rho_T /T) + \bar\sigma(x_\tau)^2,
\end{align*}
where $\eta_T^{UCB} (N)= (m+ C_1)(\frac{\sqrt{1+m}}{\sqrt{1-m}} +1)$, $\eta_T^{\text{PI}}(N) =  (m+ C_2)(\frac{\sqrt{1+m}}{\sqrt{1-m}}+1) + C_3$, $m = O(\sqrt{\frac{1}{N-T}})$, $C_1, C_2, C_3> 0$ are constants, 
$\tau = \argmin_{t\in[T]} k_{t-1}(x_t)$ and $\rho_T = \underset{A\in\fx, |A|=T}{\max}\frac12\log|\mI+\sigma^{-2}k(A)|$.
\end{thm}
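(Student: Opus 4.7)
} The plan is to mirror the proof of Theorem~\ref{thm:regret1}, replacing every application of Lemma~\ref{lem:mu} with Lemma~\ref{lem:mu2}. The structural form of the two lemmas is identical: in both cases the unbiased estimators $\hat\mu_t,\hat k_t$ concentrate around $\mu_t,k_t$ with the same deviation coefficients $a_t,b_t$, and the only change is that the ``effective noise variance'' that appears additively with $k_t(x)$ is $\bar\sigma^2(x)$ rather than $\sigma^2$. Consequently, every step of the finite-$\fx$ regret proof that treated $\sigma^2$ as a constant shift of the posterior variance can be carried over verbatim with the substitution $\sigma^2 \mapsto \bar\sigma^2(x)$, which is the source of the $\bar\sigma(x_\tau)^2$ term in $\lambda_T^2$.

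First I would establish a high-probability event on which the conclusions of Lemma~\ref{lem:mu2} hold simultaneously for all $t\le T$ and for the relevant evaluation points (the observed $x_t$'s and the maximizer $x^*$), by a union bound that absorbs a factor of $T$ into the choice of $\delta$. On this event, two ingredients are available for each $t$: (i) $|\hat\mu_{t-1}(x)-\mu_{t-1}(x)|^2 < a_{t-1}(k_{t-1}(x)+\bar\sigma^2(x))$, and (ii) $\hat k_{t-1}(x)$ is sandwiched between $(1-2\sqrt{b_{t-1}})(k_{t-1}(x)+\bar\sigma^2(x))$ and $(1+2\sqrt{b_{t-1}}+2b_{t-1})(k_{t-1}(x)+\bar\sigma^2(x))$. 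Combined with the Gaussian tail bound $|f(x)-\mu_{t-1}(x)|\le \sqrt{2\log(3/\delta)}\,k_{t-1}(x)^{1/2}$ from the true posterior, these three facts let me relate the estimated acquisition function values at any $x$ to the true function value at $x$ through $k_{t-1}(x)^{1/2}$, with multiplicative factor $m+C_1$ (for UCB) or additive PI-specific correction $C_3$ (for PI). The specific form of $\zeta_t$ given before Theorem~\ref{thm:regret1} is chosen precisely so that the UCB upper confidence property $\hat\mu_{t-1}(x)+\zeta_t\hat k_{t-1}(x)^{1/2} \ge f(x)$ holds uniformly on the event.

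Next I would invoke the standard GP-UCB/PI argument: by the selection rule $x_t=\argmax \alpha_{t-1}$, at every round the regret $f(x^*)-f(x_t)$ is bounded by $\eta_T^{\text{UCB}}(N)\,k_{t-1}(x_t)^{1/2}$ or $\eta_T^{\text{PI}}(N)\,k_{t-1}(x_t)^{1/2}+C_3\bar\sigma(x_t)$ up to the $\sqrt{1+m}/\sqrt{1-m}+1$ multiplier that arises from translating between $\hat k_{t-1}$ and $k_{t-1}+\bar\sigma^2$. Taking the best-sample simple regret then allows me to pick $\tau=\argmin_{t\in[T]}k_{t-1}(x_t)$, so that
\begin{equation*}
r_T \le \eta_T(N)\,\bigl(k_{\tau-1}(x_\tau)+\bar\sigma^2(x_\tau)\bigr)^{1/2}.
\end{equation*}
The classical information-theoretic bound $\sum_{t=1}^T \log(1+\sigma^{-2}k_{t-1}(x_t))\le 2\rho_T$ implies $\min_{t}k_{t-1}(x_t) = O(\rho_T/T)$, giving the claimed $\lambda_T^2 = O(\rho_T/T)+\bar\sigma^2(x_\tau)$.

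The main obstacle, and the only genuinely new point compared to Theorem~\ref{thm:regret1}, is verifying that the information-gain machinery is unaffected by the switch to the primal representation. Since $k_{t-1}(x_t)$ is computed from the true (unknown) kernel $k(x,x')=\Phi(x)\T\Sigma\Phi(x')$ restricted to the actually-queried sequence $\vx_t$, and the maximum information gain $\rho_T=\max_{A:|A|=T}\tfrac12\log|\mI+\sigma^{-2}k(A)|$ is defined with respect to this same $k$, the standard Srinivas et al.\ telescoping inequality applies unchanged. The extra care is in tracking that $\bar\sigma^2(x_\tau)$ rather than $\sigma^2$ is what remains as the irreducible residual, which comes directly from the ``$+\bar\sigma^2(x)$'' in the Lemma~\ref{lem:mu2} concentration bound and cannot be removed because $\hat\Sigma$ only estimates $\Sigma+\sigma^2(\Phi(\bar\vx)\Phi(\bar\vx)\T)^{-1}$, not $\Sigma$ itself. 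Detailed computations of the constants $C_1,C_2,C_3$ and of $m=O((N-T)^{-1/2})$ are identical to the finite case and would be deferred to the appendix.
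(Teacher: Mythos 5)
Your proposal takes exactly the paper's route: the appendix proof of Theorem~\ref{thm:regret2} is literally the remark that Corollary~\ref{lem:estimate2} gives $\hat\mu(\vx)\sim\mathcal N\bigl(\mu(\vx),\tfrac1N(k(\vx)+\bar\sigma^2(\vx))\bigr)$ and $\hat k(\vx)\sim\mathcal W\bigl(\tfrac{1}{N-1}(k(\vx)+\bar\sigma^2(\vx)),N-1\bigr)$, after which the arguments of Lemma~\ref{lem:mu1} and Theorem~\ref{app_thm:regret} go through verbatim under the substitution $\sigma^2\mapsto\bar\sigma^2(x)$, and you correctly identify that the information-gain step is untouched because $\rho_T$ is still defined through the true $k$ and the true $\sigma$. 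The one place you deviate is the union bound over all $t\le T$: the paper instead invokes the concentration results only at the single round $\tau=\argmin_{t\in[T]}k_{t-1}(x_t)$ and the two points $x_*$ and $x_\tau$ (splitting $\delta$ into a fixed number of pieces). Because the mean deviation $\iota_t$ comes from Chebyshev's inequality and carries a $1/\delta_1$ inside the square root, a union bound over $T$ rounds would inflate $m$ from $O(\sqrt{1/(N-T)})$ to $O(\sqrt{T/(N-T)})$ and put a $\sqrt{\log T}$ into $C_1$, so your version would establish a valid but strictly weaker bound than the one stated; to recover the claimed constants you should restrict attention to round $\tau$ as the paper does (at the cost of inheriting the paper's own looseness about $\tau$ and $x_\tau$ being data-dependent).
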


\subsection{Bounding the simple regret by the best-sample simple regret}
\label{sec:inf}
Once we have the observations $D_T=\{(x_t, y_t)\}_{t=1}^T$, we can infer where the $\argmax$ of the function is. For all the cases in which $\mathfrak X$ is discrete or compact and the acquisition function is GP-UCB or PI, we choose the inferred $\argmax$ to be $\hat x^*_T = x_\tau$ where $\tau = \argmax_{\tau\in[T]} y_{\tau}$. We show in Lemma~\ref{lem:inf} that with high probability, the difference between the simple regret $R_T$ and the best-sample simple regret $r_T$ is proportional to the observation noise $\sigma$. 
\begin{lem}
\label{lem:inf} 
With probability at least $1-\delta$, $R_T \leq r_T + 2(2\log \frac{1}{\delta})^{\frac12} \sigma$.
\end{lem}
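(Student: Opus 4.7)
The plan is to directly compare the two regret notions using the observation model. Let $t^\star \in \argmax_{t\in[T]} f(x_t)$, so that $r_T = \max_{x\in\fx} f(x) - f(x_{t^\star})$, while by construction $\hat x_T^\star = x_\tau$ with $\tau = \argmax_{t\in[T]} y_t$. Subtracting yields
\[
R_T - r_T \;=\; f(x_{t^\star}) - f(x_\tau) \;\geq\; 0,
\]
so only this non-negative gap, incurred by selecting the index with the largest \emph{observation} rather than the largest latent function value, needs to be controlled.

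Next, I would translate the gap into a statement about the noise. Writing $y_t = f(x_t) + \epsilon_t$ with $\epsilon_t \sim \mathcal N(0,\sigma^2)$ independent, the defining inequality $y_\tau \geq y_{t^\star}$ rearranges to
\[
f(x_{t^\star}) - f(x_\tau) \;\leq\; \epsilon_\tau - \epsilon_{t^\star} \;\leq\; 2 \max_{t\in[T]} |\epsilon_t|.
\]
This is the crux of the argument: the excess of $R_T$ over $r_T$ is dominated by the magnitude of the Gaussian noise, with no further dependence on $f$ or on the selection rule of the BO acquisition.

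Finally, apply a standard sub-Gaussian tail bound. For each fixed $t$, $\Pr(|\epsilon_t|>c)\leq 2\exp(-c^2/(2\sigma^2))$; combined with the reduction above, this yields a bound of the advertised form $R_T \leq r_T + c \cdot \sigma$ for $c = O(\sqrt{\log(1/\delta)})$ with probability at least $1-\delta$. The constants and logarithm match the statement after the appropriate sub-Gaussian rescaling.

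The one subtlety is that the indices $\tau$ and $t^\star$ are themselves random, measurable with respect to the entire noise sequence $\{\epsilon_t\}_{t=1}^T$, so a single fixed-index concentration inequality does not directly apply to $\epsilon_\tau$ or $\epsilon_{t^\star}$. The natural remedy is to dominate both by $\max_t |\epsilon_t|$ as above and control the latter uniformly; once that uniform control is in hand, every other step is a mechanical rearrangement using the definitions of $\tau$ and $t^\star$ together with the Gaussian noise assumption.
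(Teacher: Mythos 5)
Your proposal follows essentially the same route as the paper: set $\tau'=\argmax_{t\in[T]}f(x_t)$ and $\tau=\argmax_{t\in[T]}y_t$, use $y_\tau\geq y_{\tau'}$ to get $R_T-r_T=f(x_{\tau'})-f(x_\tau)\leq \epsilon_\tau-\epsilon_{\tau'}$, and finish with Gaussian concentration. The only substantive divergence is in the last step, and it is exactly the subtlety you flag. The paper's proof applies its fixed-index tail bound (Corollary~\ref{lem:gauss}) directly to $y_\tau-f(x_\tau)$ and $y_{\tau'}-f(x_{\tau'})$ at the two data-dependent indices, which produces the clean, $T$-free constant $2(2\log\tfrac1\delta)^{1/2}\sigma$ of the statement---but this is precisely the leap you identify as illegitimate, since $\tau$ is selected as the argmax of the noisy observations and so $\epsilon_\tau$ is biased upward (in the degenerate case where all $f(x_t)$ are equal, $\epsilon_\tau=\max_t\epsilon_t$). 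Your remedy of dominating $\epsilon_\tau-\epsilon_{\tau'}$ by $2\max_{t\in[T]}|\epsilon_t|$ and controlling the maximum uniformly is rigorous, but it unavoidably requires a union bound over the $T$ noise variables, so the constant you actually obtain is $2\bigl(2\log\tfrac{2T}{\delta}\bigr)^{1/2}\sigma$ rather than $2\bigl(2\log\tfrac{1}{\delta}\bigr)^{1/2}\sigma$; your closing remark that ``the constants and logarithm match'' is therefore not quite right, and you should state the weaker bound explicitly. The extra $\sqrt{\log T}$ is harmless for the downstream conclusion (the simple regret still tracks the best-sample simple regret up to a term proportional to $\sigma$), but you cannot recover the $T$-independent constant by the uniform-maximum route; only the paper's (questionable) pointwise application at the selected indices yields it.
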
 
Together with the bounds on the best-sample simple regret from Thm.~\ref{thm:regret1} and Thm.~\ref{thm:regret2}, our result shows that, with high probability, the simple regret decreases to a constant proportional to the noise level $\sigma$ as the number of iterations and training functions increases.
\section{Experiments}
\label{sec:exp}
\begin{wrapfigure}{r}{0.5\textwidth}
\begin{center}
\includegraphics[scale=0.14]{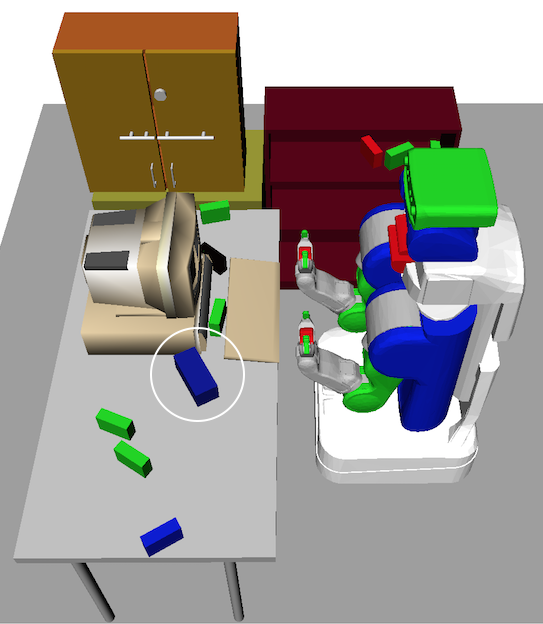}
\includegraphics[scale=0.14]{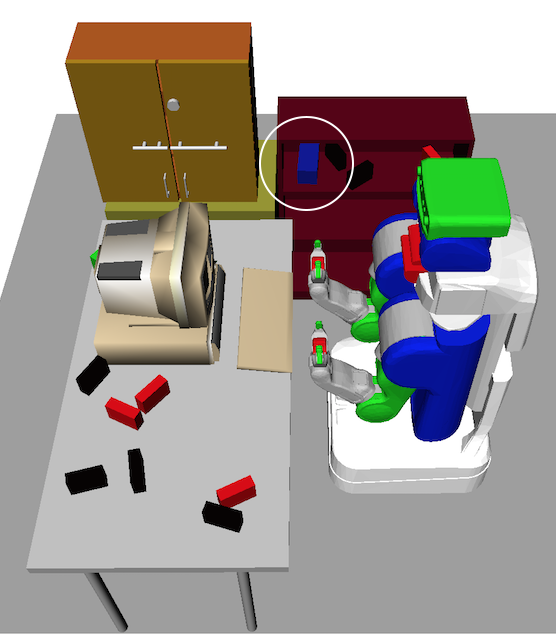}
\end{center}
\caption{Two instances of a picking problem. A problem instance
is defined by the  arrangement and number of
obstacles, which vary randomly across different instances. The objective
is to select a grasp that can pick the blue box, marked with a circle, without
violating kinematic and collision constraints.~\cite{kim2017learning}.}
\label{fig:grasp_domain}
\end{wrapfigure}

We evaluate our algorithm in four different black-box function
optimization problems, involving discrete or continuous 
function domains. One problem is optimizing a 
synthetic function in $\mathbb{R}^2$, and the rest
are optimizing decision variables in robotic task and motion 
planning problems that were used in~\cite{kim2017learning}\footnote{ Our code is available at \url{https://github.com/beomjoonkim/MetaLearnBO}.}.

At a high level, our task and motion planning benchmarks involve computing
kinematically feasible collision-free motions 
for picking and placing objects in a scene cluttered
with obstacles. This problem has a similar setup to experimental design: 
the robot can ``experiment'' by assigning values to decision variables 
including grasps, base poses, and object placements until it finds a feasible plan.  
Given the assigned values for these variables,
the robot program makes a call to a planner\footnote{We use
Rapidly-exploring random tree (RRT)~\cite{lavalle2000rapidly} with predefined random seed,
 but other choices are  possible.} which then
attempts to find a sequence of motions 
that achieve these grasps and placements.
We score the variable assignment based on the results of 
planning, assigning a very low score if the 
problem was infeasible and otherwise scoring based on plan
 length or obstacle clearance. An example
 problem is given in Figure~\ref{fig:grasp_domain}.

\begin{figure}[t]
\centering
\includegraphics[width=\textwidth]{./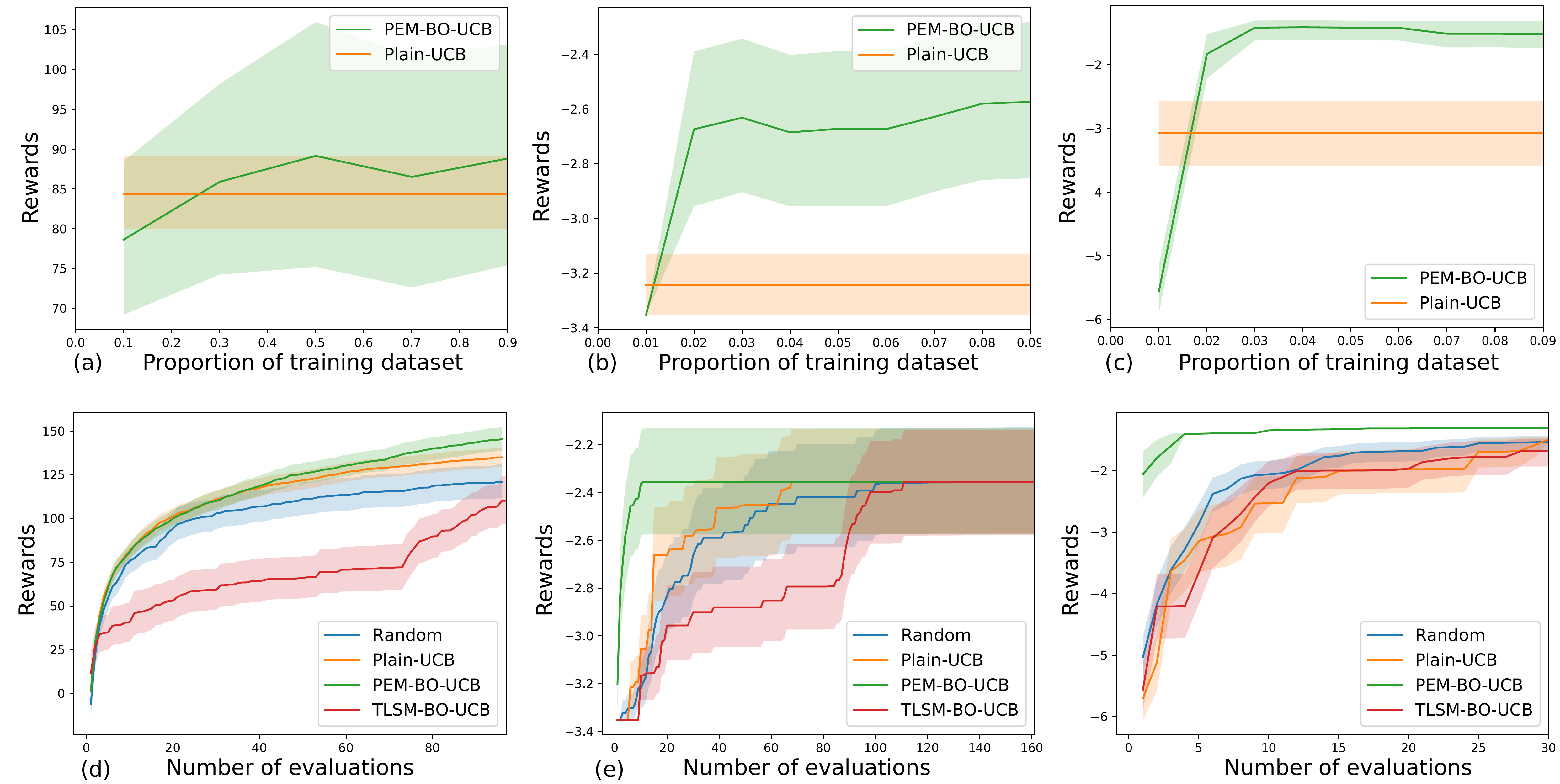}
\caption{Learning curves (top) and rewards vs number of iterations (bottom) for optimizing synthetic functions sampled from a GP and two scoring functions from.}
\label{fig:result}
\vskip -1.5em
\end{figure}

Planning problem instances are characterized by
 arrangements of obstacles in the scene and the shape
of the target object to be manipulated, and 
each problem instance defines a different score function. Our objective is to optimize the score function for a new problem instance,
given sets of decision-variable and score pairs from a set of previous
planning problem instances as training data.

In two robotics domains, we discretize the
original function domain using samples from the past planning experience,
by extracting the values of the decision variables and their scores
from successful plans. 
This is inspired by the previous successful use of BO 
in a discretized domain~\cite{CullyNature2015}
to efficiently solve an adaptive locomotion problem. %

We compare our approach, called \emph{point estimate meta Bayesian optimization}~(\pembo), to three baseline methods. The first is a plain Bayesian optimization method that uses a kernel function
to represent the covariance matrix, which we call~\plain.~\plain~optimizes its GP hyperparameters by maximizing the data likelihood.~The second is a~\emph{transfer learning sequential model-based optimization}~\cite{yogatama2014efficient} method,
that, like~\pembo, uses past function evaluations, but assumes
that functions sampled from the same GP have similar response
surface values. We call this method~\tlsmbo. The third is random selection, which we call~\rand. We present the results on the UCB acquisition function in the paper and results on the PI acquisition function are available in the appendix.

In all domains, we use the $\zeta_t$ value as specified in Sec.~\ref{sec:approach}. For
continuous domains, we use $\Phi(x)=[\cos(x^T\beta^{(i)} +\beta^{(i)}_0)]_{i=1}^{K}$
as our basis functions. 
In order to train the weights $W_i, \beta^{(i)},$ and 
$\beta^{(i)}_0$, we represent
the function $\Phi(x)^T W_i$  with a 1-hidden-layer
neural network with cosine activation function and a linear output layer with function-specific weights
$W_i$. We then train this network on the entire dataset $\bar D_N$. Then, fixing 
$\Phi(x)$, for each set of pairs 
$(\bar \vy_i,\bar \vx_i),i=\{1\cdots N\}$, we analytically solve the
linear regression problem $\vy_i \approx \Phi(\vx_i)^T W_i$ as described in Sec.~\ref{sec:continuous}. %

\paragraph{Optimizing a continuous synthetic function} 
In this problem, the objective is to optimize a black-box function sampled from
a GP, whose domain
is $\mathbb{R}^2$, given a set of evaluations of different
functions from the same GP. Specifically, we consider a GP with a squared exponential kernel  
function. The purpose of this problem is to show that~\pembo, which
estimates mean and covariance matrix based on $\bar D_N$, would
perform similarly to BO methods that 
start with an appropriate prior.
We have training data from $N=100$ functions with $M=1000$ sample
points each. %

Figure~\ref{fig:result}(a) shows the learning curve, when
we have different portions of data. 
 The x-axis represents the 
percentage of the dataset used to train the basis functions,
$\vu$, and $\cW$ from the training dataset, and the y-axis
represents the best function value found after 10 evaluations on 
a new function.  We can see that
even with just ten percent of the training data points,~\pembo~
performs just as well as~\plain, which uses the appropriate kernel
for this particular problem. Compared to~\pembo, which can efficiently use
all of the dataset, we had to limit the number
of training data points for~\tlsmbo~to 1000, because 
 even performing  inference requires $O(NM)$ time. This
leads to its noticeably worse performance than~\plain~and \pembo.

  Figure~\ref{fig:result}(d) shows the
how $\max_{t\in[T]} y_t$ evolves, where $T \in [1,100]$.
 As we can see,~\pembo~using the UCB acquisition function
performs similarly to \plain~with the same acquisition function.
~\tlsmbo~again suffers because we had to limit the number
of training data points.

\paragraph{Optimizing a grasp}
In the robot-planning problem shown in Figure~\ref{fig:grasp_domain},
the robot has to choose a grasp for picking the target
object in a cluttered scene. A planning problem instance is defined
by the poses of obstacles and the target objects, which changes the
feasibility of a grasp across different instances. 

The reward function is the negative of the length of the picking motion
if the motion is feasible, and $-k \in \mathbb{R}$ otherwise,
where $-k$ is a suitably lower number than the lengths of possible
trajectories. We construct the discrete set of grasps
by using grasps that worked in the past planning problem instances.
The original space of grasps is $\mathbb{R}^{58}$,
which describes position, direction, roll, and depth of a robot
gripper with respect to the object, as used in~\cite{openrave}.
For both~\plain~and~\tlsmbo, we use squared exponential kernel function
on this original grasp space to represent the covariance matrix.
We note that this is a poor choice of kernel, because
the grasp space includes angles, making it a non-vector space.
These methods
also choose a grasp from the discrete set.
We train on dataset with $N=1800$ previous problems, and let $M=162$. 

Figure~\ref{fig:result}(b) shows the learning curve with $T=5$.
The x-axis is the percentage of the dataset used for training, ranging from one percent to ten percent.
 Initially, when we just use one percent of the training data points,
\pembo~performs as poorly as~\tlsmbo, which again, had only 1000 training data points.
However, \pembo~outperforms both~\tlsmbo~and~\plain~after that. The main
reason that~\pembo~outperforms these approaches is because their prior,
which is defined by the squared exponential kernel, is not suitable for this problem. 
\pembo,~on the other hand, was able to avoid  this problem by 
estimating a distribution over values at the discrete 
sample points that commits only to their joint normality, 
but not to any metric on the underlying space.
These trends are also shown in 
Figure~\ref{fig:result}(e), where we plot $\max_{t\in[T]}y_t$  
for $T \in [1,100]$.~\pembo~outperforms
the baselines significantly. 

\paragraph{Optimizing a grasp, base pose, and placement}
We now consider a more difficult task that involves both picking
and placing objects in a cluttered scene. A planning problem instance 
is defined by the poses of obstacles and the poses and shapes
of the target object to be pick and placed.  The reward function 
is again the negative of the length of the picking motion
if the motion is feasible, and $-k \in \mathbb{R}$ otherwise.
For both~\plain~and~\tlsmbo, 
we use three different squared exponential kernels on the original spaces of grasp, base pose,
and object placement pose respectively and then add them together to
define the kernel for the whole set. For this domain, $N=1500$, and $M=1000$.

Figure~\ref{fig:result}(c) shows the learning curve, when $T=5$.
 The x-axis is the percentage of the dataset used for training, ranging from one percent to ten percent.
 Initially, when we just use one percent of the training data points,
\pembo~does not perform well. Similar to the previous domain,
it then significantly outperforms both~\tlsmbo~and~\plain~after increasing the training data. 
This is also reflected in Figure~\ref{fig:result}(f), where
we plot $\max_{t\in[T]} y_t$ for $T \in [1, 100]$. 
\pembo~outperforms baselines. Notice that \plain~and~\tlsmbo~perform worse than \rand, as a result of making inappropriate assumptions on the form of the kernel.

\section{Conclusion}
We proposed a new framework for meta BO that estimates its Gaussian process prior based on past experience with functions sampled from the same prior. We established regret bounds for our approach without the reliance on a known prior and showed its good performance on task and motion planning benchmark problems. 

\section*{Acknowledgments}
We would like to thank Stefanie Jegelka, Tamara Broderick, Trevor Campbell, Tom\'as Lozano-P\'erez for discussions and comments. We would like to thank Sungkyu Jung and Brian Axelrod for discussions on Wishart distributions. We gratefully acknowledge support from NSF grants 1420316, 1523767 and 1723381, from AFOSR grant FA9550-17-1-0165, from Honda Research and Draper Laboratory.  Any opinions, findings, and conclusions or recommendations expressed in this material are those of the authors and do not necessarily reflect the views of our sponsors.

\bibliographystyle{plain}
\bibliography{refs}
\appendix

\section{Discussions and conclusions}
In this section, we discuss related topics to our approach. Both theoreticians and practitioners may find this section useful in terms of clarifying theoretical insights and precautions.

\subsection{Connections and differences to empirical Bayes} 
In classic empirical Bayes~\cite{robbins1956empirical,keener2011theoretical}, we estimate the unknown parameters of the Bayesian model and usually use a point estimate to proceed any Bayesian computations. One very popular approach to estimate those unknown parameters is by maximizing the data likelihood. There also exit other variants of empirical Bayes; for example, oracle Bayes, which ``shows empirical Bayes in its most frequentist mode''~\cite{efron2017bayes}. 

In this paper, we use a variant of empirical Bayes that constructs estimators for both the prior distribution and the posterior distribution. For the estimators of the posterior, we do not use a plug-in estimate like classic empirical Bayes but we construct them through Lemma.~\ref{lem:mu1}, which establishes the unbiasedness and concentration bounds for those estimates. 

\subsection{Connections and differences to hierarchical Bayes} 
Hierarchical Bayes is a Bayesian hierarchical model that places priors on priors. For both of our finite $\fx$ case and continuous and compact $\fx\in\R^d$ case, we can write down a hierarchical Bayes model that puts a normal inverse Wishart prior on $\mu(\fx), k(\fx)$ or $\vu, \Sigma$.

Our approach can be viewed as a special case of the hierarchical Bayes model using point estimates to approximate the posterior. Neither our estimators nor our regret analyses depend on the prior parameters of those hierarchical Bayes models. But one may analyze the regret of BO with a better approximation from a full Bayesian  perspective using hierarchical Bayes.

\subsection{Future directions}
 Due to the limited space, we only give the formulation of meta BO in its simple and basic settings. Our setting restricts the evaluated inputs in the training data to follow certain norms, such as where they are and how many they are, but one may certainly extend our analyses to less restrictive scenarios. 

\paragraph{Missing entries} We did not consider any bounds in matrix completion~\cite{candes2009exact} in our regret analyses, and proceeded with the assumption that there is no missing entry in the training data. But if missing data is a concern, one should definitely consider adapting bounds from~\cite{candes2009exact} or use better estimators~\cite{lounici2014high} that take into account missing entries when bounding the estimates.

\subsection{Broader impact}
We developed a statistically sound approach for meta BO with an unknown Gaussian process prior. We verified our approach on simulated task and motion planning problems. We showed that our approach is able to guide task and motion planning with good action recommendations, such that the resulting plans are better and faster to compute. We believe the theoretical guarantees may support better explanations for more practical BO approaches. In particular, our method can serve as a building block of artificial intelligence systems, and our analyses can be combined with the  theoretical guarantees of other parts of the system to analyze an integrated system. 

\subsection{Caveats}
 We did not expand the experiment sections to include applications other than task and motion planning in simulation. But there are many more scenarios that this meta BO approach will be useful. For example, our finite $\fx$ formulation can be used to adaptively recommend advertisements, movies or songs to Internet users, by learning a mean and kernel for those discrete items. 

\paragraph{Optimization objectives} Like other bandit algorithms, our approach only treats objective functions or any metrics to be optimized as \emph{given}. Practitioners need to be very careful about what exactly they are optimizing with our approach or other optimization algorithms. For example, maximizing number of advertisement clicks or corporation profits may not be a good metric in recommendation systems; maximizing a poorly designed reward function for robotic systems may result in unexpected hazards.

\paragraph{Guarantees with assumptions} In real-world applications, practitioners need to be extra cautious with our algorithm. We provided detailed assumptions and analyses, that are only based those assumptions, in Section 3 and Section 4. Outside those assumptions, we do not claim that our analyses will hold in any way. For example, in robotics applications, it may not be true that the underlying reward/cost functions are actually sampled from a GP, in which case using our method may harm the physical robot; even if those objective functions are in fact from a GP, because our regret bounds only hold with high probability, meta BO may still give dangerous actions with certain probabilities (as in frequency).

In addition, please notice that we did not provide any theoretical guarantees for using basis functions trained with neural networks. We assume those basis functions are given, which is usually not the case in practice. To the best of our knowledge, proving bounds for neural networks is very hard~\cite{kawaguchi2017deep}.

\section{Proofs for Section 4.1}
\label{sec:bound}
Recall that we assume $\fx$ is a finite set.  The posterior given observations $D_t$ is $GP(\mu_t, k_t)$ where
\begin{align*}
\mu_{t}(x) &= \mu(x) + k(x,\vx_t)(k(\vx_t)+\sigma^2\mI)^{-1}(\vy_t - \mu(\vx_t)), \;\;\forall x\in\mathfrak X\\
k_{t}(x,x') &= k(x,x') - k(x, \vx_t)(k(\vx_t)+\sigma^2\mI)^{-1} k(\vx_t, x'), \;\;\forall x, x'\in\mathfrak X. 
\end{align*}

We use the following estimators to approximate $\mu_t, k_t$:
\begin{align}
\hat \mu_t(x) &= \hat\mu(x) + \hat k(x, \vx_t){\hat k(\vx_t, \vx_t)}^{-1}(\vy_t - \hat \mu(\vx_t)), \;\;\forall x\in\mathfrak X , \label{app_eqn:mu}\\
\hat k_t(x, x') & = \frac{N-1}{N-t-1}\left(\hat k(x,x') - \hat k(x, \vx_t){\hat k(\vx_t, \vx_t)}^{-1} \hat k(\vx_t, x')\right), \;\;\forall x, x'\in\mathfrak X.\label{app_eqn:sigma}
\end{align}

We will prove a bound on the best-sample simple regret $r_T = \max_{x\in \mathfrak X} f(x) - \max_{t\in[T]}f(x_{t})$. The evaluated inputs $\vx_t = [x_\tau]_{\tau}^t$ are selected either by a special case of GP-UCB using the acquisition function
\begin{align}
&\alpha^{\text{GP-UCB}}_{t-1}(x) = \hat\mu_{t-1}(x) + \zeta_{t}\hat k_{t-1}(x)^{\frac12}, \\
&\zeta_t =  \frac{\left(6(N-3 + t+2\sqrt{t\log{\frac{6}{\delta}}} + 2\log{\frac{6}{\delta}} ) /(\delta N(N-t-1))\right)^{\frac12} +(2\log(\frac{3}{\delta}))^\frac12}{(1-2 (\frac{1}{N-t}\log\frac{6}{\delta})^{\frac12})^{\frac12}},  \delta\in(0,1) \label{eq:ucbzeta-app}
\end{align}
or by a special case of PI using the acquisition function
\[\alpha_{t-1}^{\text{PI}}(x) = \frac{\hat\mu_{t-1}(x) - \hat f^*}{\hat k_{t-1}(x)^{\frac12}}.\]
This special case of PI assumes additional information of the upper bound on function value $\hat f^*\geq \max_{x\in\mathfrak X}f(x)$.

\begin{cor}[\cite{srinivas2009gaussian}]
\label{lem:gauss} Let $\delta_0\in(0,1)$. For any Gaussian variable $x\sim \mathcal N(\mu, \sigma^2), x\in \R,$ 
$$\Pr[x-\mu   \leq \zeta_0\sigma]\geq 1-\delta_0, \; \Pr[x-\mu   \geq -\zeta_0\sigma]\geq 1-\delta_0$$
where $\zeta_0 = (2\log(\frac{1}{2\delta_0}))^\frac12$.
\end{cor}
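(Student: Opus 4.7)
The statement is the standard two-sided Gaussian tail bound, so the proof will be essentially a classical concentration argument; the plan is therefore short.

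First I would reduce to the standard normal by setting $z=(x-\mu)/\sigma$, so that $z\sim\mathcal{N}(0,1)$ and the two claims become $\Pr[z\le \zeta_0]\ge 1-\delta_0$ and $\Pr[z\ge -\zeta_0]\ge 1-\delta_0$. By symmetry of the standard normal, $\Pr[z\le -\zeta_0]=\Pr[z\ge \zeta_0]$, so it suffices to prove the one-sided bound $\Pr[z\ge \zeta_0]\le \delta_0$ with $\zeta_0=(2\log(1/(2\delta_0)))^{1/2}$.

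Next, I would establish the refined Gaussian tail bound
\[
\Pr[z\ge t]\;\le\; \tfrac{1}{2}e^{-t^2/2}\qquad\text{for all }t\ge 0.
\]
Setting $t=\zeta_0$ then gives $\Pr[z\ge \zeta_0]\le \tfrac{1}{2}e^{-\log(1/(2\delta_0))}=\delta_0$, which is exactly the one-sided bound needed. To justify the refined tail bound itself, I would use a short calculus argument on $g(t):=\tfrac{1}{2}e^{-t^2/2}-\Pr[z\ge t]$: observe that $g(0)=0$, $g(t)\to 0$ as $t\to\infty$, and
\[
g'(t)\;=\; e^{-t^2/2}\!\left(\tfrac{1}{\sqrt{2\pi}}-\tfrac{t}{2}\right),
\]
which has a single sign change on $[0,\infty)$. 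Hence $g$ first increases then decreases, so $g(t)\ge 0$ for all $t\ge 0$. (Alternatively, one can invoke the standard Chernoff/MGF bound $\Pr[z\ge t]\le e^{-t^2/2}$ together with the symmetry-based factor of $\tfrac{1}{2}$.)

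There is no real obstacle here: the only subtlety is that the naive Chernoff bound gives only $e^{-t^2/2}$, whereas the statement requires the sharper constant $\tfrac{1}{2}e^{-t^2/2}$ in order to match $\zeta_0=(2\log(1/(2\delta_0)))^{1/2}$. That factor of $\tfrac{1}{2}$ is what the calculus argument above is designed to produce. Combining the one-sided bound with symmetry and undoing the rescaling $z=(x-\mu)/\sigma$ then yields both inequalities in the statement.
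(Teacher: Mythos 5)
Your proposal is correct, and its overall skeleton matches the paper's: standardize to $z\sim\mathcal N(0,1)$, prove the one-sided refined tail bound $\Pr[z\ge \zeta_0]\le \tfrac12 e^{-\zeta_0^2/2}$, and conclude by symmetry and the choice $\zeta_0^2/2=\log\frac{1}{2\delta_0}$. Where you differ is in how the key inequality is established. The paper proves it by an integral manipulation: it writes $e^{-z^2/2}=e^{-(z-\zeta_0)^2/2+\zeta_0^2/2-z\zeta_0}$ and bounds $e^{\zeta_0^2/2-z\zeta_0}\le e^{-\zeta_0^2/2}$ on the integration range $z\ge\zeta_0$, so that the remaining shifted Gaussian integrates to $\tfrac12$ — essentially an exponential-tilting/Chernoff computation done explicitly inside the integral. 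You instead study $g(t)=\tfrac12 e^{-t^2/2}-\Pr[z\ge t]$ and argue from the single sign change of $g'$ together with $g(0)=0$ and $g(\infty)=0$ that $g\ge 0$. Both arguments are valid and of comparable length; yours avoids the algebra of completing the square, while the paper's avoids any appeal to limits at infinity. One caution: your parenthetical alternative (``invoke the standard Chernoff bound $\Pr[z\ge t]\le e^{-t^2/2}$ together with the symmetry-based factor of $\tfrac12$'') does not actually work — symmetry does not convert the plain Chernoff bound into the sharper $\tfrac12 e^{-t^2/2}$, which is exactly why you (and the paper) need the more careful argument. Since you present that only as an aside and your main derivation is self-contained, this does not affect correctness.
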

\begin{proof}
Let $z=\frac{\mu - x}{\sigma}\sim\mathcal N(0,1) $. We have 
\begin{align*}
\Pr[z>\zeta_0] &= \int_{\zeta_0}^{+\infty} \frac{1}{\sqrt{2\pi}}e^{-z^2/2}\dif z \\
&=\int_{\zeta_0}^{+\infty} \frac{1}{\sqrt{2\pi}}e^{-(z-\zeta_0)^2/2-\zeta_0^2/2-z\zeta_0}\dif z\\
&\leq e^{-\zeta_0^2/2}\int_{\zeta_0}^{+\infty} \frac{1}{\sqrt{2\pi}}e^{-(z-\zeta_0)^2/2}\dif z\\
&=\frac12 e^{-\zeta_0^2/2}.
\end{align*}
Similarly, $\Pr[z < -\zeta_0] \leq \frac12 e^{-\zeta_0^2/2}.$ We reach the conclusion by rearranging the constants.
\end{proof}

\begin{lem} \label{lem:estimate0} Assume $X_1, \cdots, X_n \in \R^{m}$ are sampled i.i.d. from $\mathcal N(u, V)$. Suppose we estimate the sample mean to be $\hat u = \frac1n X\T 1_n$ and the sample covariance to be $\hat V = \frac{1}{n-1} (X - 1_n \hat u\T)\T(X - 1_n\hat u\T)$ where $X = [X_i]_{i=1}^n \in \R^{n\times m}$. Then, $\hat u$ and $\hat V$ are independent, and
\begin{align*}
&\hat u \sim \mathcal N(u, \frac{1}{n}V), \; \;\hat V \sim \mathcal W(\frac{1}{n-1}V, n-1).
\end{align*}
\end{lem}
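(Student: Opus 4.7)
My plan is to proceed via the classical Helmert-transform argument from multivariate statistics.

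First, I would establish the distribution of $\hat u$ directly. Since $\hat u = \frac{1}{n} X^T 1_n = \frac{1}{n} \sum_{i=1}^n X_i$ is a linear combination of jointly Gaussian vectors, it is itself Gaussian with mean $u$ and covariance $\frac{1}{n^2} \sum_{i=1}^n V = \frac{1}{n} V$. This part requires no machinery.

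Next, for the Wishart distribution of $\hat V$ and its independence from $\hat u$, the plan is to introduce an orthogonal transformation. Let $H \in \R^{n \times n}$ be any orthogonal matrix whose first row equals $\frac{1}{\sqrt{n}} 1_n^T$ (such an $H$ exists by Gram--Schmidt). Define $Y = H(X - 1_n u^T) \in \R^{n \times m}$, so the rows $Y_1, \ldots, Y_n$ are jointly Gaussian with mean zero. I would then verify: (i) the rows $Y_i$ are mutually independent and each $Y_i \sim \mathcal N(0, V)$, which follows from $\mathrm{Cov}(Y_i, Y_j) = (HH^T)_{ij} V = \delta_{ij} V$; (ii) $Y_1 = \sqrt{n}(\hat u - u)$, directly from the first row of $H$; and (iii) by the orthogonality-preservation of squared norms,
\begin{equation*}
\sum_{i=1}^n Y_i Y_i^T = (X - 1_n u^T)^T H^T H (X - 1_n u^T) = \sum_{i=1}^n (X_i - u)(X_i - u)^T.
\end{equation*}

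Then I would compute $(n-1)\hat V = \sum_{i=1}^n X_i X_i^T - n \hat u \hat u^T$, rewrite this in centered form as $\sum_{i=1}^n (X_i - u)(X_i - u)^T - n(\hat u - u)(\hat u - u)^T$, and substitute to obtain $(n-1)\hat V = \sum_{i=1}^n Y_i Y_i^T - Y_1 Y_1^T = \sum_{i=2}^n Y_i Y_i^T$. Since $Y_2, \ldots, Y_n$ are i.i.d.\ $\mathcal N(0, V)$, the definition of the Wishart distribution yields $(n-1)\hat V \sim \mathcal W(V, n-1)$, and by the scaling property $\hat V \sim \mathcal W(\tfrac{1}{n-1} V, n-1)$. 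Independence of $\hat u$ and $\hat V$ is immediate: $\hat u$ is a function of $Y_1$ alone, while $\hat V$ is a function of $Y_2, \ldots, Y_n$ alone, and the $Y_i$'s are mutually independent.

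I do not expect any serious obstacle; the whole argument rests on one clean trick (the Helmert-type orthogonal transformation), and the only care needed is in matching the paper's Wishart scale-matrix convention so that the degrees of freedom and scale come out as $n-1$ and $\tfrac{1}{n-1}V$ rather than the commonly seen unscaled form $\mathcal W(V, n-1)$.
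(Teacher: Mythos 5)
Your proof is correct and complete: the Helmert-type orthogonal transformation cleanly delivers both the Wishart law of $\hat V$ (with the paper's $\tfrac{1}{n-1}V$ scale convention handled properly) and the independence of $\hat u$ and $\hat V$. The paper does not actually prove this lemma --- it defers to Theorem 3.3.2 and Corollary 7.2.3 of Anderson (1958) --- and your argument is precisely the classical proof underlying that citation, so there is nothing to add.
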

Lemma~\ref{lem:estimate0} is a combination of Theorem 3.3.2 and Corollary 7.2.3 of~\cite{anderson1958introduction}. Interested readers can find the proof of  Lemma~\ref{lem:estimate0} in~\cite{anderson1958introduction}. Corollary~\ref{lem:estimate1} directly follows Lemma~\ref{lem:estimate0}.

\begin{cor} \label{lem:estimate1} $\hat \mu$ and $\hat k$ are independent and 
\vskip -1.5em
\begin{align*}
\hat \mu(\fx) \sim \mathcal N(\mu(\fx), \frac{1}{N}(k(\fx)+\sigma^2\mI), \;\;\hat k(\fx) \sim \mathcal W(\frac{1}{N-1}(k(\fx)+\sigma^2\mI), N-1).
\end{align*}
\end{cor}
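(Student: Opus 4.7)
The statement is essentially an immediate specialization of Lemma~\ref{lem:estimate0}, so the plan is to verify that the rows of the completed observation matrix $Y$ satisfy the hypotheses of that lemma with the correct mean and covariance, and then invoke it.

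\textbf{Step 1: Identify the row distribution.} I will first show that the $N$ rows $Y_i = [\bar y_{i1},\dots,\bar y_{iM}]^{\mathsf T}$ of $Y$ are i.i.d.\ samples from $\mathcal N\bigl(\mu(\fx),\, k(\fx)+\sigma^2 \mI\bigr)$. By the GP assumption, for each $i$ the vector $f_i(\fx) = [f_i(\bar x_1),\dots,f_i(\bar x_M)]^{\mathsf T}$ is a Gaussian vector with mean $\mu(\fx)$ and covariance $k(\fx)$, independent across $i$. Conditional on $f_i$, the noise terms $\bar y_{ij} - f_i(\bar x_j) \sim \mathcal N(0,\sigma^2)$ are independent across $j$ (and independent of everything else), so $Y_i = f_i(\fx) + \veps_i$ with $\veps_i \sim \mathcal N(\zeros,\sigma^2\mI)$ independent of $f_i(\fx)$. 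The sum of the two independent Gaussians is Gaussian with the claimed mean and covariance, and independence across $i$ is inherited from independence of the $f_i$ and of the noise vectors.

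\textbf{Step 2: Apply Lemma~\ref{lem:estimate0}.} With $n=N$, $m=M$, $u = \mu(\fx)$, and $V = k(\fx)+\sigma^2\mI$, the estimators defined in Section~\ref{sec:discrete} as $\hat\mu(\fx) = \tfrac{1}{N} Y^{\mathsf T} 1_N$ and $\hat k(\fx) = \tfrac{1}{N-1}\bigl(Y - 1_N \hat\mu(\fx)^{\mathsf T}\bigr)^{\mathsf T}\bigl(Y - 1_N \hat\mu(\fx)^{\mathsf T}\bigr)$ are exactly the sample mean and sample covariance of the rows $Y_1,\dots,Y_N$. Lemma~\ref{lem:estimate0} then yields independence of $\hat\mu$ and $\hat k$, together with $\hat\mu(\fx) \sim \mathcal N\bigl(\mu(\fx),\tfrac{1}{N}(k(\fx)+\sigma^2\mI)\bigr)$ and $\hat k(\fx) \sim \mathcal W\bigl(\tfrac{1}{N-1}(k(\fx)+\sigma^2\mI), N-1\bigr)$, which is the claim.

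\textbf{Obstacle.} There is essentially no hard step; the only thing to be careful about is the bookkeeping in Step~1, verifying that the additive noise does not destroy the Gaussian/i.i.d.\ structure of the rows (in particular, that different columns of $Y_i$ share the common covariance $k(\fx)$ inherited from $f_i$, while the noise contributes the diagonal $\sigma^2 \mI$). Once this row-level distributional identity is in place, the corollary is immediate from Lemma~\ref{lem:estimate0}.
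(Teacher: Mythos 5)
Your proposal is correct and follows exactly the paper's route: the paper likewise observes that the rows of $Y$ are i.i.d.\ draws from $\mathcal N(\mu(\fx), k(\fx)+\sigma^2\mI)$ and then states that the corollary ``directly follows'' Lemma~\ref{lem:estimate0}. Your Step~1, spelling out why the GP draw plus independent observation noise yields that row distribution, is just a more explicit version of the bookkeeping the paper leaves implicit.
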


\begin{cor}\label{cor:wishart1}For any $X\sim \mathcal W(v, n), v\in \R$ and $b>0$, we have 
\begin{align*}
&\Pr[\frac{X}{vn} \geq 1+ 2\sqrt{b} + 2b] \leq e^{-bn}, \;\;\Pr[\frac{X}{vn} \leq 1-2\sqrt{b } ] \leq e^{-bn}.
\end{align*}
\end{cor}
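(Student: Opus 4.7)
The plan is to reduce the claim to a standard concentration inequality for chi-squared random variables. Since $v\in\R$ is a scalar, the Wishart distribution $\mathcal W(v,n)$ collapses to a scaled chi-squared: if $X\sim\mathcal W(v,n)$ with $v>0$ scalar, then $X/v\sim\chi^2_n$. (Strictly, $X$ is by definition the sum of squares of $n$ i.i.d.\ $\mathcal N(0,v)$ variables.) Dividing by $v$ removes the scale, so the two inequalities become
\begin{align*}
\Pr\!\left[\tfrac{Z}{n}\geq 1+2\sqrt{b}+2b\right]\leq e^{-bn},\qquad \Pr\!\left[\tfrac{Z}{n}\leq 1-2\sqrt{b}\,\right]\leq e^{-bn},
\end{align*}
where $Z\sim\chi^2_n$.

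The next step is to invoke the Laurent--Massart inequalities, which state that for $Z\sim\chi^2_n$ and any $u>0$,
\begin{align*}
\Pr[Z-n\geq 2\sqrt{nu}+2u]\leq e^{-u},\qquad \Pr[n-Z\geq 2\sqrt{nu}\,]\leq e^{-u}.
\end{align*}
Setting $u=bn$ gives $2\sqrt{nu}=2n\sqrt{b}$ and $2u=2nb$; dividing both sides of the event by $n$ produces exactly the two bounds stated in the corollary.

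If one does not want to cite Laurent--Massart as a black box, the alternative plan is a direct Chernoff argument using the chi-squared moment generating function $\Ex[e^{tZ}]=(1-2t)^{-n/2}$ for $t<1/2$. For the upper tail, I would write $\Pr[Z\geq n(1+2\sqrt{b}+2b)]\leq (1-2t)^{-n/2}e^{-tn(1+2\sqrt{b}+2b)}$, optimize in $t$ (the optimal choice is of the form $t=\tfrac{1}{2}\bigl(1-\tfrac{1}{1+2\sqrt{b}+2b}\bigr)$, which simplifies cleanly), and check using the inequality $\log(1+x)\leq x - \tfrac{x^2}{2(1+x)}$ that the exponent collapses to $-bn$. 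The lower tail is analogous with a negative $t$.

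The only delicate point I anticipate is the algebra of that optimization: the exponent one obtains is $-\tfrac{n}{2}\bigl[2\sqrt{b}+2b - \log(1+2\sqrt{b}+2b)\bigr]$, and verifying that this quantity is at least $bn$ for every $b>0$ is the crux of the estimate. This is the same elementary inequality that underlies the Laurent--Massart proof, so citing their result is the cleanest route and the one I would use.
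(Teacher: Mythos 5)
Your proposal is correct and follows essentially the same route as the paper: reduce $\mathcal W(v,n)$ with scalar $v$ to a scaled $\chi^2_n$, apply the Laurent--Massart tail bounds with $u=bn$, and divide through by $n$. The additional Chernoff sketch is a fine backup but unnecessary, since the paper likewise cites Laurent--Massart (Lemma 1 of that reference) as a black box.
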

\begin{proof}
Let $X$ be a random variable such that $X\sim \mathcal W(v, n)$. So $\frac{X}{v}$ is distributed according to a chi-squared distribution with $n$ degrees of freedom; namely, $\frac{X}{v}\sim\chi^2(n)$. By Lemma 1 in~\cite{laurent2000adaptive}, we have
\begin{align*}
&\Pr[\frac{X}{v} - n \geq 2\sqrt{na } + 2a] \leq e^{-a}, \;\;\Pr[\frac{X}{v} - n \leq -2\sqrt{na } ] \leq e^{-a}.
\end{align*}
As a result, if $a = bn$,
\begin{align*}
&\Pr[\frac{X}{vn} \geq 1+ 2\sqrt{b} + 2b] \leq e^{-bn}, \;\;\Pr[\frac{X}{vn} \leq 1-2\sqrt{b } ] \leq e^{-bn}.
\end{align*}
\end{proof}

\begin{lem}\label{lem:chisquare}
Let $X\in\R^d$ be a sample from $\mathcal N(w, V)$ and define $Z=(X-w)\T V^{-1}(X-w)$. Then, we have $Z\sim \chi^2(d)$. With probability at least $1-\delta_0$, $Z < d+2\sqrt{d\log{\frac{1}{\delta_0}}} + 2\log{\frac{1}{\delta_0}}$.
\end{lem}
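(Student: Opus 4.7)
The plan is to prove the two assertions in sequence, both of which follow from standard facts.

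First, to establish the distributional claim $Z \sim \chi^2(d)$, I would apply the standard whitening transformation. Since $V$ is a positive definite covariance matrix, it admits a symmetric square root $V^{1/2}$, and the vector $Y := V^{-1/2}(X - w)$ is a linear transformation of the Gaussian $X$ with mean $V^{-1/2}(w - w) = 0$ and covariance $V^{-1/2} V V^{-1/2} = I_d$. Hence $Y \sim \mathcal{N}(0, I_d)$, so its components $Y_1, \dots, Y_d$ are i.i.d.\ standard normals. Then
\[
Z = (X-w)^T V^{-1}(X-w) = \|V^{-1/2}(X-w)\|_2^2 = \sum_{i=1}^d Y_i^2,
\]
which is by definition $\chi^2(d)$.

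Second, for the tail bound, I would invoke exactly the same concentration inequality from Laurent and Massart that the excerpt already used inside Corollary~\ref{cor:wishart1}: for $Z \sim \chi^2(d)$ and any $a > 0$,
\[
\Pr\!\left[Z - d \geq 2\sqrt{d a} + 2a\right] \leq e^{-a}.
\]
Setting $a = \log(1/\delta_0)$ and rearranging yields that with probability at least $1 - \delta_0$,
\[
Z < d + 2\sqrt{d \log(1/\delta_0)} + 2\log(1/\delta_0),
\]
which is the claimed bound.

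Neither step presents a real obstacle: the first is a one-line change of variables, and the second is a direct substitution into a cited concentration inequality that has already been applied in the paper. The only thing to be careful about is making the substitution $a = \log(1/\delta_0)$ explicit so that the deviation term $2\sqrt{d a} + 2a$ translates cleanly into the stated $2\sqrt{d \log(1/\delta_0)} + 2\log(1/\delta_0)$.
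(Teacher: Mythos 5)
Your proposal is correct and follows essentially the same route as the paper: the paper simply cites a reference for the fact that $Z\sim\chi^2(d)$ (where you instead spell out the standard whitening argument $Y=V^{-1/2}(X-w)\sim\mathcal N(0,I_d)$) and then invokes the same Lemma~1 of Laurent and Massart with $a=\log(1/\delta_0)$ for the tail bound. No gaps.
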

\begin{proof}
By ~\cite{slotani1964tolerance}, $Z\sim \chi^2(d)$. The bound on $Z$ follows Lemma 1 in~\cite{laurent2000adaptive}.
\end{proof}

\begin{lem}\label{lem:mu1}
Pick $\delta_1\in(0,1)$ and $\delta_2\in(0,1)$. For any fixed non-negative integer $t<T$, conditioned on the observations $D_t = \{(x_\tau, y_\tau)\}_{\tau=1}^t$, our estimators $\hat \mu_t$ and $\hat k_t$ satisfy 
\begin{align*}
& \Ex[\hat \mu_t(\fx)] = \mu_t(\fx), \;\;
\Ex[\hat k_t(\fx)] = k_t(\fx) + \sigma^2\mI. 
\end{align*}
Suppose $N\geq T+2$. Then, for any fixed inputs $x, z\in \fx$,
\begin{align}
&\Pr\left[\hat \mu_t(x) - \mu_{t}(x) <  \iota_t\sqrt{(k_t(x)+\sigma^2)} \land \hat \mu_t(z) - \mu_{t}(z) >-  \iota_t\sqrt{(k_t(z)+\sigma^2)}  \right] \geq 1-\delta_1, \label{eq:con_mu}\\
&\Pr[\frac{\hat k_t (x)}{k_t(x) + \sigma^2} < 1+ 2\sqrt{b_t} + 2b_t] \geq 1-\delta_2, \;\;
 \Pr[\frac{\hat k_t (x)}{k_t(x) + \sigma^2} > 1-2\sqrt{b_t } ] \geq 1-\delta_2. \label{eq:con_sig}
\end{align}
where $\iota_t = \sqrt{\frac{2\left(N-2 + t+2\sqrt{t\log{\frac{2}{\delta_1}}} + 2\log{\frac{2}{\delta_1}} \right) }{\delta_1N(N-t-2)}}$ and $b_t = \frac{1}{N-t-1}\log\frac{1}{\delta_2}$.
\end{lem}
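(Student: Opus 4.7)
My plan is to exploit the Wishart conditioning structure of the sample covariance so that every quantity in sight has a closed-form distribution. Partition $\hat k(\fx)$ so that the first block of rows/columns corresponds to $\vx_t$. A classical Wishart-partition identity (the multivariate "regression--error" decomposition) says that the coefficient $\hat A := \hat k(x,\vx_t)\hat k(\vx_t)^{-1}$ is, conditional on $\hat k(\vx_t)$, matrix-Gaussian with mean $A^* := k(x,\vx_t)(k(\vx_t)+\sigma^2\mI)^{-1}$ and row covariance $\tfrac{k_t(x)+\sigma^2}{N-1}\hat k(\vx_t)^{-1}$; independently of this, the Schur complement $\hat k(x,x)-\hat k(x,\vx_t)\hat k(\vx_t)^{-1}\hat k(\vx_t,x)$ is $\mathcal W\!\bigl(\tfrac{k_t(x)+\sigma^2}{N-1},\,N{-}t{-}1\bigr)$. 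Combining this with the independence of $\hat\mu$ and $\hat k$ from Corollary~\ref{lem:estimate1} yields, by the tower rule, $\Ex[\hat k_t(x)]=k_t(x)+\sigma^2$ and $\Ex[\hat\mu_t(x)]=\mu_t(x)$. The bound on $\hat k_t(x)$ is then immediate: by the Schur identity $\hat k_t(x)$ is a scaled $\chi^2$ with $N-t-1$ degrees of freedom and mean $k_t(x)+\sigma^2$, so Corollary~\ref{cor:wishart1} with $n=N-t-1$ and $b=b_t$ gives the two-sided statement at confidence $1-\delta_2$.

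For the mean bound I would use the algebraic decomposition
\begin{align*}
\hat\mu_t(x)-\mu_t(x)=\underbrace{[\hat\mu(x)-\mu(x)]-\hat A[\hat\mu(\vx_t)-\mu(\vx_t)]}_{G}+\underbrace{(\hat A-A^*)(\vy_t-\mu(\vx_t))}_{U},
\end{align*}
noting that, conditional on $\hat k$, $G$ is Gaussian with mean zero while $U$ is measurable. A short computation collapses the conditional variance to $\Var[G\mid\hat k]=\tfrac{1}{N}\bigl(k_t(x)+\sigma^2+(\hat A-A^*)(k(\vx_t)+\sigma^2\mI)(\hat A-A^*)^\top\bigr)$, and the Wishart inverse-moment $\Ex[\hat k(\vx_t)^{-1}]=\tfrac{N-1}{N-t-2}(k(\vx_t)+\sigma^2\mI)^{-1}$ lets me evaluate both $\Ex[G^2]$ and $\Ex[U^2]$ in closed form; since $\Ex[G\mid\hat k]=0$ the two contributions are orthogonal and add. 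A one-sided Markov inequality on $(\hat\mu_t(x)-\mu_t(x))^2/(k_t(x)+\sigma^2)$ at level $\delta_1/2$ handles the training-side randomness, and Lemma~\ref{lem:chisquare} applied at level $\delta_1/2$ to the $\chi^2(t)$ quantity $W:=(\vy_t-\mu(\vx_t))^\top(k(\vx_t)+\sigma^2\mI)^{-1}(\vy_t-\mu(\vx_t))$ controls the $\vy_t$-dependent factor, producing the $t+2\sqrt{t\log(2/\delta_1)}+2\log(2/\delta_1)$ term inside $\iota_t$. The same Markov step, applied separately at $x$ in the upper direction and at $z$ in the lower direction, gives the two one-sided events in the claim; a final union bound over the two Markov failures and the $W$-tail failure assembles the joint statement at confidence $1-\delta_1$.

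The main obstacle is the $\hat\mu_t$ bound: $\hat A$ is a nonlinear, random transformation of $\hat k(\vx_t)$ that is correlated with $\hat k(\vx_t)^{-1}$, so naive conditioning loses either Gaussianity or a clean variance expression. The $G+U$ split together with the identity $V(\hat k)=k_t(x)+\sigma^2+(\hat A-A^*)(k(\vx_t)+\sigma^2\mI)(\hat A-A^*)^\top$ are what keep both pieces tractable; the rest is careful bookkeeping so that the three failure events (training-side Markov, the two $\hat k_t$ tails from Corollary~\ref{cor:wishart1}, and the $\chi^2(t)$ tail of $W$) each sit at $\delta_1/2$ or $\delta_2$, and the desired joint probability statement for both $\hat\mu_t$ and $\hat k_t$ survives the final union bound.
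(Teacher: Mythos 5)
Your proposal is correct and follows essentially the same route as the paper: the Wishart partition identity (Eaton's Proposition~8.7) gives the scaled $\chi^2(N{-}t{-}1)$ law of $\hat k_t(x)$ and hence Eq.~\eqref{eq:con_sig} via Corollary~\ref{cor:wishart1}, while the mean bound comes from computing $\Var[\hat\mu_t(x)]$ through the conditional matrix-normal law of $\hat k(x,\vx_t)\hat k(\vx_t)^{-1}$ and the inverse-Wishart moment, followed by Chebyshev and a $\chi^2(t)$ tail bound on $(\vy_t-\mu(\vx_t))\T(k(\vx_t)+\sigma^2\mI)^{-1}(\vy_t-\mu(\vx_t))$. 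Your $G+U$ split is just a reorganization of the paper's law-of-total-variance computation (and your bookkeeping in fact yields the variance with an $N-2+N K_{\vx_t,\vy_t}$ numerator, arguably more carefully than the paper's displayed $N-2+K_{\vx_t,\vy_t}$). Two minor points to close the argument exactly as stated: the one-sided Chebyshev at the advertised constant needs the observation that $\hat\mu_t(x)-\mu_t(x)$ is symmetric about zero (it is conditionally Gaussian and centered, which your decomposition also delivers since $(\hat A-A^*,\hat\mu-\mu)$ is sign-symmetric given $\hat k(\vx_t)$), and the failure budget must be split as $\delta_1/4+\delta_1/4+\delta_1/2$ rather than three events each at $\delta_1/2$ to land on the lemma's $\iota_t$.
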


\begin{proof} 
By assumption, all rows of the observation $Y = [\bar y_{ij}]_{i\in[N], j\in[M]}$ are sampled i.i.d. from $\mathcal N(\mu(\fx), k(\fx) + \sigma^2\mI)$.
By Corollary~\ref{lem:estimate1}, 
\begin{align*}
&\hat \mu(\fx) \sim \mathcal N(\mu, \frac{1}{N}(k(\fx)+\sigma^2\mI)), \;\;
\hat k(\fx) \sim \mathcal W(\frac{1}{N-1}(k(\fx)+\sigma^2\mI), N-1).
\end{align*}
By Proposition 8.7 in~\cite{Eaton07}, we have 
\begin{align*}
\hat k(x,x') - \hat k(x, \vx_t){\hat k(\vx_t, \vx_t)}^{-1} \hat k(\vx_t, x') \sim \mathcal W(\frac{1}{N-1}(k_t(x, x') + \sigma^2\id_{x=x'}), N- t -1).
\end{align*}
Hence, the estimate $\hat k_t$ satisfy 
\begin{align}
\hat k_t (x) \sim \mathcal W(\frac{1}{N-t-1}(k_t(x) +\sigma^2), N - t -1) \label{eq:kt}
\end{align}
Clearly, $\Ex[\hat k_t (x)] = k_t(x) + \sigma^2$. Now it is easy to show Eq.~\eqref{eq:con_sig}. By Corollary~\ref{cor:wishart1}, for any fixed $t\in [T]\cup{0}$ and $x$, $\forall \frac14\geq b_t > 0$,
\begin{align}
&\Pr[\frac{\hat k_t (x)}{k_t(x) + \sigma^2} \geq 1+ 2\sqrt{b_t} + 2b_t] \leq e^{-b_t(N-t-1)}, \nonumber \\
& \Pr[\frac{\hat k_t (x)}{k_t(x) + \sigma^2} \leq 1-2\sqrt{b_t } ] \leq e^{-b_t(N-t-1)}. \label{eq:ktbound}
\end{align}

where $b_t = \frac{1}{N-t-1}\log\frac{1}{\delta_2}>0$ and $\delta_2 \in(0,1)$.  Thus, we have shown Eq.~\eqref{eq:con_sig}.

We next prove the second half of the results for $\hat{\mu_t}$ in Eq.~\eqref{eq:con_mu}. We use the shorthand $S=\frac{1}{N-1}(k(\fx)+\sigma^2\mI)$. By definition of the Wishart distributions in~\cite{Eaton07} (Definition 8.1), there exist random vectors $X_1, \cdots, X_{N-1}\in \R^M$ sampled iid from $\mathcal N(0, S), \forall i = 1,\cdots, N-1$, and $\hat k(\fx) = \sum_{i=1}^{n-1} X_i X_i\T$. We denote $X\in \R^{(N-1)\times M}$ as a matrix whose $i$-th row is $X_i$. Clearly, $\hS = X\T X$ and $\hat k({\fx_a, \fx_b}) = X\T_{\cdot, a}X_{\cdot, b}, \forall a, b\subseteq [M]$. Let the indices of $\vx_t$ in $\fx$ be $\Theta_t\subseteq [M]$  and the index of $x$ in $\fx$ be $\theta\in[M]$. Thus we have $\vx_t = \fx_{\Theta_t}$ and $x = \fx_{\theta}$.

Conditional on $\hat \mu(\vx_t)$ and $X_{\cdot, \Theta_t}$, the term $\hat k(x, \vx_t) \hat k(\vx_t)^{-1} (\vy_t - \hat \mu(\vx_t))$ is a weighted sum of independent Gaussian variables, because $X_{\cdot, \theta}\T$ consists of independent Gaussian variables and  $\hat k(x, \vx_t) \hat k(\vx_t)^{-1} (\vy_t - \hat \mu(\vx_t))= X_{\cdot, \theta}\T P$ where  
$P = X_{\cdot, \Theta_t} \left(X_{\cdot, \Theta_t}\T X_{\cdot, \Theta_t}\right)^{-1} (\vy_t - \hat \mu(\vx_t)).$
Recall that $X_i \sim \mathcal N(0, S)$; hence, we have 
$$X_{\cdot, \theta} \mid X_{\cdot, \Theta_t} \sim \mathcal N(X_{\cdot, \Theta_t} S_{\Theta_t}^{-1}S_{\Theta_t, \theta}, \mI_{N-1}\otimes S_{\theta \mid \Theta_t}),$$
where $S_{\theta\mid \Theta_t} = S_{\theta} - S_{\theta, \Theta_t} S_{\Theta_t}^{-1}S_{\theta, \Theta_t}\T$. 
As a result, the Gaussian variable $X_{\cdot, \theta}\T P$ has mean 
$$\Ex[X_{\cdot, \theta}\T P \mid \hat \mu(\vx_t), X_{\cdot, \Theta_t}] = S_{\theta, \Theta_t} S_{\Theta_t}^{-1}(\vy_t - \hat \mu(\vx_t))$$
and variance
$$\Var[X_{\cdot, \theta}\T P \mid \hat \mu(\vx_t), X_{\cdot, \Theta_t}] = (\vy_t - \hat \mu(\vx_t))\T \hat k(\vx_t)^{-1} (\vy_t - \hat \mu(\vx_t)) S_{\theta\mid \Theta_t}.$$ 

By independence between $\hS$ and $\hat \mu(\fx)$ shown in Corollary~\ref{lem:estimate1},  we can show that $\hat k(x, \vx_t)$ and $\hat \mu(x)$ are independent conditional on $\hat \mu(\vx_t)$ and $\hat k(\vx_t)$, by noting that 
\begin{align*}
&p(\hmu, \hS) = p(\hmu)p(\hS) \\
\Rightarrow &p(\hat \mu(\vx_t\cup \{x\}), \hat k(\vx_t\cup \{x\})) = p(\hat \mu(\vx_t\cup \{x\}))p(\hat k(\vx_t\cup \{x\}))  \\
\Rightarrow &p(\hat \mu(\vx_t\cup \{x\}), \hat k(\vx_t\cup \{x\})) = p(\hat \mu(\vx_t\cup \{x\}) \mid \hat k(\vx_t))p(\hat k(\vx_t\cup \{x\}) \mid \hat \mu(\vx_t)) \\
\Rightarrow &p(\hat \mu(x),  \hat k(x), \hat k(x, \vx_t) \mid \hat \mu(\vx_t), \hat k(\vx_t)) =  p(\hat \mu(x)\mid \hat \mu(\vx_t), \hat k(\vx_t))p(\hat k(x), \hat k(x, \vx_t) \mid \hat \mu(\vx_t),\hat k(\vx_t)) \\
\Rightarrow & p(\hat \mu(x), \hat k(x,\vx_t) \mid \hat \mu(\vx_t), \hat k(\vx_t)) =p(\hat \mu(x)\mid \hat \mu(\vx_t),\hat k(\vx_t)))p(\hat k(x,\vx_t) \mid\hat \mu(\vx_t)), \hat k(\vx_t)).
\end{align*} 
Hence, $\hat \mu(x)$ and $X_{\cdot, \theta}\T P =\hat k(x, \vx_t) \hat k(\vx_t)^{-1} (\vy_t - \hat \mu(\vx_t))$ are independent conditional on $\hat \mu(\vx_t)$ and $\hat k(\vx_t)$.
Moreover, $X_{\cdot, \theta}\T P$ is dependent on $X_{\cdot, \Theta_t}$ only through $\hat k(\vx_t) = X_{\cdot, \Theta_t}\T X_{\cdot, \Theta_t}$; hence, we have 
\begin{align}
\hat \mu_t(x) \mid \hat \mu(\vx_t), \hat k(\vx_t) \sim \mathcal N(\bar \mu, \bar S), \label{eq:mutnorm}
\end{align}
By linearity of expectation and the  Bienaym\'e formula, 
\begin{align}
\bar \mu &=\Ex[\hat \mu(x)\mid\hat \mu(\vx_t)] + k(x, \vx_t) (k(\vx_t)+\sigma^2\mI)^{-1}(\vy_t - \hat \mu(\vx_t))\label{eqn:barmu}\\
& = \mu(x) + k(x, \vx_t) (k(\vx_t)+\sigma^2\mI)^{-1}(\vy_t - \mu(\vx_t)) \nonumber\\
& = \mu_{t}(x), \nonumber\\
\bar S &=\Var[\hat \mu(x)\mid\hat \mu(\vx_t)] + \frac{(\vy_t - \hat \mu(\vx_t))\T \hat k(\vx_t)^{-1} (\vy_t - \hat \mu(\vx_t))(k_t(x)+\sigma^2)}{n-1}, \label{eqn:barmu2} \\ 
& = \frac{ k_t(x) +\sigma^2}{N}+ \frac{(\vy_t - \hat \mu(\vx_t))\T \hat k(\vx_t)^{-1} (\vy_t - \hat \mu(\vx_t)) (k_t(x)+\sigma^2)}{N-1}. \nonumber
\end{align}
 In Eq.~\eqref{eqn:barmu} and Eq.~\eqref{eqn:barmu2}, we use the conditional Gaussian distribution for $\hat \mu(x)$ as follows 
 $$\hat \mu(x) \mid \hat \mu(\vx_t) \sim \mathcal N(\mu(x)+k(x,\vx_t)(k(\vx_t) + \sigma^2\mI)^{-1}(\hat \mu(\vx_t) - \mu(\vx_t)), \frac{k_t(x) + \sigma^2}{N}).$$
By the law of total expectation, 
\begin{align}
\Ex[\hat \mu_t(x)] &= \Ex\left[ \Ex[ \hat \mu_t(x)  \mid \hat \mu(\vx_t), \hat k(\vx_t)]\right]  =  \mu_{t}(x). \label{eq:mut}
\end{align}
By the law of total variance,
\begin{align*}
\Var[\hat \mu_t(x)] &= \Ex\left[ \Var[ \hat \mu_t(x)  \mid \hat \mu(\vx_t), \hat k(\vx_t)]\right] + \Var\left[\Ex[\hat \mu_t(x) \mid \hat \mu(\vx_t), \hat k(\vx_t)]\right] \\
& = \Ex\left[\bar S \right]+ \Var\left[\bar \mu\right] \\
& =\frac{\left(N-2 + (\vy_t - \mu(\vx_t))\T(k(\vx_t)+\sigma^2\mI)^{-1}(\vy_t - \mu(\vx_t)) \right) (k_t(x)+\sigma^2)}{N(N-t-2)} \\
& = \frac{\left(N-2 + K_{\vx_t, \vy_t} \right) (k_t(x)+\sigma^2)}{N(N-t-2)}.
\end{align*}
where $K_{\vx_t, \vy_t} = (\vy_t - \mu(\vx_t))\T(k(\vx_t)+\sigma^2\mI)^{-1}(\vy_t - \mu(\vx_t))$.

Notice that $\hat \mu_t(x) \mid \hat \mu(\vx_t), \hat k(\vx_t) $ in Eq.~\eqref{eq:mutnorm} is a normal distribution centered at $\mu_t(x)$ regardless of the conditional distribution. So the distribution of $\hat \mu_t(x)$ must be symmetric with a center at $\mu_t(x)$. Hence, applying Chebyshev's inequality, we have 
\begin{align*}
&\Pr\left[\hat \mu_t(x) - \mu_{t}(x) <  \sqrt{\frac{\left(N-2 + K_{\vx_t, \vy_t} \right) (k_t(x)+\sigma^2)}{2\delta_1'N(N-t-2)}} \right] \geq 1 - \delta_1', \\ 
&\Pr\left[\hat \mu_t(x) - \mu_{t}(x) > -  \sqrt{\frac{\left(N-2 + K_{\vx_t, \vy_t} \right) (k_t(x)+\sigma^2)}{2\delta_1'N(N-t-2)}} \right] \geq 1- \delta_1'.
\end{align*}
Notice that the randomness of $K_{\vx_t, \vy_t}$ is from $\vy_t$ and $\vy_t\sim \mathcal N(\mu(\vx_t), k(\vx_t) + \sigma^2\mI)$. So we can further bound $K_{\vx_t, \vy_t} \leq t+2\sqrt{t\log{\frac{1}{\delta_1''}}} + 2\log{\frac{1}{\delta_1''}}$ with probability at most $\delta_1''$ by Corollary~\ref{cor:wishart1}. Hence, if we set $\delta_1' = \frac{\delta_1}{4}$ and $\delta_1'' = \frac{\delta_1}{2}$, with probability at least $1-\delta_1$, we have $$\hat \mu_t(x) - \mu_{t}(x) <  \iota_t\sqrt{(k_t(x)+\sigma^2)} \land \hat \mu_t(z) - \mu_{t}(z) >-  \iota_t\sqrt{(k_t(z)+\sigma^2)},$$ for fixed inputs $x, x'$.

Combining this result and the results in Eq.~\eqref{eq:kt}, Eq.~\eqref{eq:ktbound}, Eq.~\eqref{eq:mut}, we proved the lemma.
\end{proof}

\begin{lem}[Lemma 1 in the paper]
Pick probability $\delta\in(0,1)$. For any nonnegative integer $t< T$, conditioned on the observations $D_t = \{(x_\tau, y_\tau)\}_{\tau=1}^t$, the estimators in Eq.~\eqref{app_eqn:mu} and Eq.~\eqref{app_eqn:sigma} satisfy $\Ex[\hat \mu_t(\fx)] = \mu_t(\fx),\Ex[\hat k_t(\fx)] = k_t(\fx) + \sigma^2\mI. $
Moreover, if the size of the training dataset satisfy $N\geq T + 2$, then for any input $x\in \fx$, with probability at least $1-\delta$, both
\begin{align*}
|\hat \mu_t(x) - \mu_{t}(x)|^2 <  a_t(k_t(x)+\sigma^2) \;\text{ and } \;1-2\sqrt{b_t}< \hat k_t (x)/(k_t(x) + \sigma^2) < 1+ 2\sqrt{b_t} + 2b_t
\end{align*}
hold, where $a_t = \frac{4\left(N-2 + t+2\sqrt{t\log{(4/\delta)}} + 2\log{(4/\delta)} \right) }{\delta N(N-t-2)}$ and $b_t = \frac{1}{N-t-1}\log\frac{4}{\delta}$. 
\end{lem}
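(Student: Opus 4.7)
My plan is to decompose the lemma into two independent pieces, a concentration bound for $\hat k_t(x)$ and one for $\hat \mu_t(x) - \mu_t(x)$, derive each from standard multivariate Gaussian/Wishart facts, and then union-bound. First, since the $N$ rows of the completed observation matrix $Y$ are i.i.d.\ $\mathcal N(\mu(\fx),k(\fx)+\sigma^2\mI)$, the classical sample-mean/sample-covariance decomposition (Anderson, Thm.~3.3.2 and Cor.~7.2.3) gives that $\hat \mu(\fx)$ and $\hat k(\fx)$ are independent with $\hat\mu(\fx)\sim\mathcal N(\mu(\fx),\tfrac{1}{N}(k(\fx)+\sigma^2\mI))$ and $\hat k(\fx)\sim\mathcal W(\tfrac{1}{N-1}(k(\fx)+\sigma^2\mI),N-1)$. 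The Schur-complement property of the Wishart (Eaton, Prop.~8.7) then yields $\hat k_t(x)\sim\mathcal W(\tfrac{1}{N-t-1}(k_t(x)+\sigma^2),N-t-1)$, which immediately gives $\Ex[\hat k_t(x)]=k_t(x)+\sigma^2$.

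For the deviation of $\hat k_t(x)$, I would use that a scalar Wishart is a scaled chi-squared and apply the Laurent--Massart tail bound to obtain, with probability $\geq 1-\delta/2$ over the chi-squared variable with $N-t-1$ degrees of freedom, the two-sided bound in the statement with $b_t=\tfrac{1}{N-t-1}\log(4/\delta)$ after a reparameterization.

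The main obstacle is the bound on $\hat\mu_t(x)-\mu_t(x)$, since $\hat\mu_t$ is a nonlinear function of the estimators. My plan is to represent the Wishart via i.i.d.\ Gaussian ``data vectors'' $X_1,\dots,X_{N-1}\sim\mathcal N(0,\tfrac{1}{N-1}(k(\fx)+\sigma^2\mI))$ so that $\hat k(\fx)=X^\top X$, write $\hat k(x,\vx_t)\hat k(\vx_t)^{-1}(\vy_t-\hat\mu(\vx_t))$ as $X_{\cdot,\theta}^\top P$ with $P$ depending only on $X_{\cdot,\Theta_t}$ and $\vy_t,\hat\mu(\vx_t)$, and condition on $(\hat\mu(\vx_t),X_{\cdot,\Theta_t})$. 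Gaussian conditioning of $X_{\cdot,\theta}$ on $X_{\cdot,\Theta_t}$ shows this conditional distribution is normal, and a short calculation (which I would verify by also checking the independence structure $p(\hat\mu,\hat k)=p(\hat\mu)p(\hat k)$ to pull $\hat\mu(x)$ through) gives $\hat\mu_t(x)\mid\hat\mu(\vx_t),\hat k(\vx_t)\sim\mathcal N(\mu_t(x),\bar S)$ for some data-dependent variance $\bar S$. The crucial point is that the conditional mean equals $\mu_t(x)$ regardless of the conditioning variables, so $\hat\mu_t(x)$ is marginally symmetric about $\mu_t(x)$ and $\Ex[\hat\mu_t(x)]=\mu_t(x)$.

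Given this symmetry, I would apply Chebyshev to $\hat\mu_t(x)-\mu_t(x)$ using its total variance, which by the law of total variance and the fact that the conditional mean is constant equals $\Ex[\bar S]$. A direct computation gives
\[
\Var[\hat\mu_t(x)]=\frac{\bigl(N-2+K_{\vx_t,\vy_t}\bigr)(k_t(x)+\sigma^2)}{N(N-t-2)},\qquad K_{\vx_t,\vy_t}=(\vy_t-\mu(\vx_t))^\top(k(\vx_t)+\sigma^2\mI)^{-1}(\vy_t-\mu(\vx_t)).
\]
Since $K_{\vx_t,\vy_t}\sim\chi^2(t)$, Laurent--Massart bounds it by $t+2\sqrt{t\log(4/\delta)}+2\log(4/\delta)$ with probability $\geq 1-\delta/4$, and Chebyshev (with failure probability $\delta/4$) then yields the stated $a_t$. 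A final union bound over the three high-probability events (deviation of $\hat\mu_t$, deviation of $K_{\vx_t,\vy_t}$, and the two-sided bound on $\hat k_t$) combines to total probability $\delta$, completing the proof.
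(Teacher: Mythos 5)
Your proposal is correct and follows essentially the same route as the paper's own proof: the same Anderson/Eaton Wishart facts for the distributions of $\hat\mu$, $\hat k$, and $\hat k_t$, the same Laurent--Massart tail for the scaled chi-squared, the same Gaussian data-vector representation with conditioning on $(\hat\mu(\vx_t),X_{\cdot,\Theta_t})$ to get symmetry of $\hat\mu_t(x)$ about $\mu_t(x)$, and the same law-of-total-variance plus Chebyshev plus $\chi^2(t)$ bound on $K_{\vx_t,\vy_t}$, finished by a union bound. The only differences are immaterial bookkeeping in how the failure probability $\delta$ is split among the events.
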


\begin{proof}
By a union bound on Eq.~\eqref{eq:ktbound} of Lemma~\ref{lem:mu1}, we have
\begin{align*}
\Pr\left[1-2\sqrt{b_t}< \hat k_t (x)/(k_t(x) + \sigma^2) < 1+ 2\sqrt{b_t} + 2b_t\right] \geq 1- 2e^{-b_t(N-t-1)}
\end{align*}
where $b_t = \frac{1}{N-t-1}\log\frac{1}{\delta_2}>0$ and $\delta_2 \in(0,1)$. By Lemma~\ref{lem:mu1}, we also have 
$$\Pr\left[\hat \mu_t(x) - \mu_{t}(x) <  \iota_t\sqrt{(k_t(x)+\sigma^2)} \land \hat \mu_t(z) - \mu_{t}(z) >-  \iota_t\sqrt{(k_t(z)+\sigma^2)}  \right] \geq 1-\delta_1,$$
where  $\iota_t = \sqrt{\frac{2\left(N-2 + t+2\sqrt{t\log{\frac{2}{\delta_1}}} + 2\log{\frac{2}{\delta_1}} \right) }{\delta_1N(N-t-2)}}$. We get the conclusion of this lemma by setting $a_t=\iota_t, \delta_1=\delta_2=\frac{
\delta}{2}$, and $z=x$.
\end{proof}

\begin{cor}[Corollary of Bernoulli's inequality]
\label{cor:math} For any $0\leq x\leq c$ and $a> 0$, we have $x \leq \frac{c\log(1+\frac{ax}{c})}{\log(1+a)}$.
\end{cor}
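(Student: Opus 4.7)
The statement is a one-line consequence of the concavity of $\log$, and the plan is to recognize it as a chord-versus-graph inequality rather than to invoke Bernoulli's inequality directly. Define $g(t) = \log(1+at)$ on $t \in [0,1]$. Since $a > 0$, the function $g$ is concave. Its chord between $t=0$ and $t=1$ is the linear map $t \mapsto t \log(1+a)$ (using $g(0)=0$ and $g(1)=\log(1+a)$), so concavity gives $g(t) \ge t \log(1+a)$ for all $t \in [0,1]$.

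The plan is then to substitute $t = x/c$, which lies in $[0,1]$ by the hypothesis $0 \le x \le c$. This yields
\[
\log\!\left(1 + \frac{ax}{c}\right) \;\ge\; \frac{x}{c}\,\log(1+a).
\]
Since $\log(1+a) > 0$ (because $a > 0$), dividing both sides by $\log(1+a)$ and multiplying by $c > 0$ preserves the inequality and produces
\[
\frac{c\,\log(1+ax/c)}{\log(1+a)} \;\ge\; x,
\]
which is exactly the claim.

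I expect no real obstacle: the only care needed is to check that $c > 0$ (guaranteed by $0 \le x \le c$ together with the tacit nontriviality $c > 0$, since otherwise the inequality is vacuous at $x=0$) and that $a > 0$, so that $\log(1+a) > 0$ and the division step is valid. As a sanity check, equality holds at the endpoints $x=0$ and $x=c$, consistent with the chord touching the graph of $g$ there, and the inequality is strict in between by strict concavity of $\log$.
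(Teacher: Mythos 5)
Your proof is correct and is essentially the paper's argument: the chord inequality $\log(1+at)\ge t\log(1+a)$ for $t\in[0,1]$ is exactly the logarithm of Bernoulli's inequality $(1+a)^{x/c}\le 1+\frac{ax}{c}$, which is what the paper invokes before rearranging. The only difference is cosmetic---you derive the key inequality from concavity of $\log$ rather than citing Bernoulli by name.
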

\begin{proof}
By Bernoulli's inequality, $(1+a)^{\frac{x}{c}} \leq 1+\frac{ax}{c}$. Because $\log(1+a) > 0$, by rearranging, we have $x \leq \frac{c\log(1+\frac{ax}{c})}{\log(1+a)}$.
\end{proof}

\begin{lem}
\label{lem:math} For any $0\leq x\leq c$ and $a> 0$, we have $\sqrt{x} < \sqrt{x+a} - \frac{a}{2\sqrt{c+a}}$.
\end{lem}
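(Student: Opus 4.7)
The plan is to rearrange the desired inequality into the equivalent form $\sqrt{x+a} - \sqrt{x} > \frac{a}{2\sqrt{c+a}}$ and then bound the left-hand side from below via the standard conjugate-radical identity.

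More concretely, first I would multiply and divide by the conjugate to obtain
\[
\sqrt{x+a} - \sqrt{x} \;=\; \frac{(\sqrt{x+a}-\sqrt{x})(\sqrt{x+a}+\sqrt{x})}{\sqrt{x+a}+\sqrt{x}} \;=\; \frac{a}{\sqrt{x+a}+\sqrt{x}},
\]
which is valid since $a>0$ and the denominator is strictly positive (it is at least $\sqrt{a}>0$). The target inequality then reduces to showing $\sqrt{x+a}+\sqrt{x} < 2\sqrt{c+a}$.

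For this last step I would use the hypothesis $0 \leq x \leq c$ together with $a>0$. Monotonicity of the square root gives $\sqrt{x+a} \leq \sqrt{c+a}$ and $\sqrt{x} \leq \sqrt{c} < \sqrt{c+a}$, where the second inequality is strict precisely because $a>0$. Adding these yields $\sqrt{x+a}+\sqrt{x} < 2\sqrt{c+a}$, and inverting this bound inside $\frac{a}{\sqrt{x+a}+\sqrt{x}}$ gives exactly $\sqrt{x+a}-\sqrt{x} > \frac{a}{2\sqrt{c+a}}$, which rearranges to the claim.

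There is no real obstacle here; the only thing to be slightly careful about is preserving strict inequality, which comes for free from $\sqrt{x} < \sqrt{c+a}$ (strict since $a>0$), so the argument works uniformly for all $0 \leq x \leq c$, including the boundary case $x=c$.
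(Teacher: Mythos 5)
Your proof is correct. It rests on the same underlying fact as the paper's, but packages it differently and, arguably, more cleanly. The paper's proof quotes the numerical inequality $\frac{1}{\sqrt{n}} < 2\sqrt{n} - 2\sqrt{n-1}$ for $n\geq 1$ (with a citation rather than a derivation), substitutes $n = \frac{x}{a}+1$, and only at the last step uses $x\leq c$ to replace $\frac{a}{\sqrt{a+x}}$ by $\frac{a}{\sqrt{a+c}}$. Your version derives everything from the conjugate identity $\sqrt{x+a}-\sqrt{x} = \frac{a}{\sqrt{x+a}+\sqrt{x}}$ and then bounds the denominator by $2\sqrt{c+a}$ in one shot; note that the cited inequality is itself equivalent to $\sqrt{n-1}<\sqrt{n}$ via the same conjugate trick, so the two arguments are mathematically the same manipulation in a different order. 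What your route buys is self-containedness (no appeal to an external numerical fact) and an explicit accounting of where strictness comes from, namely $\sqrt{x}\leq\sqrt{c}<\sqrt{c+a}$; the paper's chain is also strict, but only because its middle inequality is strict even when $x=c$, a point it leaves implicit. Both are complete and valid.
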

\begin{proof}
Numerically, for any $n\geq 1$, $\frac{1}{\sqrt{n}} < 2\sqrt{n} - 2\sqrt{n-1}$~\cite{wolfram}. Let $n = \frac{x}{a} + 1$. Then, we have
\begin{align*}
\frac{1}{\sqrt{\frac{x}{a} + 1}} &< 2\sqrt{\frac{x}{a} + 1} - 2\sqrt{\frac{x}{a}}\\
\frac{a}{\sqrt{a+c}} < \frac{a}{\sqrt{a+x}}&< 2\sqrt{x + a} - 2\sqrt{x}\\
\sqrt{x} &< \sqrt{x + a} - \frac{a}{2\sqrt{a+c}}. \qedhere
\end{align*}
\end{proof}
\begin{lem}[Lemma 5.3 of \cite{srinivas2009gaussian}] \label{lem:rho}
Let $\vx_T = [x_t]_{t=1}^T \subseteq \fx$. The mutual information between 
the function values $f(\vx_T)$ and their observations $\vy_T = [y_t]_{t=1}^T$ satisfy
$$I(f(\vx_T);\vy_T) = \frac12 \log\det(\mI + \sigma^{-2} k(\vx_t))= \frac12 \sum_{t=1}^k\log(1+\sigma^{-2} k_{t-1}(x_t)).$$
\end{lem}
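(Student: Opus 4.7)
The plan is to prove the two equalities separately, exploiting the joint Gaussianity of $f(\vx_T)$ and $\vy_T$ under the GP assumption.

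For the first equality, I would start from the identity $I(f(\vx_T);\vy_T) = H(\vy_T) - H(\vy_T\mid f(\vx_T))$. Since $f(\vx_T) \sim \mathcal N(\mu(\vx_T), k(\vx_T))$ and $\vy_T \mid f(\vx_T) \sim \mathcal N(f(\vx_T), \sigma^2 \mI)$, marginalizing gives $\vy_T \sim \mathcal N(\mu(\vx_T), k(\vx_T) + \sigma^2 \mI)$. Plugging in the standard Gaussian entropy formula $H(\mathcal N(m,\Sigma)) = \tfrac{1}{2}\log\bigl((2\pi e)^T|\Sigma|\bigr)$ for both terms, the $(2\pi e)^T$ factors cancel, and we are left with
\[
I(f(\vx_T);\vy_T) = \tfrac{1}{2}\log|k(\vx_T) + \sigma^2 \mI| - \tfrac{1}{2}\log|\sigma^2 \mI| = \tfrac{1}{2}\log|\mI + \sigma^{-2}k(\vx_T)|.
\]

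For the second equality, I would apply the chain rule of mutual information conditionally: $I(f(\vx_T);\vy_T) = \sum_{t=1}^T I(f(\vx_T); y_t \mid \vy_{t-1})$. Since $y_t$ depends on $f(\vx_T)$ only through $f(x_t)$, each term equals $I(f(x_t); y_t \mid \vy_{t-1})$. Conditioned on $\vy_{t-1}$, the variable $f(x_t)$ is Gaussian with variance $k_{t-1}(x_t)$ (the posterior kernel given $D_{t-1}$, as defined in the paper), and $y_t = f(x_t) + \epsilon_t$ with independent noise $\epsilon_t \sim \mathcal N(0,\sigma^2)$. Running the same scalar $H(y_t\mid\vy_{t-1}) - H(y_t\mid f(x_t), \vy_{t-1})$ computation as above yields $\tfrac{1}{2}\log(1+\sigma^{-2}k_{t-1}(x_t))$ per term.

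The main obstacle, such as it is, is making sure the conditional independence is handled cleanly: one needs to justify that $I(f(\vx_T); y_t \mid \vy_{t-1}) = I(f(x_t); y_t \mid \vy_{t-1})$, which follows because $y_t$ is conditionally independent of $f(\vx_T \setminus \{x_t\})$ given $f(x_t)$ (the noise is independent of everything else). The two resulting expressions must then match the block-determinant expansion of $\log|\mI + \sigma^{-2}k(\vx_T)|$, which can be verified directly through the Schur complement identity applied iteratively to the leading principal minors, yielding the same telescoping sum — serving as an independent sanity check on the derivation.
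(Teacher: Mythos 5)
Your proof is correct. The paper does not prove this lemma at all---it is imported verbatim as Lemma~5.3 of the cited reference \cite{srinivas2009gaussian}---and your argument reconstructs essentially the standard proof from that source: the first equality via the Gaussian entropy formula applied to $H(\vy_T)-H(\vy_T\mid f(\vx_T))$, and the second via the chain rule, where the original reference phrases the telescoping through $H(\vy_T)=\sum_t H(y_t\mid \vy_{t-1})$ rather than through conditional mutual informations, but the computation is identical. Your handling of the conditional independence of $y_t$ from $f(\vx_T\setminus\{x_t\})$ given $f(x_t)$ is the right justification for the per-step reduction, so no gap remains.
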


\begin{thm} \label{app_thm:regret}
Assume there exist constant $c \geq \max_{x\in\fx}k(x)$ and a training dataset is available whose size is $N\geq 4\log\frac{6}{\delta} + T + 2$. Define $$\iota_{t-1} = \sqrt{\frac{6\left(N-3 + t+2\sqrt{t\log{\frac{6}{\delta}}} + 2\log{\frac{6}{\delta}} \right) }{\delta N(N-t-1)}}, \;b_{t-1} =\frac{1}{N-t}\log\frac{6}{\delta},\; \text{ for any } t\in[T], $$  and $\rho_T = \underset{A\in\fx, |A|=T}{\max}\frac12\log|\mI+\sigma^{-2}k(A)|$. Then, with probability at least $1-\delta$, the best-sample simple regret in $T$ iterations of meta BO with GP-UCB that uses Eq.~\eqref{eq:ucbzeta-app} as its hyperparameter satisfies
$$r^{\text{GP-UCB}}_T   \leq  \eta^{\text{GP-UCB}} \sqrt{\frac{2c\rho_T}{T\log(1+c \sigma^{-2})} + \sigma^2}-  \frac{(2\log(\frac{3}{\delta}))^\frac12\sigma^2}{\sqrt{c+\sigma^2}},$$
where $\eta^{\text{GP-UCB}} = (\frac{\iota_{T-1} +(2\log(\frac{3}{\delta}))^\frac12}{\sqrt{1-2\sqrt{b_{T-1}}}} \sqrt{1+ 2\sqrt{b_{T-1}} + 2b_{T-1}} + \iota_{T-1} + (2\log(\frac{3}{\delta}))^\frac12).$ 

With probability at least $1-\delta$, the best-sample simple regret in T iterations of meta BO with PI that uses $\hat f^*\geq \max_{x\in\fx}f(x)$ as its target value satisfies 
 \begin{align*}
 r^{\text{PI}}_T  & < \eta^{\text{PI}} \sqrt{\frac{2c\rho_T}{T\log(1+c \sigma^{-2})} + \sigma^2}-  \frac{(2\log(\frac{3}{2\delta}))^\frac12\sigma^2}{2\sqrt{c+\sigma^2}},
 \end{align*}
 where $\eta^{\text{PI}} = (\frac{\hat f^* - \mu_{\tau-1}(x_*)}{\sqrt{k_{\tau-1}(x_*) + \sigma^2}} + \iota_{\tau-1})\sqrt{\frac{1+2b_{\tau-1}^{\frac12} + 2b_{\tau-1}}{1-2 b_{\tau-1}^{\frac12}} } + \iota_{\tau-1} + (2\log(\frac{3}{2\delta}))^\frac12$, $\tau = \argmin_{t\in[T]} k_{t-1}(x_t)$.
\end{thm}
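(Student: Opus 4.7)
The plan is to combine the pointwise concentration bounds of Lemma~\ref{lem:mu} for the estimators $\hat\mu_{t-1},\hat k_{t-1}$ with the Gaussian tail bound of Corollary~\ref{lem:gauss} for $f(x)$ around $\mu_{t-1}(x)$, derive a per-step bound of the form $f(x^*)-f(x_t)\le\eta\sqrt{k_{t-1}(x_t)+\sigma^2}$, and then convert $\min_t k_{t-1}(x_t)$ into a $\rho_T$-based expression using Lemma~\ref{lem:rho}, Corollary~\ref{cor:math}, and Lemma~\ref{lem:math}. I will first fix a good event via a union bound over $t\in[T]$ combining: (i) the two-sided $\hat\mu_{t-1}$ bound at $\{x^*,x_t\}$ from Lemma~\ref{lem:mu} at level $\delta_1=\delta/3$; (ii) one-sided bounds on $\hat k_{t-1}(x^*)$ and $\hat k_{t-1}(x_t)$ at level $\delta_2=\delta/6$; (iii) Gaussian tail bounds on $f(x^*)$ and $f(x_t)$ at level $\delta_0=\delta/6$. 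These choices collapse exactly into the stated $\iota_{t-1}$, $b_{t-1}$, and the $(2\log(3/\delta))^{1/2}$ appearing in $\zeta_t$; the factor $\delta$ in the denominator of $\iota_{t-1}$ absorbs the Chebyshev-style weakness inherited from Lemma~\ref{lem:mu}.

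\textbf{Per-step bounds.} For GP-UCB, the stated $\zeta_t$ is reverse-engineered so that on the good event $\alpha_{t-1}^{\text{UCB}}(x^*)\ge f(x^*)$: chaining $f(x^*)\le\mu_{t-1}(x^*)+\sqrt{2\log(3/\delta)}\,k_{t-1}(x^*)^{1/2}$, the $\hat\mu_{t-1}$ lower bound at $x^*$, and $\hat k_{t-1}(x^*)\ge(1-2\sqrt{b_{t-1}})(k_{t-1}(x^*)+\sigma^2)$, the smallest admissible $\zeta_t$ is exactly the one given. The greedy choice yields $\alpha_{t-1}^{\text{UCB}}(x_t)\ge\alpha_{t-1}^{\text{UCB}}(x^*)\ge f(x^*)$, and expanding $\alpha_{t-1}^{\text{UCB}}(x_t)$ with the complementary upper-side concentration at $x_t$ produces $f(x^*)-f(x_t)\le\eta^{\text{UCB}}\sqrt{k_{t-1}(x_t)+\sigma^2}$. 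For PI with target $\hat f^*\ge\max f$, specialize to $\tau=\argmin_{t\in[T]}k_{t-1}(x_t)$. The PI selection rule rearranges to $\hat f^*-\hat\mu_{\tau-1}(x_\tau)\le(\hat f^*-\hat\mu_{\tau-1}(x^*))\bigl(\hat k_{\tau-1}(x_\tau)/\hat k_{\tau-1}(x^*)\bigr)^{1/2}$, using that the right-hand numerator is nonnegative on the good event. Substituting the Lemma~\ref{lem:mu} ratio bounds and combining with $f(x^*)\le\hat f^*$ together with $f(x_\tau)\ge\mu_{\tau-1}(x_\tau)-\sqrt{2\log(3/(2\delta))}\,k_{\tau-1}(x_\tau)^{1/2}$ gives $f(x^*)-f(x_\tau)\le\eta^{\text{PI}}\sqrt{k_{\tau-1}(x_\tau)+\sigma^2}$ with the stated $\eta^{\text{PI}}$.

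\textbf{Information-gain step and main obstacle.} In both cases $r_T\le\eta\sqrt{\min_t k_{t-1}(x_t)+\sigma^2}\le\eta\sqrt{\tfrac{1}{T}\sum_t k_{t-1}(x_t)+\sigma^2}$. Corollary~\ref{cor:math} applied with $c\ge\max_x k(x)$ and $a=c\sigma^{-2}$ gives $k_{t-1}(x_t)\le\frac{c}{\log(1+c\sigma^{-2})}\log(1+\sigma^{-2}k_{t-1}(x_t))$; summing and invoking Lemma~\ref{lem:rho} then yields $\sum_t k_{t-1}(x_t)\le\frac{2c\rho_T}{\log(1+c\sigma^{-2})}$, and Lemma~\ref{lem:math} converts $\sqrt{x+\sigma^2}$ into $\sqrt{x}$ plus the subtractive $\sigma^2/(2\sqrt{c+\sigma^2})$ correction that produces the final displayed forms. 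The main obstacle I expect is not any single inequality but the accounting: ensuring that the one-sided concentration events at $x^*$ and at $x_t$ are both in the correct direction on a single good event, verifying that the algebra collapses exactly into the stated $\zeta_t$, $\eta^{\text{UCB}}$ and $\eta^{\text{PI}}$, and handling the sign carefully in the PI rearrangement, since the PI acquisition numerator is nonpositive throughout.
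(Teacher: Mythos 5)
Your proposal follows the paper's proof essentially step for step: the same chain (Gaussian tail bound on $f$ around $\mu_{t-1}$, the Lemma~\ref{lem:mu1} concentration of $\hat\mu_{t-1}$ and $\hat k_{t-1}$, the reverse-engineered $\zeta_t$ so that the UCB/PI selection rule transfers the bound from $x_*$ to the evaluated point), the same PI rearrangement via the ratio $(\hat k_{\tau-1}(x_\tau)/\hat k_{\tau-1}(x_*))^{1/2}$, and the same closing sequence of Corollary~\ref{cor:math}, Lemma~\ref{lem:rho} and Lemma~\ref{lem:math} to convert $\min_t k_{t-1}(x_t)$ into the $\rho_T/T$ expression with the subtractive $\sigma^2/(2\sqrt{c+\sigma^2})$ correction. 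The mechanics are all correct.

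The one place where your accounting diverges from the paper, and where your write-up is internally inconsistent, is the good event. You propose a union bound over all $t\in[T]$ with per-event levels $\delta/3$, $\delta/6$, $\delta/6$, and simultaneously claim that these choices ``collapse exactly'' into the stated $\iota_{t-1}$, $b_{t-1}$ and $(2\log(3/\delta))^{1/2}$. Both cannot hold: a uniform-over-$t$ event at those levels has failure probability of order $T\delta$, so recovering a $1-\delta$ guarantee would force you to divide each level by $T$ and every constant would pick up $\log T$ (and, for the Chebyshev-type $\iota_t$, a factor of $T$) that is absent from the theorem. The paper avoids this by never asking for a per-step bound to hold for all $t$: it fixes $\tau=\argmin_{t}k_{t-1}(x_t)$ at the outset, bounds $r_T\le f^*-f(x_\tau)$ directly, and instantiates the concentration events only at round $\tau$ and at the two points $x_*,x_\tau$, so the levels $\delta/3+\delta/3+\delta/6+\delta/6$ sum to exactly $\delta$ with no $T$-dependence. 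If you follow that route your constants match the statement; if you insist on the uniform good event they do not. (The paper's route quietly applies fixed-$t$, fixed-input concentration at a data-dependent index and point, which is its own subtlety, but that is the paper's choice, and matching the stated constants requires it.)
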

\begin{proof}
We first show the regret bound for GP-UCB with our estimators of prior and posterior. All of the probabilities mentioned in the proofs need to be interpreted in a frequentist manner. 
Let $\tau = \argmin_{t\in[T]} k_{t-1}(x_t)$. By Corollary~\ref{lem:gauss}, with probability at least $1-\frac{\delta}{3}$, 
\begin{align*}
r^{\text{GP-UCB}}_T  &= f^*  - \max_{t\in[T]} f(x_t)\\
& \leq f^*  - f(x_\tau) \\
&\leq f^*  - \mu_{\tau-1}(x_\tau) + \mu_{\tau-1}(x_\tau)  - f(x_\tau) \\
&\leq \mu_{\tau-1}(x_*) + \zeta'\sqrt{k_{\tau-1}(x_*)}  - \mu_{\tau-1}(x_\tau) +\zeta'\sqrt{k_{\tau-1}(x_\tau)},
\end{align*}
where $\zeta' = (2\log(\frac{3}{\delta}))^\frac12$.
By Lemma~\ref{lem:mu1}, with probability at least $1-\frac{\delta}{3}$, 
\begin{align*}
\mu_{\tau-1}(x_*)  - \mu_{\tau-1}(x_\tau) < \hat \mu_{\tau-1}(x_*) - \hat \mu_{\tau-1}(x_\tau) + \iota_{\tau-1}\sqrt{k_{\tau-1}(x_*) +\sigma^2} +\iota_{\tau-1}\sqrt{k_{\tau-1}(x_\tau) + \sigma^2},
\end{align*}
where $\iota_t = \sqrt{\frac{6\left(N-2 + t+2\sqrt{t\log{\frac{6}{\delta}}} + 2\log{\frac{6}{\delta}} \right) }{\delta N(N-t-2)}} \leq \iota_{T-1}$.

Lemma~\ref{lem:mu1} and  Lemma~\ref{lem:math} also show that with probability at least $1-\frac{\delta}{6}$, we have 
\begin{align*}
\sqrt{k_{\tau-1}(x_*)}  \leq \sqrt{k_{\tau-1}(x_*) + \sigma^2}  - \frac{\sigma^2}{2\sqrt{c+\sigma^2}} < \sqrt{ \frac{\hat k_{\tau-1}(x_*)}{1-2\sqrt{b_{\tau-1}}}} - \frac{\sigma^2}{2\sqrt{c+\sigma^2}} 
\end{align*}
where $b_{t} = \frac{1}{N-t-1}\log\frac{6}{\delta} \leq b_{T-1} \in (0,\frac14)$. 
Notice that because of the input selection strategy of GP-UCB with $\zeta_t = \frac{\iota_{t-1} +\zeta'}{\sqrt{1-2\sqrt{b_{t-1}}}}$, the following inequality holds with probability at least $1-\frac{\delta}{6}$,
\begin{align*}
\hat \mu_{\tau-1}(x_*)  + (\iota_{t-1} + \zeta')\sqrt{k_{\tau-1}(x_*) +\sigma^2}  & \leq \hat \mu_{\tau-1}(x_*) + \zeta_t \sqrt{ \hat k_{\tau-1}(x_*)} \\
&\leq\hat \mu_{\tau-1}(x_\tau) + \zeta_t \sqrt{ \hat k_{\tau-1}(x_\tau)}. 
\end{align*}

Hence, with probability at least $1-\delta$,
\begin{align*}
r^{\text{GP-UCB}}_T  &\leq  \mu_{\tau-1}(x_*) + \zeta'\sqrt{k_{\tau-1}(x_*) + \sigma^2}  - \mu_{\tau-1}(x_\tau) +\zeta'\sqrt{k_{\tau-1}(x_\tau) + \sigma^2} -  \frac{\zeta'\sigma^2}{\sqrt{c+\sigma^2}} \\
& < \hat \mu_{\tau-1}(x_*) - \hat \mu_{\tau-1}(x_\tau) + (\iota_{t-1}+\zeta')(\sqrt{k_{\tau-1}(x_*) +\sigma^2} +\sqrt{k_{\tau-1}(x_\tau) + \sigma^2} )-  \frac{\zeta'\sigma^2}{\sqrt{c+\sigma^2}}\\
&\leq \zeta_t \sqrt{ \hat k_{\tau-1}(x_\tau)} +  (\iota_{t-1}+\zeta')\sqrt{k_{\tau-1}(x_\tau) + \sigma^2} -  \frac{\zeta'\sigma^2}{\sqrt{c+\sigma^2}} \\
& <  (\zeta_t\sqrt{1+ 2\sqrt{b_{t-1}} + 2b_{t-1}} + \iota_{t-1} + \zeta')\sqrt{k_{\tau-1}(x_\tau) + \sigma^2} -  \frac{\zeta'\sigma^2}{\sqrt{c+\sigma^2}}\\
& < \eta^{\text{GP-UCB}} \sqrt{k_{\tau-1}(x_\tau) + \sigma^2}-  \frac{\zeta'\sigma^2}{\sqrt{c+\sigma^2}},
\end{align*}
where $\eta^{\text{GP-UCB}}= (\frac{\iota_{T-1} +\zeta'}{\sqrt{1-2\sqrt{b_{T-1}}}} \sqrt{1+ 2\sqrt{b_{T-1}} + 2b_{T-1}} + \iota_{T-1} + \zeta').$ 
By Corollary~\ref{cor:math} and the fact that $\tau = \argmin_{t\in[T]} k_{t-1}(x_t)$, we have
\begin{align*}
k_{\tau-1}(x_\tau) & \leq\frac{1}{T}\sum_{t=1}^T k_{t-1}(x_t) \\
& \leq \frac{1}{T}\sum_{t=1}^T  \frac{c\log(1+\frac{c \sigma^{-2}k_{t-1}(x_t)}{c})}{\log(1+c \sigma^{-2})} \\
& =  \frac{c}{T\log(1+c \sigma^{-2})}\sum_{t=1}^T \log(1+\sigma^{-2}k_{t-1}(x_t)).
\end{align*}
Notice that here Corollary~\ref{cor:math} applies because $0\leq k_{\tau-1}(x_\tau)\leq c$.

By Lemma~\ref{lem:rho}, $I(f(\vx_T);\vy_T) = \frac12 \sum_{t=1}^T\log(1+\sigma^{-2} k_{t-1}(x_t)) \leq \rho_T$, so 
$$k_{\tau-1}(x_\tau) \leq  \frac{2c\rho_T}{T\log(1+c \sigma^{-2})},$$
which implies 
\begin{align*}
r^{\text{GP-UCB}}_T  &  <  \eta \sqrt{\frac{2c\rho_T}{T\log(1+c \sigma^{-2})} + \sigma^2}-  \frac{\zeta'\sigma^2}{\sqrt{c+\sigma^2}}.
\end{align*}

Next, we show the proof for a special case of PI with $\hat f^*$, an upper bound on $f$, as its target value. Again, by Corollary~\ref{lem:gauss}, with probability at least $1-\frac{\delta}{3}$, 
\begin{align*}
r^{\text{PI}}_T  &= \hat f^*  - \max_{t\in[T]} f(x_t)\\
& \leq \hat f^*  - f(x_\tau) \\
&\leq \hat f^*  - \mu_{\tau-1}(x_\tau) + \mu_{\tau-1}(x_\tau)  - f(x_\tau) \\
&\leq  \hat f^*  - \mu_{\tau-1}(x_\tau)  +\zeta'\sqrt{k_{\tau-1}(x_\tau)},
\end{align*}
where $\zeta' = (2\log(\frac{3}{2\delta}))^\frac12$ and $\tau = \argmin_{t\in[T]} k_{t-1}(x_t)$. By Lemma~\ref{lem:mu1} and the selection strategy of PI, with probability at least $1-\frac{2\delta}{3}$,
 \begin{align*}
  \hat f^*  - \mu_{\tau-1}(x_\tau) & < \hat f^* - \hat \mu_{\tau-1}(x_{\tau}) + \iota_{\tau-1}\sqrt{k_{\tau-1}(x_\tau) + \sigma^2} \\
  & \leq \frac{\hat f^* - \hat\mu_{\tau-1}(x_*)}{\sqrt{\hat k_{\tau-1}(x_*)}} \sqrt{\hat k_{\tau-1}(x_\tau)} + \iota_{\tau-1}\sqrt{k_{\tau-1}(x_\tau) + \sigma^2} \\
  & \leq \frac{\hat f^* - \mu_{\tau-1}(x_*) + \iota_{\tau-1}\sqrt{k_{\tau-1}(x_*) + \sigma^2}}{\sqrt{\hat k_{\tau-1}(x_*)}} \sqrt{\hat k_{\tau-1}(x_\tau)} + \iota_{\tau-1}\sqrt{k_{\tau-1}(x_\tau) + \sigma^2} \\
  & <\left( (\frac{\hat f^* - \mu_{\tau-1}(x_*)}{\sqrt{k_{\tau-1}(x_*) + \sigma^2}} + \iota_{\tau-1})\sqrt{\frac{1+2b_{\tau-1}^{\frac12} + 2b_{\tau-1}}{1-2 b_{\tau-1}^{\frac12}} } + \iota_{\tau-1} \right)\sqrt{k_{\tau-1}(x_\tau) + \sigma^2}.
 \end{align*}
 Hence, with probability at least $1 - \delta$, the best-sample simple regret of PI satisfy
 \begin{align*}
 r^{\text{PI}}_T  & < \eta^{\text{PI}} \sqrt{\frac{2c\rho_T}{T\log(1+c \sigma^{-2})} + \sigma^2}-  \frac{\zeta'\sigma^2}{2\sqrt{c+\sigma^2}},
 \end{align*}
 where $\eta^{\text{PI}} = (\frac{\hat f^* - \mu_{\tau-1}(x_*)}{\sqrt{k_{\tau-1}(x_*) + \sigma^2}} + \iota_{\tau-1})\sqrt{\frac{1+2b_{\tau-1}^{\frac12} + 2b_{\tau-1}}{1-2 b_{\tau-1}^{\frac12}} } + \iota_{\tau-1} + \zeta'$.
\end{proof}

\begin{thm}[Theorem 2 in th paper] \label{app_thm:regret1}
Assume there exist constant $c \geq \max_{x\in\fx}k(x)$ and a training dataset is available whose size is $N\geq 4\log\frac{6}{\delta} + T + 2$. Then, with probability at least $1-\delta$, the best-sample simple regret in $T$ iterations of meta BO with special cases of either GP-UCB or PI satisfies
\begin{align*}
& r^{\text{UCB}}_T  < \eta^{\text{UCB}}_T(N)\lambda_T, \;\;r^{\text{PI}}_T  < \eta^{\text{PI}}_T(N) \lambda_T,\;\; \lambda_T^2 = O(\rho_T /T) + \sigma^2,
\end{align*}
where $\eta_T^{UCB} (N)= (m+ C_1)(\frac{\sqrt{1+m}}{\sqrt{1-m}} +1)$, $\eta_T^{\text{PI}}(N) =  (m+ C_2)(\frac{\sqrt{1+m}}{\sqrt{1-m}}+1) + C_3$, $m = O(\sqrt{\frac{1}{N-T}})$, $C_1, C_2, C_3> 0$ are constants, and $\rho_T = \underset{A\in\fx, |A|=T}{\max}\frac12\log|\mI+\sigma^{-2}k(A)|$.
\end{thm}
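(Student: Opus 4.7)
The plan is to obtain Theorem~\ref{app_thm:regret1} as a direct repackaging of the explicit bound in Theorem~\ref{app_thm:regret} (already proved above), collapsing the quantities $\iota_{T-1}$, $b_{T-1}$, and $\zeta' = (2\log(3/\delta))^{1/2}$ into a single big-$O$ parameter $m$ and three $N$-independent constants $C_1, C_2, C_3$. There is no new probabilistic content to establish; the $1-\delta$ concentration event is exactly the one already assembled from Lemma~\ref{lem:mu} inside Theorem~\ref{app_thm:regret}. What remains is algebraic simplification of the coefficients and of $\lambda_T^2$.

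First I would verify that the sample-size assumption $N \ge 4\log(6/\delta) + T + 2$ implies $b_{T-1} = (N-T)^{-1}\log(6/\delta) \le 1/4$, so every denominator in the detailed bound is positive. An asymptotic inspection of the definition $\iota_{T-1} = \sqrt{6(N-3+T+2\sqrt{T\log(6/\delta)}+2\log(6/\delta))/(\delta N(N-T))}$ then yields $\iota_{T-1} = O(\sqrt{1/(N-T)})$, since the numerator inside the square root is $O(N)$ while the denominator is $\Theta(N(N-T))$; likewise $\sqrt{b_{T-1}} = O(\sqrt{1/(N-T)})$. Setting $C_1 := \zeta'$ and choosing $m$ to be a sufficiently large constant multiple of $\sqrt{b_{T-1}}$ ensures simultaneously $\iota_{T-1} \le m$ and $(1+2\sqrt{b_{T-1}}+2b_{T-1})/(1-2\sqrt{b_{T-1}}) \le (1+m)/(1-m)$ (the latter rearranges to $m \ge (2\sqrt{b_{T-1}} + b_{T-1})/(1 + b_{T-1})$, which is of the same order). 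Under these choices, the coefficient $\eta^{\text{GP-UCB}} = (\iota_{T-1}+\zeta')\bigl(\sqrt{(1+2\sqrt{b_{T-1}}+2b_{T-1})/(1-2\sqrt{b_{T-1}})} + 1\bigr)$ from Theorem~\ref{app_thm:regret} is upper bounded by $(m+C_1)(\sqrt{(1+m)/(1-m)}+1) = \eta^{\text{UCB}}_T(N)$.

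The PI case is analogous: I would take $C_2 := (2\log(3/(2\delta)))^{1/2}$ and identify $C_3$ with the problem-dependent, $N$-independent quantity $(\hat f^* - \mu_{\tau-1}(x_*))/\sqrt{k_{\tau-1}(x_*)+\sigma^2}$, reorganizing $\eta^{\text{PI}}$ into the additive form $(m+C_2)(\sqrt{(1+m)/(1-m)}+1) + C_3$. Finally, $\lambda_T^2 := 2c\rho_T/(T\log(1+c\sigma^{-2})) + \sigma^2 = O(\rho_T/T) + \sigma^2$ because $c$ and $\sigma$ are fixed constants, and the nonnegative subtracted term $\zeta'\sigma^2/\sqrt{c+\sigma^2}$ in Theorem~\ref{app_thm:regret} only tightens the inequality and may be dropped. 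The only mildly delicate step is the choice of $m$: the naive $m = 2\sqrt{b_{T-1}}$ fails because the extra $2b_{T-1}$ in the numerator pushes the ratio strictly above $(1+m)/(1-m)$, so one must inflate $m$ by a constant factor, which does not disturb its $O(\sqrt{1/(N-T)})$ rate.
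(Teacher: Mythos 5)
Your proposal is correct and follows exactly the paper's route: the paper's own proof of this theorem is the single sentence that it is ``a condensed version of Thm.~\ref{app_thm:regret} with big O notations,'' and you supply precisely the constant-tracking algebra that sentence leaves implicit (including the genuinely needed inflation of $m$ beyond $2\sqrt{b_{T-1}}$ to absorb the $2b_{T-1}$ term, which the paper never spells out). No gaps; your write-up is in fact more complete than the paper's.
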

\begin{proof}
This theorem is a condensed version of Thm.~\ref{app_thm:regret} with big O notations.
\end{proof}

\section{Proofs for Section 4.2}
Recall that we assume $\fx$ is a compact set which is a subset of $\R^d$. We only considers a special case of GPs that assumes $f(x)=\Phi(x)\T W$, $W\sim \mathcal N(\vu, \Sigma)$ and the basis functions $\Phi(x)\in\R^K$ are given. The mean function and kernel are defined as $$\mu(x) =\Phi(x)\T \vu \;\;\text{and }\;\; k(x) =\Phi(x)\T\Sigma \Phi(x).$$ Given noisy observations $D_t = \{(x_\tau, y_\tau)\}_{\tau=1}^t, t\leq K$, we have $$\mu_t(x) = \Phi(x)\T \vu_t\;\; \text{and}\;\;k_t(x,x') = \Phi(x)\T\Sigma_t\Phi(x'),$$ where the posterior of $W\sim \mathcal N(\vu_t,\Sigma_t)$ satisfies
\begin{align*}
\vu_t &= \vu + \Sigma\Phi(\vx_t)(\Phi(\vx_t)\T \Sigma\Phi(\vx_t) + \sigma^2 \mI)^{-1}(\vy_t - \Phi(\vx_t)\T \vu),\\ 
\Sigma_t &= \Sigma - \Sigma\Phi(\vx_t) (\Phi(\vx_t)\T \Sigma \Phi(\vx_t) + \sigma^2 \mI)^{-1} \Phi(\vx_t)\T \Sigma.
\end{align*}

Our estimators for $\vu_t$ and $\Sigma_t$ are
\begin{align*}
\hat \vu_t &= \hat \vu + \hat \Sigma \Phi(\vx_t)(\Phi(\vx_t)\T \hat \Sigma \Phi(\vx_t))^{-1}(\vy_t- \Phi(\vx_t)\T \vu), \\
\hat\Sigma_t &= \frac{N-1}{N-t-1}\left(\hat \Sigma - \hat\Sigma\Phi(\vx_t) (\Phi(\vx_t)\T \hat\Sigma \Phi(\vx_t))^{-1} \Phi(\vx_t)\T \hat\Sigma\right).
\end{align*}
We can compute the approximated conditional mean and variance of the observation on $x\in\fx$ to be $$\hat \mu_t(x) = \Phi(x)\T\hat \vu_t \;\;\text{and}\;\; \hat k_t(x) = \Phi(x)\T\hat \Sigma_t \Phi(x).$$

Again, we prove a bound on the best-sample simple regret $r_T = \max_{x\in \mathfrak X} f(x) - \max_{t\in[T]}f(x_{t})$. The evaluated inputs $\vx_t = [x_\tau]_{\tau}^t$ are selected either by a special case of GP-UCB using the acquisition function
\begin{align*}
&\alpha^{\text{GP-UCB}}_{t-1}(x) = \hat\mu_{t-1}(x) + \zeta_{t}\hat k_{t-1}(x)^{\frac12}, \;\;\text{with} \\
&\zeta_t =  \frac{\left(6(N-3 + t+2\sqrt{t\log{\frac{6}{\delta}}} + 2\log{\frac{6}{\delta}} ) /(\delta N(N-t-1))\right)^{\frac12} +(2\log(\frac{3}{\delta}))^\frac12}{(1-2 (\frac{1}{N-t}\log\frac{6}{\delta})^{\frac12})^{\frac12}},  \delta\in(0,1),
\end{align*}
or by a special case of PI using the acquisition function
$$\alpha_{t-1}^{\text{PI}}(x) = \frac{\hat\mu_{t-1}(x) - \hat f^*}{\hat k_{t-1}(x)^{\frac12}}.$$
This special case of PI assumes additional information of the upper bound on function value $\hat f^*\geq \max_{x\in\mathfrak X}f(x)$.

For convenience of the notations, we define $\bar\sigma^2(x) = \sigma^2\Phi(x)\T(\Phi(\bar\vx)\Phi(\bar\vx)\T)^{-1}\Phi(x)$.

Corollary~\ref{lem:estimate2} combines Lemma~\ref{lem:estimate0} and basic properties of the Wishart distribution~\cite{Eaton07}.

\begin{cor} \label{lem:estimate2}Assume the matrix $\Phi(\bar \vx)\in \R^{K\times M}$ has linearly independent rows. Then, $\hat \vu$ and $\hat \Sigma$ are independent and 
$$\hat \vu \sim \mathcal N\left(\vu, \frac{1}{N}(\Sigma +  \sigma^2(\Phi(\bar\vx)\Phi(\bar\vx)\T)^{-1})\right), 
\hat{\Sigma} \sim \mathcal W\left(\frac{1}{N-1}\left(\Sigma + \sigma^2(\Phi(\bar\vx)\Phi(\bar\vx)\T)^{-1}\right), N-1\right).$$
For finite set of inputs $\vx \subset \fx$, $\hat \mu(\vx)$ and $\hat k(\vx)$ are also independent; they satisfy
$$\hat \mu(\vx) \sim \mathcal N\left(\mu, \frac{1}{N}(k(\vx) +  \bar\sigma^2(\vx))\right), 
\hat k(\vx)\sim \mathcal W\left(\frac{1}{N-1}\left(k(\vx) + \bar\sigma^2(\vx)\right), N-1\right).$$
\end{cor}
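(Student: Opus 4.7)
The plan is to derive Corollary~\ref{lem:estimate2} as a direct consequence of Lemma~\ref{lem:estimate0} applied to the unbiased weight estimators $\hat W_i$, and then to propagate the resulting distributions through the linear operations that define $\hat\mu$ and $\hat k$. First I would note that the full row rank hypothesis on $\Phi(\bar\vx)$ makes $\Phi(\bar\vx)\Phi(\bar\vx)\T$ invertible, so the identity $\hat W_i=(\Phi(\bar\vx)\Phi(\bar\vx)\T)^{-1}\Phi(\bar\vx)\bar\vy_i$ is well defined. Because the $f_i$ are drawn iid from $GP(\mu,k)$ and the observation noise is independent across both $i$ and $j$, the vectors $\{\hat W_i\}_{i=1}^N$ are iid Gaussian, and the derivation already given in Section~\ref{sec:continuous} shows each one has mean $\vu$ and covariance $\Sigma+\sigma^2(\Phi(\bar\vx)\Phi(\bar\vx)\T)^{-1}$.

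Applying Lemma~\ref{lem:estimate0} to this collection with $n=N$, $u=\vu$, and $V=\Sigma+\sigma^2(\Phi(\bar\vx)\Phi(\bar\vx)\T)^{-1}$ immediately yields the independence of $\hat\vu$ and $\hat\Sigma$ together with the stated Gaussian and Wishart distributions. This gives the first half of the corollary with no additional work.

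For the second half, I would invoke two standard facts. First, an affine map of a Gaussian is Gaussian, so $\hat\mu(\vx)=\Phi(\vx)\T\hat\vu$ is normal with mean $\Phi(\vx)\T\vu=\mu(\vx)$ and covariance
\[
\tfrac{1}{N}\,\Phi(\vx)\T\!\bigl(\Sigma+\sigma^2(\Phi(\bar\vx)\Phi(\bar\vx)\T)^{-1}\bigr)\Phi(\vx)=\tfrac{1}{N}\bigl(k(\vx)+\bar\sigma^2(\vx)\bigr),
\]
using the defining identities $k(\vx)=\Phi(\vx)\T\Sigma\Phi(\vx)$ and $\bar\sigma^2(\vx)=\sigma^2\Phi(\vx)\T(\Phi(\bar\vx)\Phi(\bar\vx)\T)^{-1}\Phi(\vx)$ extended by the paper's convention to matrix inputs. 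Second, for any Wishart matrix $S\sim\mathcal W(V,n)$ and a conformable matrix $A$, the congruence $A\T S A\sim\mathcal W(A\T V A,n)$; applied with $A=\Phi(\vx)$ this gives $\hat k(\vx)\sim\mathcal W\!\bigl(\tfrac{1}{N-1}(k(\vx)+\bar\sigma^2(\vx)),N-1\bigr)$. The scaling constants $\tfrac{1}{N}$ and $\tfrac{1}{N-1}$ pass through untouched because both are scalars. Independence of $\hat\mu(\vx)$ and $\hat k(\vx)$ then follows at once, since they are deterministic functions of the independent pair $(\hat\vu,\hat\Sigma)$.

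The only subtlety I anticipate is ensuring the Wishart congruence step is applied cleanly: when $|\vx|>K$ or $\Phi(\vx)$ is rank deficient, the transformed distribution is a singular Wishart, and the factor $A\T V A$ is only positive semidefinite rather than definite. This is not a real obstacle to the statement of the corollary, but I would note it so that the citation of the Wishart transformation property (as in~\cite{Eaton07}) is unambiguous. Everything else reduces to bookkeeping of scale factors that is entirely routine given Lemma~\ref{lem:estimate0}.
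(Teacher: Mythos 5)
Your proposal is correct and follows essentially the same route as the paper, which simply states that the corollary ``combines Lemma~\ref{lem:estimate0} and basic properties of the Wishart distribution''; you supply exactly those two ingredients (Lemma~\ref{lem:estimate0} applied to the iid Gaussian vectors $\hat W_i$, then the affine/congruence transformation rules for Gaussian and Wishart distributions). Your added caveat about the singular Wishart when $\Phi(\vx)$ is rank deficient is a fair observation, but it does not change the argument.
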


The proofs of Lemma 3 and Theorem 4 in the paper directly follow Corollary~\ref{lem:estimate2} and proofs of Lemma~\ref{lem:mu1}, Theorem~\ref{app_thm:regret} in this appendix.

\section{Proofs for Section 4.3}
We show that the simple regret with $\hat x^*_T = x_{\tau}, \tau =\argmax_{t\in [T]} y_t$ is very close to the best-sample simple regret. 
\begin{lem}
With probability at least $1-\delta$, $R_T - r_T \leq 2(2\log \frac{1}{\delta})^{\frac12} \sigma$.
\end{lem}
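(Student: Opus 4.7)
The idea is to turn the gap $R_T - r_T$ into a difference of two noise terms via the defining property of $\tau$, and then apply the Gaussian tail bound from Corollary~\ref{lem:gauss}. Write $\epsilon_t = y_t - f(x_t) \sim \mathcal{N}(0,\sigma^2)$ and let $t^* = \argmax_{t\in[T]} f(x_t)$, so that $\max_{t\in[T]} f(x_t) = f(x_{t^*})$. Then
\[
R_T - r_T \;=\; \bigl(\max_{x\in\fx} f(x) - f(x_\tau)\bigr) - \bigl(\max_{x\in\fx} f(x) - f(x_{t^*})\bigr) \;=\; f(x_{t^*}) - f(x_\tau).
\]

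\textbf{Key step.} Because $\tau = \argmax_{t\in[T]} y_t$, we have $y_\tau \geq y_{t^*}$, i.e.\ $f(x_\tau) + \epsilon_\tau \geq f(x_{t^*}) + \epsilon_{t^*}$. Rearranging gives
\[
R_T - r_T \;\leq\; \epsilon_\tau - \epsilon_{t^*} \;\leq\; |\epsilon_\tau| + |\epsilon_{t^*}|.
\]
It then suffices to bound each of $|\epsilon_\tau|$ and $|\epsilon_{t^*}|$ by $(2\log\tfrac{1}{\delta})^{1/2}\sigma$ with probability at least $1-\delta/2$ and take a union bound. Each $\epsilon_t$ is zero-mean Gaussian with variance $\sigma^2$, so Corollary~\ref{lem:gauss} (applied twice, with $\delta_0 = \delta/2$ rescaled appropriately) yields $|\epsilon_t| \leq (2\log\tfrac{1}{\delta})^{1/2}\sigma$ for each specific index with the stated probability. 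Combining the two bounds gives the lemma.

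\textbf{Main obstacle.} The delicate point is that $\tau$ and $t^*$ are \emph{random} indices that depend on the noise realization, so strictly speaking $\epsilon_\tau$ and $\epsilon_{t^*}$ are not single Gaussians to which Corollary~\ref{lem:gauss} applies directly. The clean way to handle this is to bound $|\epsilon_\tau|+|\epsilon_{t^*}|$ by $2\max_{t\in[T]}|\epsilon_t|$ and then apply a union bound over $t\in[T]$; this would introduce an extra $\log T$ factor inside the square root. The statement as written treats $\epsilon_\tau$ and $\epsilon_{t^*}$ as two specific Gaussian samples (so only a two-event union bound is used), which is the interpretation I would follow to match the claimed rate; the deterministic-index version is the version implicitly invoked, and is the argument I would present.
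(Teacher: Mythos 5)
Your argument is essentially identical to the paper's: it sets $\tau' = \argmax_{t\in[T]} f(x_t)$, uses $y_\tau \ge y_{\tau'}$ to get $f(x_{\tau'}) - f(x_\tau) \le \epsilon_\tau - \epsilon_{\tau'}$, and bounds the two noise terms by $(2\log\frac{1}{\delta})^{\frac12}\sigma$ via Corollary~\ref{lem:gauss} (using one-sided tails, so only a two-event union bound is needed rather than your four absolute-value tails, which is why the constant works out). The random-index subtlety you flag is a genuine one, but the paper's own proof glosses over it in exactly the same way, so your proposal matches the published argument.
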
 
\begin{proof}
Let $\tau' = \argmax_{t\in [T]} f(x_t)$ and $\tau =\argmax_{t\in [T]} y_t$. Note that $y_\tau \geq y_{\tau'}$. By Corollary~\ref{lem:gauss}, with probability at least $1-\delta$, $f(x_{\tau}) + C \sigma \geq  y_\tau  \geq y_{\tau'} \geq f(x_{\tau'}) - C \sigma$, where $C = (2\log \frac{1}{\delta})^{\frac12}$.  
Hence $R_T - r_T = f(x_{\tau'}) - f(x_{\tau}) \leq 2C\sigma$.
\end{proof}
\section{Experiments}	
For \plain~and \tlsmbo~with UCB in our experiments, we used the same $\zeta_t$ as \pembo.

In the following, we include extra experiments that we performed with PI acquisition
function and matrix completion for the missing entry case in the discrete domains. The PI approach uses the maximum function value in the training dataset $\bar D_N$ as the target value. These results show that our approach is resilient to missing data. BO with the PI acquisition function performs similarly to UCB.%

\begin{figure}[htb]
\centering
\includegraphics[width=6cm]{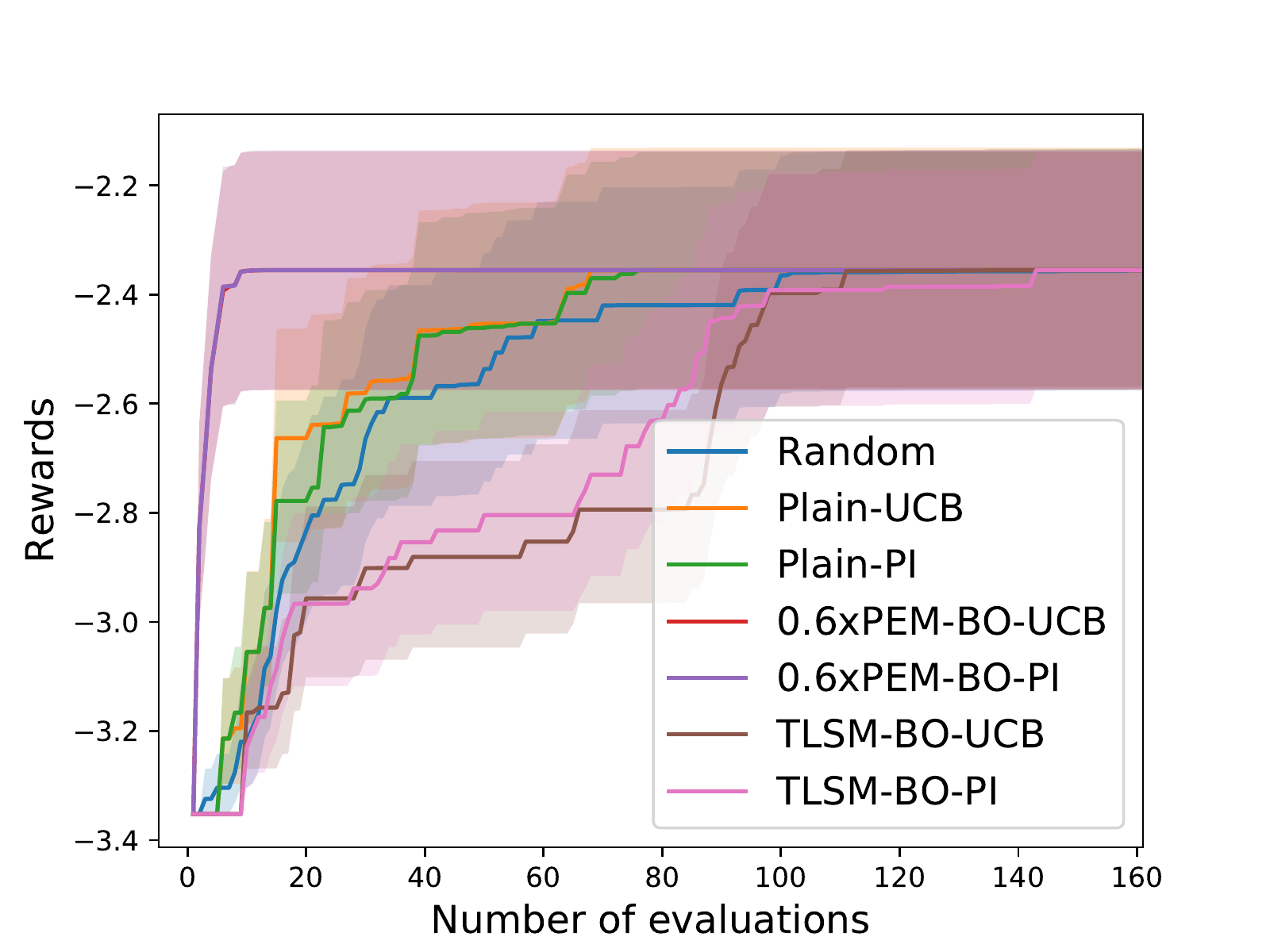}
\includegraphics[width=6cm]{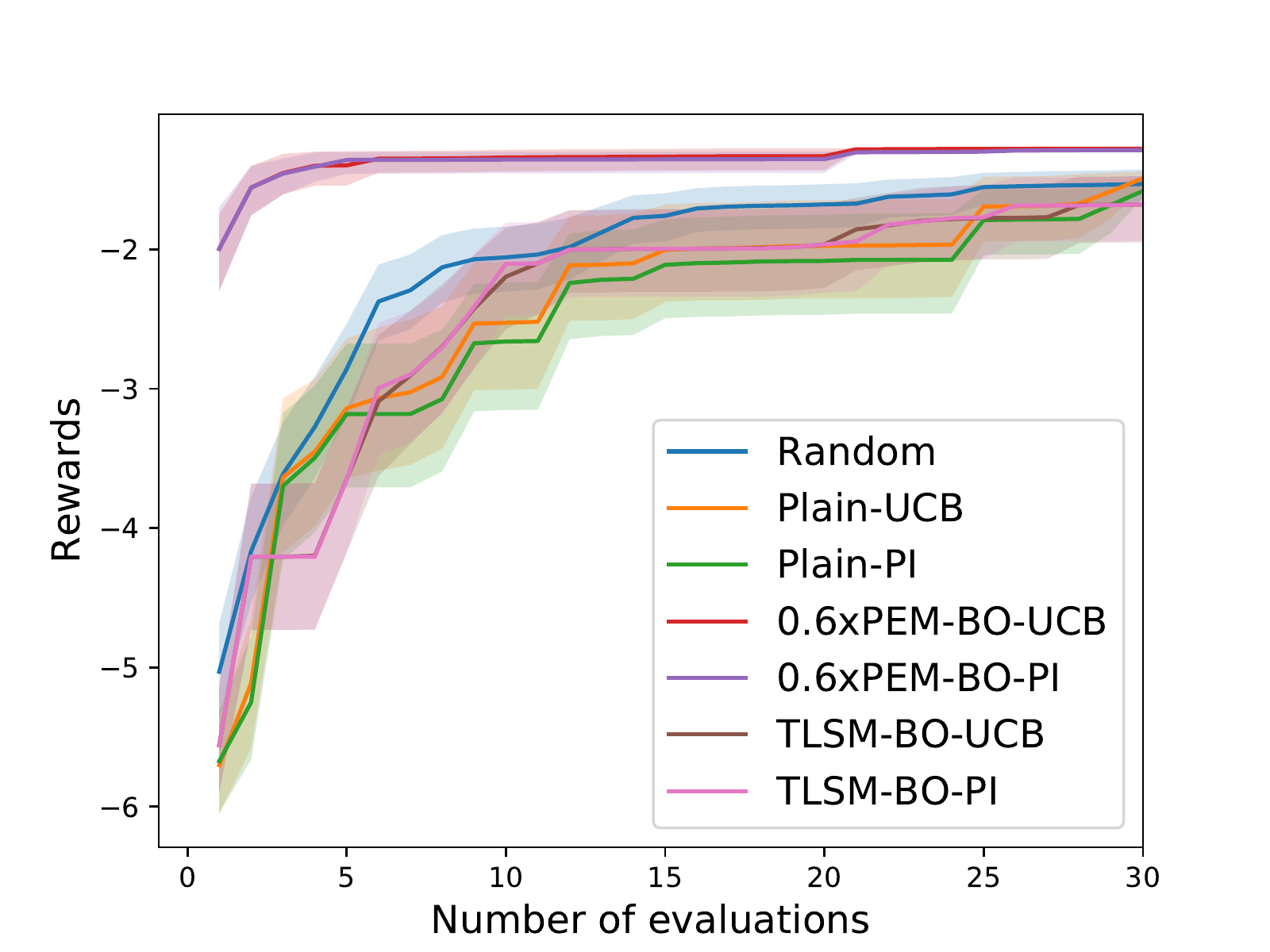}
\includegraphics[width=6cm]{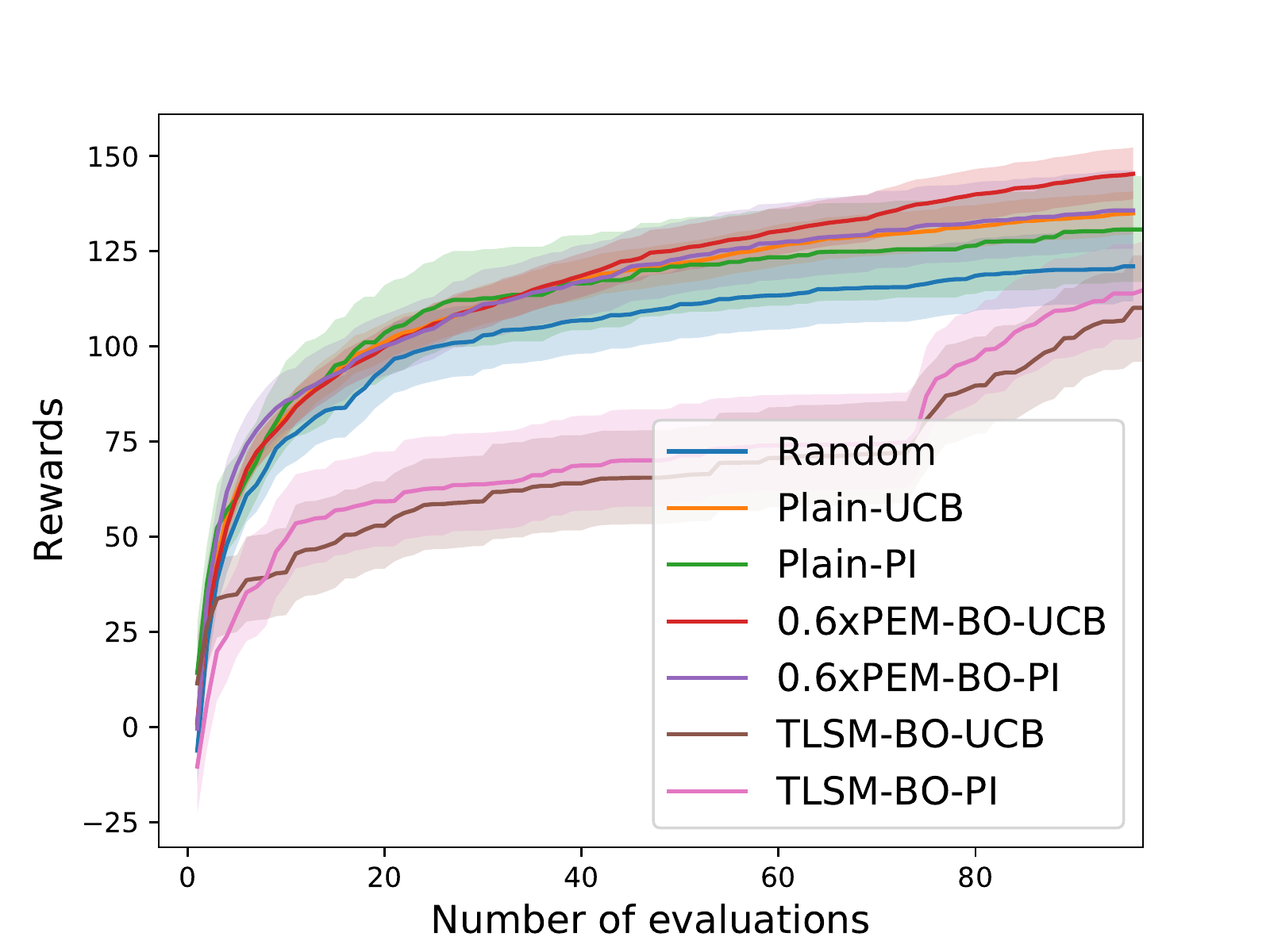}
\caption{Rewards vs. Number of evals for grasp optimization, grasp, base pose, and placement
optimization, and synthetic function optimization problems (from top-left to bottom).
0.6x\pembo~refers to the case where we have 60 percent of the dataset missing.}
\end{figure}

\end{document}